\documentclass[journal]{IEEEtran}
\usepackage[utf8]{inputenc}
\usepackage{comment}
\usepackage{graphicx}
\usepackage{xcolor}
\usepackage{wrapfig}
\usepackage{lipsum}
\definecolor{wpired}{RGB}{172, 43, 55}
\definecolor{rred}{RGB}{169, 50, 38}
\definecolor{ggreen}{RGB}{34, 153, 84}
\definecolor{bblue}{RGB}{36, 113, 163}
\definecolor{ppurple}{RGB}{125, 60, 152}
\definecolor{yyellow}{RGB}{214, 137, 16}
\definecolor{ggrey}{RGB}{112, 123, 124}

\usepackage{todonotes}

\usepackage{mathtools}
\usepackage{amsmath}
\usepackage{amssymb}
\usepackage{amsthm}
\theoremstyle{definition}
\usepackage{tikz}
\usetikzlibrary{automata,positioning,angles,quotes}
\usepackage{tikz-cd}
\usepackage[cal=cm, scr=dutchcal]{mathalfa}

\newtheoremstyle{main}
{0.5em} 
{0.5em} 
{\normalfont} 
{0pt} 
{\bfseries} 
{} 
{2pt} 
{\thmname{#1}\thmnumber{ #2. }\thmnote{\normalfont\itshape (#3)\normalfont:\\}}
\newtheoremstyle{math}
{0.5em} 
{0.5em} 
{\itshape} 
{0pt} 
{\bfseries} 
{} 
{2pt} 
{\thmname{#1}\thmnumber{ #2. }\thmnote{\normalfont\itshape (#3)\normalfont:\\}}
\theoremstyle{main}
\newtheorem{assumption}{Assumption} 
\newtheorem{definition}{Definition} 

\theoremstyle{math}
\newtheorem{theorem}{Theorem} 
\newtheorem{corollary}{Corollary}
\newtheorem{lemma}{Lemma} 
\newtheorem{proposition}{Proposition} 

\makeatletter
\let\NAT@parse\undefined
\makeatother
\usepackage{cite}
\usepackage[pdfa,colorlinks,bookmarks=true,bookmarksopen,bookmarksnumbered,allcolors=ggreen]{hyperref}

\newcommand{\appref}[1]{\hyperref[#1]{Appendix~\ref*{#1}}}

\usepackage[algoruled,vlined,linesnumbered]{algorithm2e}

\SetAlgoSkip{SkipBeforeAndAfter}

\SetCommentSty{mycommfont}
\SetKw{Continue}{continue}
\SetAlgorithmName{Algorithm}{Alg.}

\usepackage{booktabs}
\usepackage{multirow}
\usepackage{multicol}
\usepackage{makecell}

\usepackage[font=footnotesize]{caption}
\usepackage[font=footnotesize]{subcaption}
\usepackage[export]{adjustbox}

\usepackage[
activate   = {true},
protrusion = true,
expansion  = true,
kerning    = true,
spacing    = true,
tracking   = false,
auto       = true,
selected   = true,
factor     = 1000,
stretch    = 25,
shrink     = 25,
]{microtype}

\usepackage{xspace}

\newcommand{\method}{\text{KiTe}\xspace}

\newcommand{\aug}{z}
\newcommand{\Y}{{\ensuremath{\mathcal{Y}}}\xspace}
\newcommand{\heuristics}{\mathcal{H}}

\newcommand{\Rplus}{\mathbb{R}_+}

\newcommand{\cost}{\mathcal{C}}
\newcommand{\length}{\ell}
\newcommand{\lengthb}{\mathscr{w}}
\newcommand{\terminal}{\phi}
\newcommand{\terminalb}{\psi}
\newcommand{\Yb}{{\ensuremath{\mathcal{Y}_b}}\xspace}
\newcommand{\cov}{P}

\newcommand{\ball}{B}

\newcommand{\vol}[1]{\mathcal{V}\!\left[#1\right]}

\newcommand{\Exp}{\mathrm{Exp}}
\newcommand{\Log}{\mathrm{Log}}
\newcommand{\Adj}{\mathrm{Ad}}

\newcommand{\start}{\text{start}}
\newcommand{\goal}{\text{goal}}
\newcommand{\free}{\text{free}}

\newcommand{\SEThree}{\ensuremath{SE(3)}\xspace}
\newcommand{\SETwo}{\ensuremath{SE(2)}\xspace}

\newcommand{\SPD}{{\ensuremath{\mathbb{S}}}\xspace}

\newcommand{\X}{\ensuremath{\mathcal{X}}\xspace}
\newcommand{\G}{\ensuremath{\mathcal{G}}\xspace}

\newcommand{\B}{\ensuremath{\mathcal{B}}\xspace}
\newcommand{\U}{\ensuremath{\mathcal{U}}\xspace}
\newcommand{\Xfree}{\ensuremath{\X_{\text{free}}}\xspace}
\newcommand{\Xobs}{\ensuremath{\X_{\text{obs}}}\xspace}

\newcommand{\xtraj}{\boldsymbol{x}}
\newcommand{\btraj}{\boldsymbol{b}}
\newcommand{\utraj}{\boldsymbol{u}}
\newcommand{\ytraj}{\boldsymbol{y}}

\newcommand{\T}{{\ensuremath{\mathcal{T}}}\xspace}

\title{
Terminal Matters: Kinodynamic Planning with \\
a Terminal Cost and Learned Uncertainty \\
in Belief State-Cost Space
}

\author{Zhuoyun Zhong, Seyedali Golestaneh, and Constantinos Chamzas
  \thanks{%
   All authors are affiliated with the Department of Robotics Engineering, Worcester Polytechnic Institute (WPI), Worcester, MA 01609, USA {\tt\small \{zzhong3, sgolestaneh, cchamzas\} @ wpi.edu}.
  }
}

\begin{document}

\maketitle

\begin{abstract}
In many real-world robotic tasks, robots must generate dynamically feasible motions that reliably reach desired goals even under uncertainty.
Yet existing sampling-based kinodynamic planners typically optimize running costs accumulated along trajectories and treat goal reaching as a feasibility check, rather than explicitly optimizing terminal-state quality, such as goal preference or goal-reaching reliability.
In this work, we introduce Kinodynamic planning with a Terminal cost, termed \method.
It augments AO-RRT with a terminal-cost objective to optimize terminal-state quality alongside trajectory running cost.
We provide a rigorous proof that this augmented AO-RRT formulation preserves asymptotic optimality.
Furthermore, we extend the formulation to belief space and prove that minimizing the Wasserstein distance between the terminal belief and the goal improves a lower bound on the probability of reaching the goal region.
To support systems without analytical uncertainty models, we learn dynamics and process uncertainty directly from data and integrate the learned belief dynamics into planning.
We validate the proposed formulation with experiments on Flappy Bird, Car Parking, and Planar Pushing with learned belief dynamics in both simulation and the real world.
Compared with baselines without terminal costs, \method better satisfies goal preferences and consistently achieves higher goal-reaching success under uncertainty.
Source code is available at 
\url{https://github.com/elpis-lab/KiTe}.
%
\end{abstract}
\begin{IEEEkeywords}
Motion and Path Planning, Kinodynamic Planning, Planning Under Uncertainty, Learning Belief Dynamics
\end{IEEEkeywords}

\section{Introduction}
\label{sec:introduction}
Robots operating in the real world must generate motions that satisfy kinodynamic constraints~\cite{lavalle2006planning} while accounting for uncertainty stemming from execution errors \cite{mapping_and_planning, multi_cc}, imperfect models~\cite{state_epistemic, ltamp}, and complex contact interactions~\cite{poking, activepusher}.
Sampling-based planners provide a general framework for kinodynamic systems, with probabilistic completeness~\cite{lavalle2001randomized} and, in asymptotically optimal variants, convergence guarantees\mbox{~\cite{review-asym,sst,aox}}.
In parallel, execution uncertainty has motivated belief-space planning methods, where the system state is represented as a probability distribution and uncertainty is propagated explicitly during planning~\cite{rrbt,ccrrt,stochastic_robustness}.
Although these methods provide principled tools for kinodynamic planning under uncertainty, their objectives typically emphasize accumulated running cost along the planned trajectory.
%

However, many robotic tasks depend not only on the running cost accumulated during motion, such as path length, control effort, or collision risk, but also on the quality of the terminal state.
%
%
In the pushing example in~\autoref{fig:intro}(a), the blue trajectory is geometrically shorter, but its terminal uncertainty is larger and therefore may lead to a lower probability of reaching the goal after execution.
In contrast, the purple trajectory accepts a longer nominal path to reach the goal with higher probability.
A terminal cost can also naturally encode preferences among feasible goals.
In the parking example in~\autoref{fig:intro}(b), the purple trajectory reaches the preferred open-space goal region rather than the closer alternative, while also achieving a more reliable terminal belief state.
These examples highlight that terminal-state quality should be optimized directly in kinodynamic planning, rather than treated only as a feasibility condition.

%
%
Planning in belief space also requires propagating both nominal dynamics and uncertainty~\cite{belief_tree}.
This creates a practical challenge in complex systems, such as contact-rich pushing in~\autoref{fig:intro}~(a), where analytical models are often unavailable.
It therefore motivates methods for learning belief dynamics models that capture state transitions and process uncertainty in a form compatible with belief-space planning.

\begin{figure}
    \centering
    \begin{subfigure}[t]{\linewidth}
        \centering
        \includegraphics[width=\linewidth]{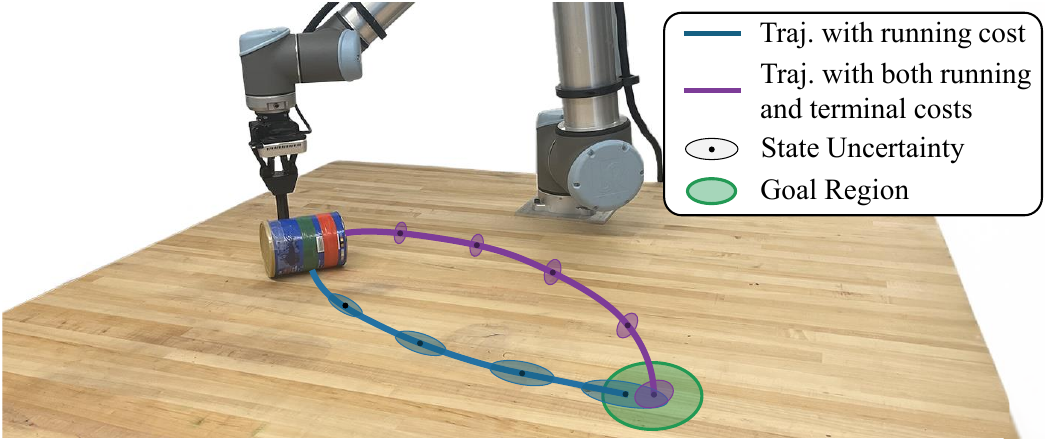}
        \vspace{-4ex}
        \caption{Planar Pushing Task.}
        \label{fig:intro1}
    \end{subfigure}
    \begin{subfigure}[t]{\linewidth}
        \centering
        \includegraphics[width=\linewidth]{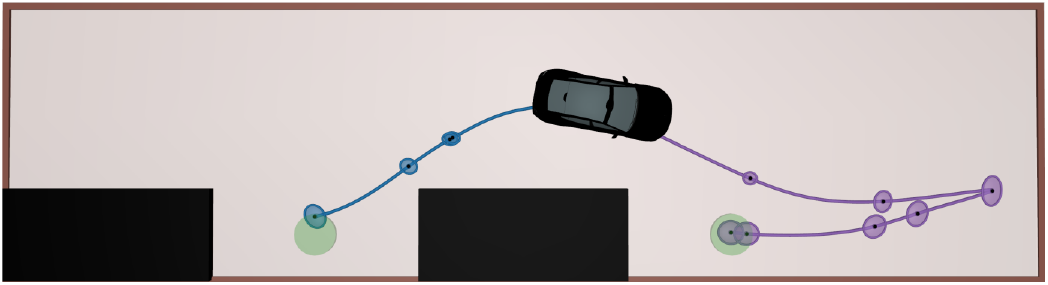}
        \vspace{-4ex}
        \caption{Car Parking Task.}
        \label{fig:intro2}
    \end{subfigure}
    \caption{
    Motivating examples for kinodynamic planning with an explicit terminal cost. 
    (a) In Planar Pushing, the longer purple trajectory pushes from a more stable side of the object, yielding a lower-uncertainty terminal belief and thus a higher probability of reaching the goal.
    (b) In Car Parking, multiple goal regions may be feasible, but not be equally desirable. A terminal cost can encode a preference for the open parking spot while accounting for terminal uncertainty to improve goal-reaching reliability.
    }
    \label{fig:intro}
\vspace{-2ex}
\end{figure}

This paper proposes sampling-based \textbf{Ki}nodynamic planning with a \textbf{Te}rminal cost, termed \method, and highlights the importance of explicit terminal-state optimization.
First, in \autoref{sec:aorrt-terminal}, we show that a terminal-cost objective can be optimized along the running cost within the kinodynamic planner AO-RRT \cite{aox, aorrt} and provide a rigorous proof that AO-RRT preserves asymptotic optimality.
Furthermore, in \autoref{sec:aorrt-belief}, we extend this formulation to belief space and prove that the planner remains asymptotically optimal.

Belief-space planners can explicitly reason about uncertainty, but they often rely on hard probability thresholds or trajectory-wise surrogates instead of directly optimizing the terminal belief~\cite{belief_tree, ccrrt, multi_cc}.
To address this issue, in \autoref{sec:aorrt-bound}, we prove that the probability of reaching a goal region admits a lower bound in terms of the 2-Wasserstein distance between the terminal belief and the goal.
This yields a principled terminal objective for goal-reaching probability.
The same formulation also allows the planner to encode goal preference.

Lastly, in \autoref{sec:modeling}, we show that suitable belief dynamics models can be learned directly from interaction data \cite{poking} using neural networks trained with negative log-likelihood (NLL) loss.
The learned model predicts both the mean transition and the associated process uncertainty, and can be directly  integrated into kinodynamic planning.

In \autoref{sec:experiments}, comprehensive experiments are conducted on three tasks with diverse systems: Flappy Bird, Car Parking, and Planar Pushing. 
The proposed framework is compared against classical and state-of-the-art kinodynamic motion planners, in both simulation and the real world.
The results show that our method can account for goal-state quality while consistently achieving higher goal-reaching success rates than the baselines.

In summary, the core contributions of this paper are:
\begin{itemize}
    \item We implement AO-RRT with a terminal cost and provide a rigorous proof that its asymptotic optimality is preserved. We further prove that it remains asymptotic optimal when planning in belief space.
    \item We prove that the Wasserstein distance between the terminal belief and the goal induces a lower bound on goal-reaching success, yielding a principled terminal objective.
    \item We demonstrate how learned belief dynamics models can explicitly capture both state transitions and process uncertainty for belief-state propagation within belief-space kinodynamic planning.
    \item We validate the proposed method \method across various tasks in simulation and the real world.
\end{itemize}

\section{Related Work}
\label{sec:related}
Motion planning under uncertainty for dynamical systems has been studied through sampling-based kinodynamic planning, uncertainty-aware motion planning, and learning-based dynamics and uncertainty modeling.
Our work lies at the intersection of these directions.

\subsection{Kinodynamic Motion Planning}
Kinodynamic motion planning addresses the problem of generating dynamically feasible trajectories that satisfy both geometric constraints and system dynamics.
These methods are suitable for robotic systems with complex dynamics including mobile robots~\cite{belief_tree}, drones~\cite{dirt}, and other systems with nontrivial constraints~\cite{aox, activepusher}.
Sampling-based methods, such as Rapidly-exploring Random Trees (RRT) and their variants, have become a dominant paradigm due to their probabilistic completeness and scalability in high-dimensional spaces~\cite{review-asym}.

To improve solution quality, asymptotically optimal kinodynamic planners have been developed. Methods such as SST~\cite{sst} and DIRT~\cite{dirt} achieve asymptotic optimality without requiring a steering function by maintaining a sparsified tree structure and pruning suboptimal nodes.
More generally, the AO-$x$ framework~\cite{aox} reformulates optimal kinodynamic planning as a sequence of feasible planning problems in an augmented state-cost space. Building on this, \cite{aorrt} formally proves asymptotic optimality of AO-RRT with running cost objectives.

Despite these advances, existing optimal planners primarily consider objectives defined as running costs along trajectories.
This limitation restricts the ability of planners to explicitly encode preferences over terminal states, such as goal quality, downstream task cost, or preferences among multiple feasible goal regions.
This gap motivates our extension of AO-RRT to incorporate terminal cost with formal guarantees.

\subsection{Uncertainty-Aware Motion Planning}
Uncertainty-aware planning aims to explicitly reason about uncertainty in system dynamics, sensing, and the environment.
Sampling-based approaches have been successfully adapted to this setting by planning in belief space, where a belief represents a probability distribution over the underlying state.
Belief-space planners maintain a search tree of belief nodes under uncertainty~\cite{rrbt}, and Gaussian belief tree~\cite{belief_tree} extends it to kinodynamic planning with chance constraints for partially observable systems.
A key challenge in belief-space planning is defining a valid metric, for which metrics such as the 2-Wasserstein distance have been proposed~\cite{belief-distance}.
POMDP-based methods further address long-horizon and online decision making under uncertainty~\cite{uncertainty_long_time, pomdp_uncertainty, pomdp_car}.

A parallel line of work focuses on incorporating risk and robustness into the planning process.
For example, chance-constrained planners~\cite{ccrrt, multi_cc} enforce safety constraints to avoid collisions and terminal constraints to reach the goal region with high probability.
Similarly, risk-aware planners incorporate uncertainty through different representations, including risk maps for perception uncertainty~\cite{risk-contours}, ambiguity sets for distributionally robust planning~\cite{drrrt}, and Monte Carlo estimation for trajectory optimization~\cite{mc_planning}.
Other work provides robustness for uncertain nonlinear systems~\cite{robust_rrt}, optimizes collision probability under environment uncertainty~\cite{collision_uncertainty}, and reasons about stochastic robustness for temporal-logic specifications~\cite{stochastic_robustness}.
%
%

However, existing sampling-based planners focus on feasibility, chance constraints, or trajectory-wise costs in belief space, such as cumulative length or risk.
The role of terminal cost in belief space, which captures the quality of the final belief state, remains underexplored with theoretical guarantees.

\subsection{Learning Dynamics and Uncertainty Models}
Accurate modeling of system dynamics and uncertainty is critical for effective planning under uncertainty, particularly in complex and contact-rich domains.
Learning-based approaches provide a flexible alternative to analytical models by fitting both dynamics and uncertainty directly from data~\cite{poking, activepusher}.
Gaussian process (GP) models, as used in learning-based task and motion planning~\cite{ltamp}, provide uncertainty estimates and enable sampling of actions with high success rate.
In deep learning settings, aleatoric uncertainty can be learned through probabilistic loss such as negative log-likelihood (NLL) or its variants, e.g., $\beta$-NLL~\cite{beta-nll}.
In low-data regimes, modeling epistemic uncertainty is also important for safe and informative planning. This can be achieved using approximate Bayesian methods such as Monte Carlo dropout~\cite{dropout-uncertainty}, with learned embeddings~\cite{active_regression}, or through learned uncertainty estimators~\cite{state_epistemic}.
More recently, evidential regression~\cite{evidential} has been proposed to jointly capture both uncertainties within a unified framework.

Learning-based models have been integrated with planning frameworks to enable uncertainty-aware decision making~\cite{activepusher, ltamp}.
However, such integrations use learned uncertainty only to bias action selections, rather than being tightly integrated with belief-space planners to explicitly propagate the belief dynamics.
In this work, we show how to train a neural network with a standard NLL objective to acquire the required belief dynamics model suitable for belief-space planning.

\section{Problem Formulation}
\label{sec:problem}

In this work, we consider kinodynamic planning problems under uncertainty.
Let $\X$ and $\U$ be the state space and control space respectively. The state space $\X$ is partitioned into two disjoint subsets
$\X = \Xobs \;\cup\; \Xfree$
, where $\Xobs$ is the obstacle region and $\Xfree$ is the collision-free region.
The system dynamics is given by the ordinary differential equation:
\begin{equation}
\label{eq:dynamics}
\dot x(t) = f\big(x(t), u(t)\big),
\end{equation}
where $t \in [0, \T]$ denotes time over the horizon \T, $x(t) \in \X$ is the system state and $u(t) \in \U$ is the corresponding control~\cite{review-sampling}.
%

\subsection{Kinodynamic Planning:}
The \emph{kinodynamic planning problem} is to determine a control function $\utraj:[0, \T] \to \U$ over a time horizon $\T$ such that the generated  state trajectory $\xtraj:[0, \T] \to \X$ satisfies:
\begin{equation}
\label{eq:kino_traj}
\begin{gathered} 
    x(0) = x_{\start} ,\; x(\T) \in \G, \\
    x(t) \in \Xfree ,\;
    u(t) \in \U,\;
    \dot{x}(t) = f\big(x(t), u(t)\big),\;
    \forall t \in [0,\T].
\end{gathered}
\end{equation}
where $x_\start$ denotes the starting state, and $\G$ denotes the goal region of the problem. For better readability, we use $x(\cdot) \equiv \xtraj(\cdot)$ and $u(\cdot) \equiv \utraj(\cdot)$.

In optimal kinodynamic motion planning, the planner also tries to optimize an objective function $\cost(\xtraj, \utraj)$ while searching for a feasible trajectory.
This objective typically measures the quality of a trajectory, which is expressed as an integral cost $\cost_\length$ accumulated along the trajectory:
\begin{equation}
\label{eq:cost_integral}
\cost_\length(\xtraj, \utraj)
=
\int_0^{\T} \length\big(x(t), u(t)\big) \, dt,
\end{equation}
where $\length: \mathcal{X} \times \mathcal{U} \rightarrow \Rplus$ is the running cost (e.g. path length or energy consumption).

In this work, we further consider a \emph{terminal cost} term $\terminal$ to optimize terminal-state quality, and consider the total cost $\cost_T$:
\begin{equation}
\label{eq:cost_terminal}
\cost_T(\xtraj, \utraj)
=
\int_0^{\T} \length \big(x(t), u(t)\big) \, dt
+ \terminal \big(x(\T)\big),
\end{equation}
%
where $\terminal: \mathcal{X} \rightarrow \Rplus$ is a terminal cost function that is evaluated based on the endpoint $x(\T)$. 
%

\subsection{Kinodynamic Planning in Belief Space} 
Under uncertainty, the system evolution is no longer fully deterministic.
Instead of planning directly in the state space $\mathcal{X}$, we plan in the \emph{belief space} $\mathcal{B}$, where each belief $b \in \mathcal{B}$ represents a probability distribution over $\mathcal{X}$ that captures the system’s uncertainty about its true state.
We assume that the belief over the true (but unknown) state can be approximated by a Gaussian distribution $b = \mathcal{N}(x, \cov)$, parameterized by its mean $x \in \X$ and covariance $\cov \in \SPD$, where $\SPD$ denotes the space of symmetric positive definite matrices.

The belief evolves according to Gaussian propagation under process and observation noise~\cite{lqg-mp}. In this work, we consider open-loop planning and focus on process uncertainty, omitting the observation update, such that the belief is propagated purely through the belief dynamics model:
\begin{equation}
\label{eq:belief_dyn}
\dot{b}(t) = f_b(b(t), u(t)),
\end{equation}
where $f_b$ represents the combined effect of motion dynamics with process uncertainty.
This form is analogous to the deterministic system dynamics in~\autoref{eq:dynamics}, but now the state variable $x(t)$ is replaced by the belief $b(t)$, which encodes both the expected state and its associated uncertainty.
The specific belief propagation models used in this work are introduced later in \autoref{sec:aorrt-belief-prop}.
Additionally, we demonstrate in \autoref{sec:modeling} that such a model can be directly learned from interaction data.

The \emph{belief-space kinodynamic planning problem} seeks a belief trajectory $\btraj:[0,\T]\!\to\!\mathcal{B}$ generated by a control function $\utraj$ such that, with high probability, the true trajectory $\xtraj$ induced by $\btraj$ satisfies the kinodynamic feasibility conditions in~\autoref{eq:kino_traj}.
In particular, we require the control and dynamics constraints to hold as in the deterministic setting, while collision avoidance and goal satisfaction are expressed probabilistically~\cite{ccrrt, belief_tree}:
\begin{equation}
\label{eq:belief_problem}
\begin{gathered}
    x(0) = x_{\start}, \\
    u(t) \in \U,\; 
    \dot{x}(t) = f\big(x(t), u(t)\big),\;
    \forall t \in [0,\T], \\
    \Pr\!\big( x(t) \in \Xfree \big) \ge p_\free ,\; \forall t \in [0,\T],\\
    \Pr\!\big( x(\T) \in \G \big) \ge p_\goal.
\end{gathered}
\end{equation}
where $p_\free$ and $p_\goal$ are the thresholds for collision-free execution and goal-reaching success, respectively.
Intuitively, the planner searches for a belief trajectory whose induced distribution over true trajectories remains largely within the feasible region of the deterministic problem.

Similar to~\autoref{eq:cost_terminal}, the belief-space objective evaluates trajectory quality at the distribution level, through both a running cost and a terminal cost:
\begin{equation}
\label{eq:belief_cost_terminal}
\cost_\B(\btraj, \utraj)
=
\int_0^{\T} \lengthb \big(b(t), u(t)\big) \, dt
+ \terminalb \big(b(\T)\big),
\end{equation}
where $\lengthb: \B \times \U \rightarrow \Rplus$ is the running cost in belief space and $\terminalb: \B \rightarrow \Rplus$ is the terminal belief cost.
For readability, we use $b(\cdot) \equiv \btraj(\cdot)$.
A key contribution of this work is to show that AO-RRT can optimize objectives with an explicit terminal cost (\autoref{eq:cost_terminal} and~\autoref{eq:belief_cost_terminal}), while preserving asymptotic optimality.
\section{\method: AO-RRT with A Terminal Cost}
\label{sec:aorrt-terminal}

In this section, we first introduce the core planning algorithm for \method, AO-RRT, originating from the AO-x family~\cite{aox}.
Although the original AO-x framework defines an optimization objective that includes a terminal cost, its asymptotic optimality analysis relies on the assumption that the underlying RRT is well-behaved, yet no rigorous proof is given when a terminal cost is present.
Moreover, this term is not incorporated or evaluated in any of its experiments.
A later extension~\cite{aorrt} establishes a formal asymptotic optimality proof for AO-RRT, but only for objectives consisting of an integrable running cost.

We extend previous theoretical analyses~\cite{aox, aorrt} by demonstrating that a terminal cost can be incorporated into AO-RRT while preserving asymptotic optimality.
Our analysis provides a complete and rigorous proof for optimizing both running and terminal costs under mild regularity assumptions.
%

\subsection{Algorithm Description}
\label{sec:aorrt-algo}

The key idea behind AO-RRT~\cite{aox} is to reformulate the optimal kinodynamic planning problem as a sequence of feasible planning problems in the \emph{augmented state space}.
Each iteration of feasible planning is performed under a progressively tighter cost bound, thereby converging toward the optimal solution over time.
To enable this formulation, the original state $x$ is augmented with the accumulated running cost $c$.
Accordingly, the state space is extended to $\Y = \X \times \Rplus$, with augmented state $y = (x, c) \in \Y$, while the control space remains $\U$.
Let $\aug$ denote the augmented dynamics:
\begin{equation}
\label{eq:aug_dynamics}
\begin{aligned}
\dot{y}(t)
=
\aug\big(y(t),u(t)\big)
=
\begin{bmatrix}
\dot{x}(t) \\
\dot{c}(t)
\end{bmatrix}
=
\begin{bmatrix}
f\big(x(t),u(t)\big) \\
\length\big(x(t),u(t)\big)
\end{bmatrix}.
\end{aligned}
\end{equation}

The pseudocode of \method with a single search tree is provided in \autoref{alg:aorrt}, where the key modification to AO-RRT regarding the terminal cost is highlighted in red.
It first initializes a search tree $G$ with start state $(x_{\start}, 0)$ in the augmented space, set the current best cost $c_{\text{best}}$ to $+\infty$, and set the terminal state of the previously found solution $y_\text{sol}$ to null (\autoref{alg:aorrt:init_start}–\autoref{alg:aorrt:init_end}).

At the beginning, no solution is found and the algorithm behaves identically to vanilla RRT. It only samples a random state and control with a step duration given the maximum control duration $\T_p > 0$ (\autoref{alg:aorrt:sampling_start}–\autoref{alg:aorrt:sampling_end}), and finds the nearest neighbor with the geometric distance metric using $\mathtt{NearestState}$ (\autoref{alg:aorrt:no_sol_start}–\autoref{alg:aorrt:no_sol_end}).
When a new solution $y_\text{sol}$ is found and a valid current best cost bound $c_{\text{best}}$ is updated, AO-RRT augments sampling with a cost coordinate, and $\mathtt{NearestStateCost}$ selects the nearest neighbor in the augmented space, focusing exploration on the sublevel set induced by the current cost bound $\{(x, c) : c < c_{\text{best}}\}$ (\autoref{alg:aorrt:itr_sol_start}–\autoref{alg:aorrt:itr_sol_end}). In this work, we define the augmented space distance as the L2 norm:
\begin{equation}
\label{eq:dist_aug}
D_y\left( y, y' \right) = \sqrt{D_x (x, x')^2 + |c - c'|^2}
\end{equation}
where $y = (x, c), y' = (x', c') \in \Y$ and $D_x (x, x')$ measures the geometric distance between $x$ and $x'$.

\begin{algorithm}
\caption{$\mathtt{\method}
\;(x_{\start}, \G, \X, \T_p, \U, \cost_\length, \terminal, \heuristics, N)
$}
\label{alg:aorrt}
\SetNoFillComment

$c_\text{best} \gets +\infty, \; y_\text{sol} \gets \varnothing$\;
\nllabel{alg:aorrt:init_start}
$G \gets \{\mathbb{V} \gets \{(x_\start, 0)\}, \mathbb{E} \gets \emptyset\} $\;
\nllabel{alg:aorrt:init_end}

\For{$i = 1, 2, ..., N$}{

    \tcc{Sampling}
    $x_{\text{rand}} \gets \mathtt{SampleState}(\X)$\;
    \nllabel{alg:aorrt:sampling_start}
    $\tau_{\text{rand}} \gets \mathtt{SampleDuration}(0, \T_p)$\;
    $u_{\text{rand}} \gets \mathtt{SampleControl}(\U)$\;
    \nllabel{alg:aorrt:sampling_end}

    \tcc{No solution found. Use state distance}
    \If{$y_\text{sol} = \varnothing$}{
    \nllabel{alg:aorrt:no_sol_start}
        $y_{\text{near}} \gets \mathtt{NearestState}(x_{\text{rand}}, G)$\;
        \nllabel{alg:aorrt:no_sol_end}
    }

    \tcc{Best cost set. Use augmented distance}
    \Else{
    \nllabel{alg:aorrt:itr_sol_start}
        $c_{\text{rand}} \gets \mathtt{SampleCost}(0, c_\text{best})$\;
        $y_{\text{near}} \gets \mathtt{NearestStateCost}((x_{\text{rand}}, c_{\text{rand}}), G)$\;
        \nllabel{alg:aorrt:itr_sol_end}
    }

    \tcc{Tree extension}
    $(\xtraj_{\text{seg}}, \utraj_{\text{seg}}) \gets \mathtt{Propagate}(y_{\text{near}}.x, u_{\text{rand}}, \tau_{\text{rand}})$\;
    \nllabel{alg:aorrt:prop_start}
    $x_{\text{new}} \gets x_{\text{seg}}(\tau_{\text{rand}}) $\;
    $c_{\text{new}} \gets y_{\text{near}}.c + \cost_\length(\xtraj_{\text{seg}}, \utraj_{\text{seg}})$\;
    \nllabel{alg:aorrt:prop_end}

    \tcc{Validity check}
    \If{$\mathtt{IsInvalid}(\xtraj_{\text{seg}}, \X)$}{
    \nllabel{alg:aorrt:val_start}
        \textbf{continue} \tcp{Reject invalid motion}
    }
    \nllabel{alg:aorrt:val_end}

    \If{$c_{\text{new}} + \heuristics(x_\text{new}) \ge c_\text{best}$}{
    \nllabel{alg:aorrt:c_bound_start}
        \textbf{continue} \tcp{Reject cost above bound}
        \nllabel{alg:aorrt:c_bound_end}
    }

    \tcc{Update tree structure}
    $y_{\text{new}} \gets (x_{\text{new}},\, c_{\text{new}})$\;
    \nllabel{alg:aorrt:tree_start}
    $\mathbb{V} \gets \mathbb{V} \;\cup\; \{y_{\text{new}}\} $\;
    $\mathbb{E} \gets \mathbb{E} \;\cup\; \{\text{Edge}(y_{\text{near}}, y_{\text{new}})\} $\;
    \nllabel{alg:aorrt:tree_end}

    \tcc{Goal and cost bound check}
    \If{$x_{\text{new}} \in \G 
    \;\land\;$
    \textcolor{rred}{
    $c_{\text{new}} + \terminal(x_{\text{new}}) < c_\text{best}$
    }
    }{
    \nllabel{alg:aorrt:goal_start}

        \tcc{Add terminal cost in the end}
        \textcolor{rred}{
        $c_\text{best} \gets c_{\text{new}} + \terminal(x_{\text{new}})$
        }\;
        \nllabel{alg:aorrt:goal_end}
        
        $y_{\text{sol}} \gets y_{\text{new}}$\;
        \nllabel{alg:aorrt:sol}
        
        $\mathtt{PruneTreeAboveCost}(G,\, c_\text{best})$\;
        \nllabel{alg:aorrt:prune}
    }
}
\Return $\mathtt{PathToRoot}(y_\text{sol})$\;
\nllabel{alg:aorrt:path}

\end{algorithm}

From the selected vertex, the algorithm propagates the dynamics to obtain $(x_{\text{new}}, c_{\text{new}})$ (\autoref{alg:aorrt:prop_start}–\autoref{alg:aorrt:prop_end}). 
Although this work introduces an additional terminal cost $\terminal$ in the overall objective, we emphasize that $\terminal$ is not embedded within the cost variable $c$ of the augmented dynamics.
The extended state includes only the running cost $\length$, and $\terminal$ is considered when a trajectory reaches the goal (\autoref{alg:aorrt:goal_start}–\autoref{alg:aorrt:goal_end}).
Motion segments that are invalid are discarded (\autoref{alg:aorrt:val_start}–\autoref{alg:aorrt:val_end}).
In addition, a segment $x_\text{new}$ is rejected early if its accumulated cost plus an admissible heuristic $\mathcal{H}(x_\text{new})$ cannot improve the current best solution, i.e.,
$c_{\text{new}} + \heuristics(x_{\text{new}}) \ge c_{\text{best}}$
(\autoref{alg:aorrt:c_bound_start}–\autoref{alg:aorrt:c_bound_end}). Valid vertices are then inserted into the tree (\autoref{alg:aorrt:tree_start}–\autoref{alg:aorrt:tree_end}).

When a newly added state reaches the goal region $\G$, the resulting total cost $c_{\text{new}} + \terminal(x_{\text{new}})$ will be evaluated.
If it improves upon the best known cost, the solution will be updated and the cost bound will be set to this new total cost (\autoref{alg:aorrt:goal_start}–\autoref{alg:aorrt:sol}).
Since all costs are strictly positive, it is safe and efficient to invoke $\mathtt{PruneTreeAboveCost}$ to remove all vertices and edges whose running cost already exceeds the updated bound, thereby avoiding expansions that cannot lead to a better solution (\autoref{alg:aorrt:prune}).
After $N$ iterations, the algorithm returns the best path from the start state to $y_\text{sol}$ (\autoref{alg:aorrt:path}).

In the next sections we rigorously prove that \method (AO-RRT with terminal cost) is asymptotically optimal.

\subsection{Proof Preliminary}
\label{sec:aorrt-terminal-pre}
For simplicity and clarity, in this proof, we adopt the Euclidean norm $\| \cdot \|$ when measuring differences in state, control, and dynamics, and the absolute value $| \cdot |$ when measuring differences in cost. The results naturally extend to the weighted norm or other valid metric~\cite{aorrt} that satisfies the Lipschitz continuity assumptions below.
For any measurable set $S \subseteq \mathbb{R}^d$, let $\vol{S}$ denote its Lebesgue measure, i.e., its $d$-dimensional volume. 
For any state $x \in \X$ and radius $r > 0$, we denote the $r$-radius closed ball centered at $x$ as:
\begin{equation}
B_r(x) = \{\, x' \in \X \mid  \|x - x' \| \le r \,\}.
\end{equation}
%

We begin by stating the assumptions required for the optimality proof. Specifically, we assume Lipschitz continuity for the system dynamics and objective functions.

\begin{assumption}[Lipschitz Continuity of the System Dynamics]
\label{as:lip-system}
The system dynamics $f$ is assumed to be Lipschitz continuous in both the state and control variables. Formally, $\exists K^f_x, K^f_u \ge 0$ such that, $\forall x, x' \in \X$, $\forall u, u' \in \U$:
\begin{equation}
\begin{gathered}
\label{eq:lip_f}
\|f(x,u) - f(x',u)\| \le K^f_x\|x - x'\|, \\
\|f(x,u) - f(x,u')\| \le K^f_u\|u - u'\|.
\end{gathered}
\end{equation}
This assumption ensures that small perturbations in the state or control lead to proportionally bounded variations in the system dynamics.
\end{assumption}

\begin{assumption}[Lipschitz Continuity of the Running Cost]
\label{as:lip-running}
The running cost $\length(x,u)$ is assumed to be Lipschitz continuous, that is, 
$\exists K^\length_x, K^\length_u \ge 0$ such that $\forall x, x' \in \X$, $\forall u, u' \in \U$:
\begin{equation}
\begin{gathered}
\label{eq:lip_l}
|\length(x,u) - \length(x',u)| \le K^\length_x\|x - x'\|, \\
|\length(x,u) - \length(x,u')| \le K^\length_u\|u - u'\|.
\end{gathered}
\end{equation}
This assumption guarantees that the cost rate varies smoothly along nearby trajectories.
\end{assumption}
If the system dynamics $f$ and running cost $\ell$ both satisfy Lipschitz continuity, it is shown in~\cite{aorrt} that its corresponding augmented system $\aug$ also satisfies Lipschitz continuity with some constants $K_y, K_u \ge 0$:
\begin{equation}
\begin{gathered}
\label{eq:lip_z}
\|\aug(y,u) - \aug(y',u)\| \le K_y \|y - y'\|, \\
\|\aug(y,u) - \aug(y,u')\| \le K_u \|u - u'\|.
\end{gathered}
\end{equation}

\begin{assumption}[Lipschitz Continuity of the Terminal Cost]
\label{as:lip-terminal}
Finally, the terminal cost introduced in this work is assumed to be Lipschitz continuous in a neighborhood of the goal region $\G$. Specifically, 
$\exists K_\terminal \ge 0$ such that $\forall x, x' \in \G$:
\begin{equation}
\label{eq:lip_terminal}
|\terminal(x) - \terminal(x')| \le K_\terminal\|x - x'\|.
\end{equation}
This implies that small deviations in the terminal state result in proportionally bounded differences in terminal cost.
\end{assumption}

To support the theoretical analysis, we introduce several definitions and notations, following prior work on optimal kinodynamic planning~\cite{sst, aorrt}. Sampling-based kinodynamic planners are typically implemented such that the control function is piecewise-constant:

\begin{definition}[Piecewise-constant Control]
A control function $\utraj$ is \emph{piecewise-constant} with resolution $\tau > 0$ if it is represented as a finite concatenation of constant control segments. Each segment $i$ is defined by a function: $\utraj_i
: [0, k_i \cdot \tau] \rightarrow u_i$, where $u_i \in \U$ is fixed and $k_i \in \mathbb{N}_+$.
\end{definition}

\begin{definition}[Valid Trajectory]
A trajectory $\xtraj$ generated by a piecewise-constant control function $\utraj$ is said to be \emph{valid} if it satisfies the kinodynamic constraints in \autoref{eq:kino_traj}.
\end{definition}

\begin{definition}[Obstacle and Goal Clearance]
The \emph{obstacle clearance} $\delta_o$ of a valid trajectory $\xtraj$ is the minimum distance from obstacles over all states in $\xtraj$, i.e., maximum value $\delta_o$ such that:
\begin{equation}
B_{\delta_o}(x) \subset \Xfree, \forall x \in \xtraj .
\end{equation}
Similarly, the \emph{goal clearance} $\delta_g$ of a valid trajectory $\xtraj$ is the minimum distance from its terminal state $x(\T)$ to the boundary of goal region $\G$, that is, the maximum value $\delta_g$ such that:
\begin{equation}
\ball_{\delta_g}(x(\T))\subset \G .
\end{equation}
\end{definition}

\begin{definition}[Robust and $\delta$-robust Trajectory]
A valid trajectory is \emph{robust} if there exists a clearance $\delta > 0$ such that both its obstacle clearance $\delta_o$ and goal clearance $\delta_g$ are at least $\delta$. 
When the value of $\delta$ is specified, the trajectory $\xtraj_\delta$ is called \emph{$\delta$-robust}.
\end{definition}

\begin{definition}[Asymptotic Optimality]
Given a problem, let $c^\ast$ denote the minimum achievable cost among all robust trajectories. Let $C^{\mathtt{ALG}}_n$ denote a random variable that represents the best cost value among all solutions from algorithm $\mathtt{ALG}$ at iteration $n$. $\mathtt{ALG}$ is \emph{asymptotically optimal} if
\begin{equation}
\label{eq:ao}
\Pr \left( \lim_{n\to\infty} C^{\mathtt{ALG}}_n = c^\ast \right) = 1,
\end{equation}
that is, $C^{\mathtt{ALG}}_n$ converges to $c^\ast$ \emph{almost surely}.
\end{definition}

\subsection{Proof of Asymptotical Optimality}
\label{sec:aorrt-terminal-proof}
\begin{figure}[!ht]
    \centering
    \includegraphics[width=\linewidth]{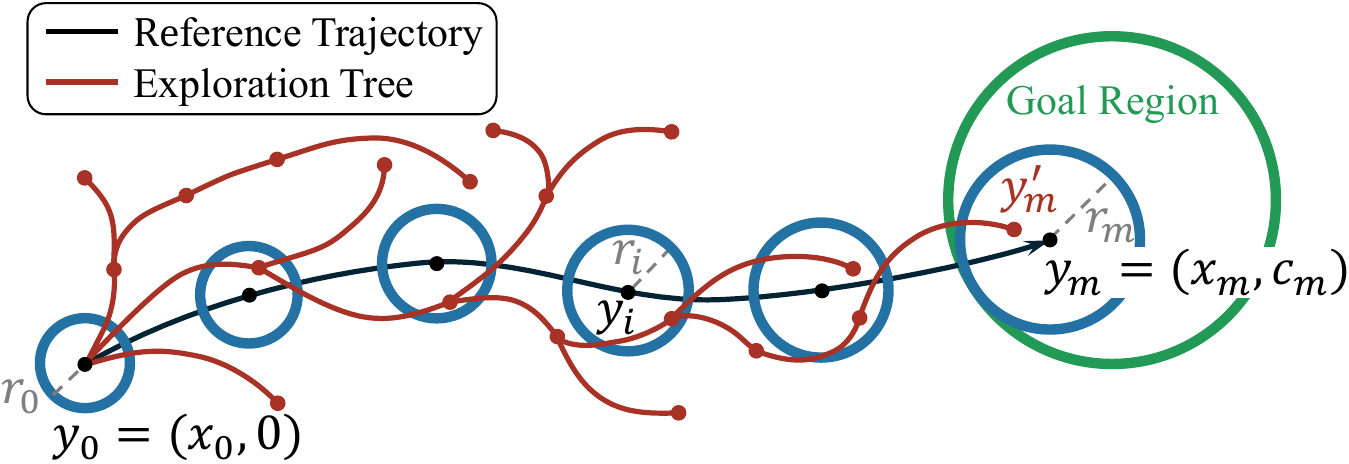}
    \caption{Illustration of asymptotic optimality proof (\autoref{thm:main}) of AO-RRT. A sequence of balls with increasing radii is placed along a $\delta$-robust reference trajectory from $y_0$ to $y_m$. AO-RRT can repeatedly select a vertex inside each ball and propagate it into the next. Reaching the terminal ball yields a trajectory whose total cost is within a factor of the reference cost.}
    \label{fig:ao_rrt_theorem1}
\vspace{-1ex}
\end{figure}

\begin{figure*}[!ht]
    \centering
    \begin{subfigure}[t]{0.28\textwidth}
        \centering
        \includegraphics[height=0.15\textheight]{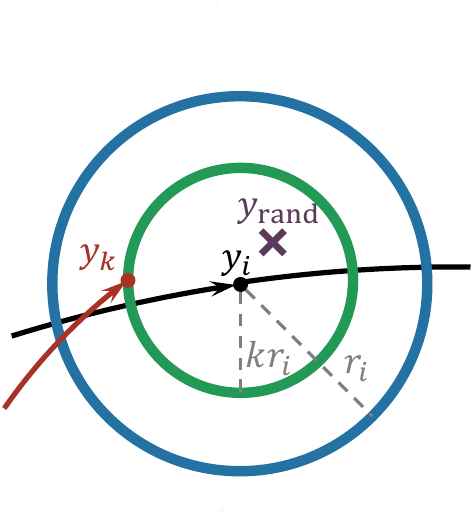}
        \caption{Illustration of \autoref{lem:select}. If the tree contains a vertex $y_k$ inside $B_{k r_i}(y_i)$, where $k\in(0,1)$, then with positive probability, AO-RRT will select a vertex within $B_{r_i}(y_i)$ for propagation.}
        \label{fig:ao_rrt_lemma1}
    \end{subfigure}
    \hfill
    \begin{subfigure}[t]{0.4\textwidth}
        \centering
        \includegraphics[height=0.15\textheight]{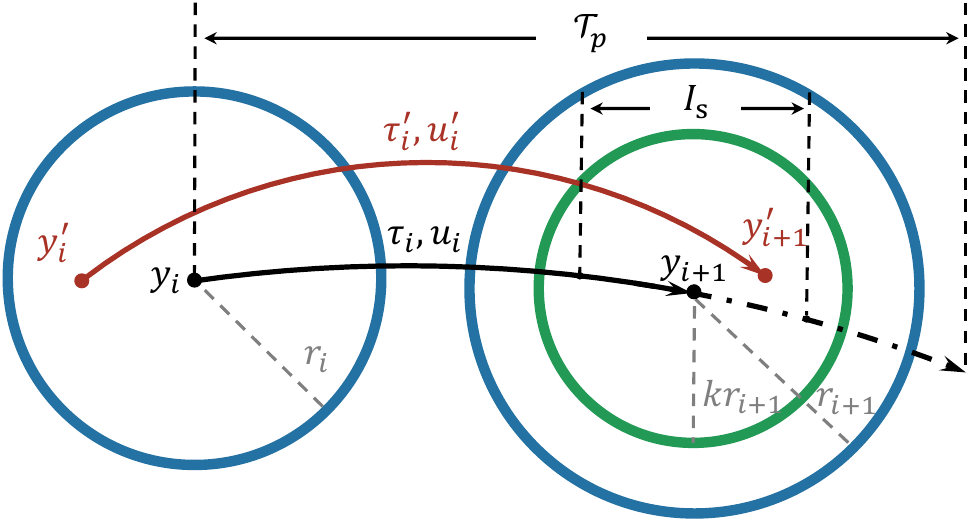}
        \caption{Illustration of \autoref{lem:propagate}. Starting from a perturbed starting state $y_i' \in B_{r_i}(y_i)$, with positive probability, AO-RRT can sample a control–duration pair $(u_i',\tau_i')$, near the reference pair $(u_i,\tau_i)$, such that the propagated state $y_{i+1}'$ remains within $B_{k r_{i+1}}(y_{i+1})$, where $k\in(0,1)$.}
        \label{fig:ao_rrt_lemma2}
    \end{subfigure}
    \hfill
    \begin{subfigure}[t]{0.28\textwidth}
        \centering
        \includegraphics[height=0.15\textheight]{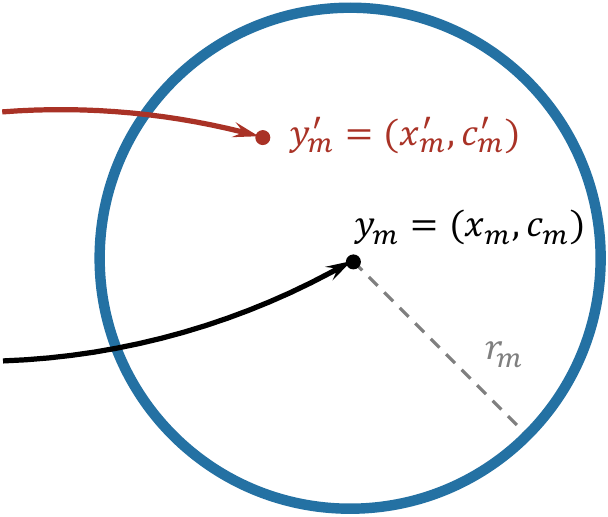}
        \caption{Illustration of \autoref{lem:terminal}~\textbf{(New}). Any terminal augmented state $y_m'$ inside $B_{r_m}(y_m)$ has total cost $c_T' = c_m' + \terminal(x_m')$ bounded by $(1+\varepsilon)c_T$, where $c_T = c_m + \terminal(x_m)$ is the reference total cost.}
    \end{subfigure}
    \caption{Illustrations of AO-RRT Lemmas}
    \label{fig:aorrt_lemmas}
\vspace{-2ex}
\end{figure*}
%
We now establish the asymptotic optimality of \method i.e., AO-RRT with a terminal cost.
The proof follows the general structure used in prior analysis of asymptotically optimal kinodynamic planners~\cite{sst, pcrrt, aorrt, robust_rrt}, while incorporating a new terminal-cost argument specific to our setting.
As illustrated in \autoref{fig:ao_rrt_theorem1}, the key idea is to place a finite sequence of balls with increasing radii along any $\delta$-robust reference trajectory in the augmented space $\Y$.
We then show that AO-RRT can, with positive probability, repeatedly select a vertex from the current ball (\autoref{lem:select}) and propagate it into the next ball (\autoref{lem:propagate}).
Repeating this argument for all balls yields a positive probability of reaching the final ball, thereby producing a trajectory whose total cost is arbitrarily close to the reference one (\autoref{lem:terminal}).
We then state the main theorem.

\begin{theorem}[Asymptotic Optimality]
\label{thm:main}
Assume that the system dynamics, running cost, and terminal cost are all Lipschitz continuous and let $\xtraj_\delta$ be any robust trajectory generated with piecewise-constant control $\utraj_\delta$.
Then $\forall \varepsilon \in (0,1)$, $\exists a, b > 0$, $n_0 \in \mathbb{N}$, s.t. $\forall n>n_0$:
\begin{equation}
\Pr \big(\cost_T(\xtraj_n, \utraj_n) > (1 + \varepsilon)\, \cost_T(\xtraj_\delta, \utraj_\delta) \big)
\le a \cdot e^{-bn},
\end{equation}
where ($\xtraj_n$, $\utraj_n$) is the best solution returned by AO-RRT after $n$ iterations.

Consequently, let $c^\ast$ denote the minimum achievable cost among all such trajectories $\xtraj_\delta$, then AO-RRT is (almost surely) asymptotically optimal with respect to cost $\cost_T$, i.e., 
\begin{equation}
\label{eq:ao}
\Pr \left( \lim_{n\to\infty} \cost_T(\xtraj_n, \utraj_n) = c^\ast \right) = 1.
\end{equation}
\end{theorem}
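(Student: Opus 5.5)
The plan follows the ball-covering argument of \cite{sst, aorrt}, carried out in the augmented space $\Y$. The terminal cost enters only in the last step.

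\textbf{Construction.} Fix $\varepsilon\in(0,1)$ and a $\delta$-robust reference $\xtraj_\delta$ with piecewise-constant control $\utraj_\delta$. Refine $\utraj_\delta$ into $m$ constant segments $(u_i,\tau_i)$ with $\tau_i\le \T_p$. This gives augmented waypoints $y_0=(x_\start,0),\dots,y_m=(x_\delta(\T),c_m)$, and the reference total cost is $c_T=c_m+\terminal(x_m)$.

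\textbf{Ball radii.} Choose radii $r_0<r_1<\dots<r_m$ with $r_0>0$ tiny. Take $r_m\le\delta$, so that any state within $r_m$ of $x_m$ lies in $\G$ and any perturbed path stays in $\Xfree$. Also impose $r_m(1+K_\terminal)\le \varepsilon c_T$, where $c_T>0$ since costs are positive. The ratios $r_i/r_{i+1}$ are fixed by the Lipschitz constants $K_y,K_u$ through a Gr\"onwall estimate, and all radii are small enough that the cost coordinates stay below any bound $c_{\text{best}}>(1+\varepsilon)c_T$.

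\textbf{Lemmas.}
\begin{enumerate}
\item (\autoref{lem:select}) If the tree contains a vertex in $B_{kr_i}(y_i)$, then with probability at least $p_1>0$ per iteration the vertex chosen for extension lies in $B_{r_i}(y_i)$. The reason is that a sample landing in a small ball around $y_i$ has its nearest neighbor inside $B_{r_i}(y_i)$. This holds both in the $\mathtt{NearestState}$ phase and in the $\mathtt{NearestStateCost}$ phase, since $c_i<c_{\text{best}}$ in the regime of interest.
\item (\autoref{lem:propagate}) From any $y'\in B_{r_i}(y_i)$, with probability at least $p_2>0$ the sampled pair $(u',\tau')$ is close enough to $(u_i,\tau_i)$. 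Gr\"onwall with constants $K_y,K_u$ then gives
\[
\|y'_{i+1}-y_{i+1}\|\le e^{K_y\tau_i}\big(r_i+K_u\tau_i\|u'-u_i\|\big)+M|\tau'-\tau_i| \le k r_{i+1}.
\]
The segment is valid by the $\delta$ clearance, and it passes the heuristic bound check because $\heuristics$ is admissible.
\item (\autoref{lem:terminal}) For $y'_m=(x'_m,c'_m)\in B_{r_m}(y_m)$ we have $x'_m\in\G$. By Assumption~\ref{as:lip-terminal},
\[
c'_m+\terminal(x'_m)\le c_m+r_m+\terminal(x_m)+K_\terminal r_m\le(1+\varepsilon)c_T .
\]
So the goal test on the red lines accepts it, and $c_{\text{best}}$ drops to at most $(1+\varepsilon)c_T$. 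It never increases afterwards.
\end{enumerate}

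\textbf{Probability bound.} The start vertex lies in $B_{kr_0}(y_0)$. Combining the first two lemmas, each iteration advances the chain by one ball with probability at least $p=p_1p_2>0$. This is a Bernoulli-type process with $m$ required successes. Failure after $n$ iterations is bounded by $\Pr(\mathrm{Bin}(n,p)<m)$, and a Chernoff bound gives $a e^{-bn}$ for all $n>n_0$.

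\textbf{Main obstacle.} The step I expect to be hardest is justifying this independence while pruning interacts with the chain. Vertices in the balls have running cost below $c_{\text{best}}$ whenever $c_{\text{best}}>(1+\varepsilon)c_T$, so they are never pruned, and once $c_{\text{best}}\le(1+\varepsilon)c_T$ the event is already achieved. To make this go through I would condition stage-wise on the tree history and use the uniform lower bounds $p_1,p_2$ to stochastically dominate the Bernoulli process.

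\textbf{Almost sure convergence.} The exponential bound is summable, so Borel–Cantelli gives $\limsup_n \cost_T(\xtraj_n,\utraj_n)\le(1+\varepsilon)\cost_T(\xtraj_\delta,\utraj_\delta)$ almost surely. Taking a countable sequence $\varepsilon\to0$ and robust references with costs approaching $c^\ast$ gives $\limsup_n \cost_T(\xtraj_n,\utraj_n)\le c^\ast$. The costs are monotone nonincreasing. Reading $c^\ast$ as the infimum over returned robust solutions gives the matching lower bound $\liminf_n \cost_T(\xtraj_n,\utraj_n)\ge c^\ast$, and hence \eqref{eq:ao}.
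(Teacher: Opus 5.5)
Your proposal is correct in substance and follows the paper's proof essentially step for step. It uses the same chain of balls in $\Y$ with radii growing by $(e^{K_y\T_p}+\beta)/k$, the selection and propagation lemmas, the terminal lemma with $r_m=\min(\delta,\varepsilon c_T/(1+K_\terminal))$, a binomial tail bound, and Borel--Cantelli. Your extra checks tighten points the paper leaves implicit: splitting segments so that $\tau_i\le\T_p$, showing chain vertices survive pruning and the heuristic test while $c_{\text{best}}>(1+\varepsilon)c_T$, and using a sequence of references when $c^\ast$ is only an infimum.

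One claim needs correcting. In the $\mathtt{NearestState}$ phase the nearest neighbor is chosen by state distance, so \autoref{lem:select} only controls the $x$-coordinate there, not membership in the augmented ball. The fix is to first obtain a solution by running the same chain on the state projection in $\X$. After that the planner stays in the $\mathtt{NearestStateCost}$ phase, and your augmented chain can be restarted from $y_0$, which is never pruned.
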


The proof of \autoref{thm:main} relies on the following three lemmas, illustrated in \autoref{fig:aorrt_lemmas}.
We adapt \autoref{lem:select} and \autoref{lem:propagate} from~\cite{pcrrt, sst} and provide proofs that are refined to improve precision and clarity.
We then introduce a new lemma \autoref{lem:terminal} that captures the effect of the terminal cost.

\begin{lemma}
\label{lem:select}
Suppose the tree contains a vertex $y_k \in B_{kr_i}(y_i)$ for some $k \in (0,1)$. 
Given a random sample $y_{\text{rand}} \in \Y$, the probability that AO-RRT selects a vertex $y_{\text{near}}$ as the nearest neighbor for propagation inside $B_{r_i}(y_i)$ is positive and satisfies
\begin{equation}
    p_i^{\text{sel}} \coloneq
    \Pr \big(y_{\text{near}} \in B_{r_i}(y_i) \big) \ge 
    \frac{\vol{ B_{(1 - k)r_i/2}(y_i) }}{\vol{\Y}} 
    > 0.
\end{equation}
\end{lemma}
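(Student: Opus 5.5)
The plan is the standard geometric argument for RRT-type nearest-neighbor selection, carried out in the augmented space $\Y$ with the metric $D_y$ from \autoref{eq:dist_aug}. I will show that whenever the random sample falls in the small ball $B_{(1-k)r_i/2}(y_i)$, the nearest vertex necessarily lies in $B_{r_i}(y_i)$. The probability bound then follows by taking the measure of that small ball under uniform sampling over $\Y$.

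Set $\rho = (1-k)r_i/2$ and suppose $y_{\text{rand}} \in B_\rho(y_i)$. By hypothesis the tree contains $y_k$ with $\|y_k - y_i\| \le k r_i$. The triangle inequality gives
\begin{equation*}
\|y_{\text{rand}} - y_k\| \le \rho + k r_i .
\end{equation*}
The nearest neighbor $y_{\text{near}}$ is at least as close to $y_{\text{rand}}$ as $y_k$ is, so $\|y_{\text{rand}} - y_{\text{near}}\| \le \rho + k r_i$. Applying the triangle inequality again,
\begin{equation*}
\|y_{\text{near}} - y_i\| \le \|y_{\text{near}} - y_{\text{rand}}\| + \|y_{\text{rand}} - y_i\| \le 2\rho + k r_i = (1-k) r_i + k r_i = r_i ,
\end{equation*}
so $y_{\text{near}} \in B_{r_i}(y_i)$. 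This containment gives the event inclusion $\{y_{\text{rand}} \in B_\rho(y_i)\} \subseteq \{y_{\text{near}} \in B_{r_i}(y_i)\}$. Under uniform sampling of $(x_{\text{rand}}, c_{\text{rand}})$ over $\Y$, the probability of the left event is $\vol{B_\rho(y_i) \cap \Y}/\vol{\Y}$. This quantity is strictly positive because $\rho > 0$ for $k \in (0,1)$.

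The main obstacle is making the sampling model consistent with the statement. First, $\Y = \X \times \Rplus$ is unbounded, and AO-RRT samples cost only in $[0, c_{\text{best}}]$. So I would interpret $\vol{\Y}$ as the volume of the bounded sampling domain $\X \times [0, c_{\text{best}}]$. Before any solution is found, the nearest-neighbor query uses only $D_x$, and then the same argument applies in $\X$ alone. Second, the ball $B_\rho(y_i)$ must lie inside the sampling domain. I would secure this by choosing the radii small relative to the clearance $\delta$, so the state ball stays in $\Xfree$, and by using the fact that the reference trajectory's costs lie strictly inside $(0, c_{\text{best}})$ when a strict improvement is still possible. If that fails at the boundary, the fallback is to replace $\vol{B_\rho(y_i)}$ by $\vol{B_\rho(y_i) \cap \Y}$, which is still a positive lower bound.

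Finally, tree pruning can only remove vertices with cost above $c_{\text{best}}$. Since $y_k$ is assumed to be in the tree at the time of the sample, the comparison argument above is unaffected by pruning.
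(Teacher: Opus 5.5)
Your proof is correct and is essentially the paper's argument: you use the same ball $B_{(1-k)r_i/2}(y_i)$ and the same event inclusion, except that the paper phrases the triangle-inequality step as $d_k \le (1+k)r_i/2 < d_{\text{out}}$ for every vertex outside $B_{r_i}(y_i)$, whereas you bound $\|y_{\text{near}}-y_i\|\le r_i$ directly. Your extra care with the sampling domain goes beyond the paper; replacing the ball volume by $\vol{B_\rho(y_i)\cap\mathcal{Y}}$ is genuinely needed at $y_0$, whose cost coordinate is $0$, and is harmless elsewhere. One caveat: in the pre-solution phase, the argument in $\mathcal{X}$ alone only places the state of $y_{\text{near}}$ within $r_i$ of $x_i$ and leaves its cost coordinate uncontrolled, so it does not give $y_{\text{near}}\in B_{r_i}(y_i)$, although the lemma, stated for a sample in $\mathcal{Y}$, does not need that case.
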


If the tree already contains a vertex $y_k$ close to $y_i$, then any random sample $y_\text{rand}$ falling in a sufficiently small neighborhood around $y_i$ will not select a vertex outside $B_{r_i}(y_i)$ as the nearest neighbor.
Thus, selection from the ball $B_{r_i}(y_i)$ occurs with nonzero probability.
A full proof is given in~\appref{app:aorrt-lem-select-proof}.

\begin{lemma}
\label{lem:propagate}
Consider a trajectory segment from $y_i$ to $y_{i+1}$, generated by applying a constant control $u_i \in \U$ for a duration $\tau_i \le \T_p$, where $\T_p$ is the maximum propagation time.
Let $K_y$ be the Lipschitz constant of the augmented dynamics with respect to the state.
Let $B_{r_i}(y_i)$ and $B_{r_{i+1}}(y_{i+1})$ be two balls centered at $y_i$ and $y_{i+1}$, respectively, with increasing radii satisfying $r_{i+1} = r_i (e^{K_y\T_p} + \beta) /\, k$ for some $\beta > 0$ and $k \in (0, 1)$.

Then for any $y_i' \in B_{r_i}(y_i)$, there exist a control neighborhood $B_{\Delta u_i}(u_i) \subset \U$ with $\Delta u_i > 0$ and a time interval $I_s \subset (0,\T_p]$ of nonzero measure such that, for all
$u_i' \in B_{\Delta u_i}(u_i)$ and
$\tau_i' \in I_s, $
the propagated state $y_{i+1}'$ obtained from $y_i'$ under $(u_i', \tau_i')$
lies in $B_{k r_{i+1}}(y_{i+1})$.
Consequently, AO-RRT can sample a control-duration pair and reach $B_{k r_{i+1}}(y_{i+1})$ with positive probability:
\begin{equation}
\begin{aligned}
p_i^{\text{ext}} 
&\coloneq
\Pr\!\left( y_{i+1}' \in B_{k r_{i+1}}(y_{i+1}) \right)\\
&\ge
\frac{\vol{I_s}}{\vol{(0,\T_p]}}
\cdot
\frac{\vol{B_{\Delta u_i}(u_i)}}{\vol{\U}}\\
&> 0.
\end{aligned}
\end{equation}
\end{lemma}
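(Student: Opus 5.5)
The plan is to split the deviation of the propagated state $y_{i+1}'$ from $y_{i+1}$ into three parts: the initial-state perturbation, the control perturbation, and the duration perturbation. We then bound each part with Grönwall's inequality, using the Lipschitz constants $K_y, K_u$ of the augmented dynamics $\aug$ from \autoref{eq:lip_z}. Let $y(t)$ denote the reference solution from $y_i$ under $u_i$, so that $y(\tau_i) = y_{i+1}$. Let $y'(t)$ denote the solution from $y_i'$ under $u_i'$. By the triangle inequality,
\begin{equation*}
\|y'(\tau_i') - y_{i+1}\| \le \|y'(\tau_i') - y'(\tau_i)\| + \|y'(\tau_i) - y(\tau_i)\|.
\end{equation*}

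For the second term, we write $y'(t)-y(t) = y_i'-y_i + \int_0^t [\aug(y'(s),u_i') - \aug(y(s),u_i)]\,ds$. Adding and subtracting $\aug(y(s),u_i')$ and applying \autoref{eq:lip_z} gives
\begin{equation*}
\|y'(t)-y(t)\| \le \|y_i'-y_i\| + K_u\|u_i'-u_i\|\,t + K_y\int_0^t\|y'(s)-y(s)\|\,ds .
\end{equation*}
Grönwall's inequality then yields $\|y'(\tau_i)-y(\tau_i)\| \le (r_i + K_u\Delta u_i \T_p)\,e^{K_y\T_p}$, since $\tau_i\le\T_p$. For the first term, we need a bound $M$ on $\|\aug\|$ along trajectories that stay in a compact neighborhood of the reference segment. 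Such a bound follows from continuity of $\aug$ together with boundedness of the relevant region; if necessary we assume $\X$ and $\U$ are bounded, as is implicit in $\vol{\Y}, \vol{\U} < \infty$. With this bound, $\|y'(\tau_i') - y'(\tau_i)\| \le M|\tau_i'-\tau_i|$.

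Combining the two estimates, the total deviation is at most $r_i e^{K_y\T_p} + K_u\Delta u_i\T_p e^{K_y\T_p} + M|\tau_i'-\tau_i|$. The target radius is $k r_{i+1} = r_i(e^{K_y\T_p}+\beta)$, so the slack $\beta r_i > 0$ is available for the two perturbation terms. We choose $\Delta u_i = \beta r_i/(2K_u\T_p e^{K_y\T_p})$, or any positive value if $K_u=0$. We then take $I_s = [\tau_i-\eta, \tau_i+\eta]\cap(0,\T_p]$ with $\eta = \beta r_i/(2M)$. Because $\tau_i>0$, this intersection has positive measure even when $\tau_i=\T_p$. For every $u_i' \in B_{\Delta u_i}(u_i)$ and every $\tau_i'\in I_s$, the propagated state $y_{i+1}'$ then lies in $B_{kr_{i+1}}(y_{i+1})$. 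The bound is uniform in $y_i' \in B_{r_i}(y_i)$, as the statement requires.

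The probability bound follows directly. In \autoref{alg:aorrt}, the duration and the control are sampled independently and uniformly on $(0,\T_p]$ and on $\U$. The event $\{u_{\text{rand}}\in B_{\Delta u_i}(u_i),\ \tau_{\text{rand}}\in I_s\}$ implies the desired containment, so its probability is at least the product of the two volume ratios. That product is positive because both sets have positive measure. Here we assume $B_{\Delta u_i}(u_i)\subset\U$, or else we shrink $\Delta u_i$, which relies on $u_i$ being interior or on taking the measure of $B_{\Delta u_i}(u_i)\cap\U$.

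The main obstacle is the duration term: it requires the uniform velocity bound $M$, which is not stated explicitly among the assumptions. I would obtain $M$ from the Lipschitz property via $\|\aug(y,u)\| \le \|\aug(y_i,u_i)\| + K_y\|y-y_i\| + K_u\|u-u_i\|$, combined with a Grönwall bound on $\|y'(t)-y_i\|$ over $[0,\T_p]$. This construction avoids any extra compactness assumption.
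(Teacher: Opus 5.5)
Your proof is correct. It differs from the paper's in one step: how the duration perturbation is absorbed.

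You insert $y'(\tau_i)$ and move in time along the \emph{perturbed} trajectory. This forces you to produce a velocity bound $M$ that is uniform over all $y_i'\in B_{r_i}(y_i)$ and $u_i'\in B_{\Delta u_i}(u_i)$. Your Lipschitz-plus-Gr\"{o}nwall construction of $M$ does this correctly, and there is no circularity, since $\Delta u_i$ is fixed before $M$ and $\eta$ are chosen.

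The paper instead inserts $y(\tau_i')$. It applies Gr\"{o}nwall to $e(t)=y(t)-y'(t)$ at the same time $t$, getting $\|e(t)\|\le e^{K_y\T_p}r_i+\tfrac{K_u}{K_y}(e^{K_y\T_p}-1)\Delta u_i$ for every $t\in[0,\T_p]$. It then fixes $0<\epsilon_{i+1}<\beta r_i$. The set $I_s$ consists of the times $t$ with $B_{kr_{i+1}-\epsilon_{i+1}}(y(t))\subset B_{kr_{i+1}}(y_{i+1})$, i.e.\ $\|y(t)-y_{i+1}\|\le\epsilon_{i+1}$. The remaining slack $\beta r_i-\epsilon_{i+1}$ goes to $\Delta u_i$. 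This $I_s$ has positive measure simply because the single fixed reference trajectory $y(\cdot)$ is continuous at $\tau_i$.

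As a result, the paper never needs a velocity bound on perturbed trajectories, so the step you identified as the main obstacle does not arise. Its $I_s$ is also automatically independent of $y_i'$ and $u_i'$.

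What your route buys is explicitness: you get a concrete $I_s$ of half-width $\beta r_i/(2M)$ and a concrete $\Delta u_i$, whereas the paper only shows that $I_s$ exists. Your Gr\"{o}nwall constant $K_u\T_p e^{K_y\T_p}$ is slightly looser than the paper's $\tfrac{K_u}{K_y}(e^{K_y\T_p}-1)$, which is immaterial. Your caveat about needing $B_{\Delta u_i}(u_i)\subset\U$, or else measuring $B_{\Delta u_i}(u_i)\cap\U$, is a point the paper passes over silently.
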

Because the dynamics and running cost are Lipschitz continuous, small perturbations in the initial state, control, and its duration induce bounded deviations in the resulting trajectory.
Starting from a vertex $y_i'$ inside the ball $B_{r_i}(y_i)$, AO-RRT can reach the next ball with nonzero probability by sampling a sufficiently close control-duration pair $(u_i', \tau_i')$ to the original pair $(u_i, \tau_i)$.
The full proof is provided in~\appref{app:aorrt-lem-propagate-proof}.

\begin{lemma}
\label{lem:terminal}
Let $y_m = (x_m, c_m)$ be the terminal state of a $\delta$-robust trajectory with total cost $c_T = c_m + \terminal_m$, where $\terminal_m = \terminal(x_m)$. Let $K_\terminal$ be the Lipschitz constant of the terminal cost function. For any $\varepsilon \in (0,1)$, define
\begin{equation}
\label{eq:lem3_r}
r_m = \min(\delta, \dfrac{\varepsilon \, c_T}{1 + K_\terminal}).
\end{equation}
For any other robust trajectory whose terminal state is
$y_m' = (x_m', c_m') \in B_{r_m}(y_m)$ with terminal cost
$\terminal_m' = \terminal(x_m')$ and total cost $c_T' = c_m' + \terminal_m'$, we always have:
\begin{equation}
c_T' \le  (1 + \varepsilon) \, c_T.
\end{equation}
\end{lemma}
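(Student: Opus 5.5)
The plan is a direct triangle-inequality estimate in the augmented space, split into a running-cost part and a terminal-cost part. Since $y_m' \in B_{r_m}(y_m)$, the augmented distance in \autoref{eq:dist_aug} gives both $\|x_m' - x_m\| \le D_y(y_m', y_m) \le r_m$ and $|c_m' - c_m| \le D_y(y_m', y_m) \le r_m$. This holds because each component of the $L_2$ combination is bounded by the whole. This is the only place the specific form of the augmented metric enters.

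First, I check that the terminal Lipschitz bound of \autoref{as:lip-terminal} is applicable, since it is only assumed on $\G$. Because the reference trajectory is $\delta$-robust, $\ball_{\delta}(x_m) \subset \G$. Because $r_m \le \delta$, the bound $\|x_m'-x_m\| \le r_m$ places $x_m' \in \ball_\delta(x_m) \subset \G$. Hence both $x_m$ and $x_m'$ lie in $\G$, and $|\terminal_m' - \terminal_m| \le K_\terminal \|x_m' - x_m\| \le K_\terminal r_m$. I expect this to be the main subtle point: the $\min$ with $\delta$ in \autoref{eq:lem3_r} exists precisely to keep $x_m'$ inside the region where the Lipschitz assumption holds. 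Without it the terminal cost could jump outside $\G$.

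Then I combine the two estimates:
\begin{equation*}
c_T' = c_m' + \terminal_m' \le c_m + r_m + \terminal_m + K_\terminal r_m = c_T + (1+K_\terminal)\, r_m .
\end{equation*}
Using $r_m \le \varepsilon c_T/(1+K_\terminal)$ yields $c_T' \le c_T + \varepsilon c_T = (1+\varepsilon)c_T$. Here I use $c_T > 0$, which follows from the strictly positive costs, so that $r_m > 0$ and the ball is nondegenerate. The argument never uses robustness of the primed trajectory beyond the location of its endpoint, so that hypothesis is only needed for consistency with how the lemma is applied in \autoref{thm:main}.
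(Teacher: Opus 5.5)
Your proposal is correct and follows essentially the same route as the paper's proof: bound $|c_m'-c_m|$ and $\|x_m'-x_m\|$ by $r_m$ using the $L_2$ augmented metric, apply the terminal-cost Lipschitz bound, and finish with $r_m \le \varepsilon c_T/(1+K_\phi)$. You also check explicitly that $r_m \le \delta$ and $\delta$-robustness place $x_m'$ in $B_\delta(x_m)\subset\mathcal{G}$, so the Lipschitz assumption (stated only on $\mathcal{G}$) actually applies; the paper leaves this implicit, and it is a worthwhile addition.
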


This lemma captures the key intuition behind AO-RRT.
Once the planner reaches a sufficiently small neighborhood of the reference terminal in the augmented space, it not only reaches the goal region, but also attains a cost close to that of the reference.
In our new setting with terminal cost, we show that by choosing the terminal radius $r_m$ in a particular way, any robust trajectory that ends inside the terminal ball $B_{r_m}(y_m)$ has a total cost within a factor $(1 + \varepsilon)$ of the reference cost $c_T$.
Similarly, the full proof of this lemma is provided in~\appref{app:aorrt-lem-terminal-proof}.

Now we prove ~\autoref{thm:main} with these lemmas.
\begin{proof}
As shown in \autoref{fig:ao_rrt_theorem1}, fix any $\delta$-robust trajectory $\xtraj_\delta$ with piece-wise constant controls $\utraj_\delta$ and total cost $\cost_T(\xtraj_\delta,\utraj_\delta)$.
Let $\ytraj_\delta$ denote its corresponding augmented trajectory with start $y_0 = (x_0,0)$ and terminal $y_m = (x_m,c_m)$.

Pick any $\varepsilon \in (0,1)$, and choose $r_m$ according to \autoref{lem:terminal}, i.e., $r_m = \min(\delta, \tfrac{\varepsilon \, c_T}{1 + K_\terminal})$.
Starting from this terminal ball $B_{r_m}(y_m)$, we place a finite sequence of balls $B_{r_0}(y_0), \dots ,B_{r_i}(y_i), \dots ,B_{r_m}(y_m)$ backward along the trajectory $\ytraj_\delta$.
Each consecutive pair corresponds to one constant-control segment, and their radii are chosen to satisfy the condition required by \autoref{lem:propagate}, i.e. $r_{i+1} = r_i (e^{K_y\T_p} + \beta) /\, k$.
Since each radius is obtained from the next one by a positive multiplicative factor, all balls have nonzero volume.

Now consider any stage $i \in \{0, \dots, m - 1\}$. By \autoref{lem:select}, if the tree contains a vertex in $B_{k r_i}(y_i)$, then in one iteration, AO-RRT selects a vertex in $B_{r_i}(y_i)$ with probability $p_i^{\text{sel}} > 0$.
Conditioned on such a selection, \autoref{lem:propagate} implies that, with probability $p_i^{\text{ext}} > 0$, the extension can reach $B_{k r_{i+1}}(y_{i+1})$.
Therefore, the probability of successfully advancing from stage $i$ to stange $i+1$ in one iteration is lower bounded by
\begin{equation}
p_i \coloneqq p_i^{\text{sel}} \cdot p_i^{\text{ext}} \;>\; 0.
\end{equation}
The induction is initialized automatically since $y_0 \in B_{k r_0}(y_0)$. Thus, the above argument applies in every stage, yielding a sequence of transitions from $B_{r_0}(y_0)$ to $B_{r_{m}}(y_{m})$, where each succeeds with probability at least $p \coloneqq \min_{i=0,\dots,m-1} p_i > 0$.

We consider each iteration as a Bernoulli trial that attempts to make progress from stage $i$ to the next stage $i+1$. To traverse the entire sequence, AO-RRT needs $m$ such successful transitions. 
Let $S_n$ denote the number of successful transitions over $n$ iterations.
Then the event “AO-RRT has not reached $B_{r_m}(y_m)$ after $n$ iterations” implies $S_n < m$.
Since each iteration has success probability at least $p_i$, the number of successful transitions can be stochastically lower bounded by a binomial random variable with parameters $(n,p)$. 
Standard tail bounds then imply that there exist constants $a, b>0$ and $n_0\in\mathbb{N}$ such that, for all $n > n_0$:
\begin{equation}
\Pr\left(
    S_n < m
\right)
\le
a e^{-b n}.
\end{equation}

If $S_n \ge m$, then the tree has successfully traversed the entire sequence of balls, and contains a terminal state $y_m' \in B_{r_m}(y_m)$.
By \autoref{lem:terminal}, the corresponding trajectory has total cost:
\begin{equation}
\cost_T(\xtraj',\utraj')
\le
(1+\varepsilon)\,\cost_T(\xtraj_\delta,\utraj_\delta).
\end{equation}
By contraposition, if the cost of the best solution $\cost_T(\xtraj_n, \utraj_n)$ still exceeds $(1+\varepsilon)\,\cost_T(\xtraj_\delta,\utraj_\delta)$, 
then AO-RRT cannot have completed all $m$ successful transitions, i.e., $S_n<m$.
Therefore, $\exists a, b > 0$, $n_0 \in \mathbb{N}$, s.t. $\forall n>n_0$:
\begin{equation}
\Pr \big(
\cost_T(\xtraj_n, \utraj_n)
>
(1 + \varepsilon)\, \cost_T(\xtraj_\delta, \utraj_\delta)
\big)
\le a e^{-b n},
\end{equation}
which proves the first part of \autoref{thm:main}.

Let $c^\ast$ denote the minimal achievable total cost over all robust trajectories, and select a robust trajectory $(\xtraj^\ast,\utraj^\ast)$ achieving this minimum, i.e.,
$\cost_T(\xtraj^\ast,\utraj^\ast)=c^\ast$.
For any $\varepsilon>0$, the argument above shows that there exist constants $a, b>0$ and $n_0\in\mathbb{N}$ such that, for all $n>n_0$:
\begin{equation}
\Pr\!\big(
\cost_T(\xtraj_n,\utraj_n)
>
(1+\varepsilon)\, c^\ast
\big)
\le a e^{-bn}.
\end{equation}
Thus, as $n \to \infty$:
\begin{equation}
\lim_{n\to\infty}
\Pr\!\left(
\cost_T(\xtraj_n,\utraj_n)
>
(1+\varepsilon)\, c^\ast
\right)
= 0,
\end{equation}
which establishes asymptotic optimality in probability~\cite{review-asym}.

Since $\sum_{n=1}^\infty a e^{-bn} < \infty$, the Borel-Cantelli lemma implies that the event $\{
\cost_T(\xtraj_n,\utraj_n) > (1+\varepsilon) \,c^\ast
\}$
can occur only finitely many times. Thus:
\begin{equation}
\Pr\!\left(
\cost_T(\xtraj_n,\utraj_n)
>
(1+\varepsilon)\, c^\ast
\text{ infinitely often}
\right)
=0.
\end{equation}
Hence, AO-RRT  with a total cost $\cost_T$ is (almost-surely) asymptotically optimal~\cite{review-asym}:
\begin{equation}
\Pr\!\left(
\lim_{n\to\infty}
\cost_T(\xtraj_n,\utraj_n)
= c^\ast
\right)=1.
\end{equation}
\end{proof}

\section{\method in Belief Space}
\label{sec:aorrt-belief}
In practical robotic systems, execution is subject to process uncertainty arising from modeling errors, perception noise, and stochastic disturbances. As a result, the system evolution is no longer deterministic, and planning only in the state space may produce trajectories that are difficult to execute reliably.

A principled solution is to perform planning in the belief space, where each belief represents a distribution over possible system states. Accordingly, rather than planning in the state-cost space $\Y = \X \times \Rplus$, \method can also plans in the belief state-cost space $\Yb = \B \times \Rplus$, where $\B$ denotes the space of probability measures over $\X$.
The asymptotic optimality of AO-RRT with terminal cost naturally carries over to $\Yb$ if the Lipschitz continuity conditions required by the analysis continue to hold in the belief space. 
We prove that, under mild assumptions on the stochastic dynamics, these conditions are satisfied when $\B$ is equipped with the Wasserstein metric.
%

\subsection{Belief-space Metric}
\label{sec:aorrt-belief-metric}

Following prior work on belief-space metrics~\cite{belief_tree, belief-distance}, we adopt the 2-Wasserstein distance (also known as earth mover's distance) as the proper distance metric in $\B$.
For simplicity, we will refer to it as Wasserstein distance.
The Wasserstein distance measures the minimum transport cost required to transform one probability distribution into another. Given a valid metric $D_\X$ in space $\X$, the Wasserstein distance is defined as:
\begin{equation}
\begin{aligned}
\label{eq:wasserstein}
W_2^2(b, b')
&=
\inf_{\gamma \in \Gamma(b, b')}
\mathbb{E} [D_\X(X, X')^2]
\\&=
\inf_{\gamma \in \Gamma(b, b')}
\int D_\X(x, x')^2 \, d\gamma(x, x'),
\end{aligned}
\end{equation}
where $\Gamma(b, b')$ denotes the set of all couplings (joint distributions) $\gamma$ whose marginal distributions are $b$ and $b'$ respectively.
Let $(X, X') \sim \gamma$ be random variables drawn from such a coupling so that $X \sim b, X' \sim b'$.
The infimum selects the optimal coupling $\gamma*$ that minimizes the expected transform cost, as shown in~\autoref{fig:wasserstein}.

\begin{figure}[!h]
    \centering
    \includegraphics[width=\linewidth]{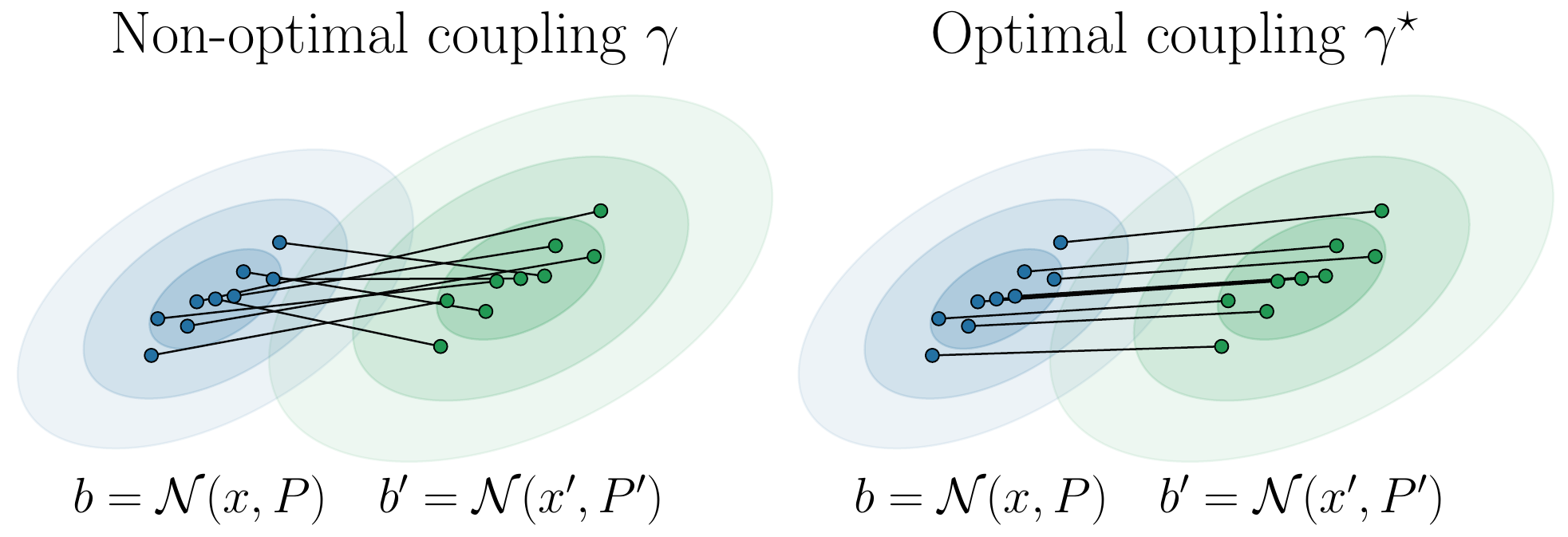}
    \caption{Illustration of Wasserstein distance between two Gaussian beliefs. Line segments denote pairings between representative locations of probabilistic mass in the state space. The optimal coupling minimizes the expected squared transport cost.}
    \label{fig:wasserstein}
\vspace{-2ex}
\end{figure}

For Gaussian beliefs, Wasserstein distance $W_2$ captures both changes in the mean and covariance of the state distribution, and admits a proper metric with closed-form expression.
Given two Gaussian beliefs $b = \mathcal{N}(x, P)$ and $b' = \mathcal{N}(x', P')$:
\begin{equation}
\label{eq:w2-gaussian}
W_2^2(b,b')
=
\|x - x'\|^2
+
\mathrm{Tr}\, \big(
P + P' - 2\,(P'^{\frac12} P P'^{\frac12})^{\frac12}
\big),
\end{equation}
where $\mathrm{Tr}(\cdot)$ is the trace of a matrix and $P^{\frac12}$ denotes the principal square root of $P$. The second term is also known as Bures metric between $P$ and $P'$.
%

\subsection{Belief-space Dynamics and Costs}
\label{sec:aorrt-belief-dyn}

Since controls are applied in piecewise-constant segments, we consider a general stochastic system of discrete-time form:
\begin{equation}
\label{eq:disc-stochastic-system}
\begin{gathered}
x_{i+1} = F(x_i, u_i) \diamond \eta_i,
\\
\eta_i \sim \mathcal{N}\big(0, Q(x_i, u_i)\big).
\end{gathered}
\end{equation}
where $F : \X \times \U \to \X$ is a deterministic transition map, $\eta_i$ is a random Gaussian noise term, and $\diamond$ specifies how noise is injected (e.g. $\diamond$ is addition $+$ in Euclidean space).
The process uncertainty may be heteroscedastic, i.e., the noise covariance $Q(x_i,u_i)$ may depend on the current state and applied control.
We approximate the belief state by a Gaussian distribution $b = \mathcal{N}(x, P)$.
Under the model in \autoref{eq:disc-stochastic-system}, the belief evolution is described by a discrete-time Markov process~\cite{belief_tree}:
\begin{equation}
\label{eq:belief_dyn}
b_{i+1} = F_b(b_i, u_i),
\end{equation}
where $F_b$ jointly propagates the mean and covariance of the belief state.
The cost function is then expressed as:
\begin{equation}
\label{eq:belief_cost}
\cost_\B(\btraj, \utraj)
=
\sum\nolimits_{i=0}^{m-1} \lengthb \big(b_i, F_b(b_i, u_i)\big)
+ \terminalb \big(b_m\big),
\end{equation}
where $\lengthb: \B \times \B \rightarrow \Rplus$ and $\terminalb: \B \rightarrow \Rplus$ are the belief-space running cost and terminal cost in discrete form, and $m$ is the number of steps in the trajectory.

We aim to show that the belief dynamics (\autoref{eq:belief_dyn}), belief-space running cost and terminal cost (\autoref{eq:belief_cost}) are Lipschitz continuous with respect to the Wasserstein metric.
These properties are sufficient to ensure that the theoretical guarantees of AO-RRT in~\autoref{sec:aorrt-belief} extend to the belief state-cost space.

Let $D_\X$ be a valid metric in state space $\X$.
With discrete-time representation, the Lipschitz conditions of the system dynamics in \autoref{as:lip-system} takes the form:
\begin{equation}
\label{eq:lip-system-F}
\begin{gathered}
D_\X\big(F(x,u),\, F(x',u)\big)
\le
K_x^F\, D_\X(x, x'),
\\
D_\X\big(F(x, u),\, F(x, u')\big)
\le
K_u^F\, \|u - u'\|,
\end{gathered}
\end{equation}
for some constants $K_x^F, K_u^F \ge 0$.

Since the system includes stochastic process uncertainty whose covariance may depend on the state and the control input, we impose two mild regularity assumptions to ensure that the noise term behaves in a Lipschitz manner.
\begin{assumption}[Lipschitz continuity of process uncertainty]
\label{as:lip-noise}
The process uncertainty is assumed to be Lipschitz continuous with Wasserstein distance. Formally, $\exists K_x^Q, K_u^Q \ge 0$ s.t. $\forall x, x' \in \X, \forall u, u' \in \U$:
\begin{equation}
\begin{gathered}
\label{eq:lip-noise-u}
W_2\bigl(\mathcal{N}\big(0, Q(x, u)\big), \mathcal{N}\big(0, Q(x', u)\big)\bigr)
\le
K_x^Q \, D_\X(x, x'), 
\\
W_2\bigl(\mathcal{N}\big(0, Q(x, u)\big), \mathcal{N}\big(0, Q(x, u')\big)\bigr)
\le
K_u^Q \, \|u-u'\|.
\end{gathered}
\end{equation}
This assumption ensures that changing the state or control does not change the process covariance arbitrarily fast.
\end{assumption}

\begin{assumption}[Lipschitz continuity of noise injection]
\label{as:lip-injection}
The system in \autoref{eq:lip-system-F} preserves Lipschitz continuity with injected noise.
Specifically, $\exists K_x^{F\eta}, K_u^{F\eta}, K_\eta^{F\eta} \ge 0$ such that $\forall x,x' \in \X$, $u \in \U$, and noise realizations $\eta, \eta'$:
\begin{equation}
\label{eq:lip-system-F2}
\begin{gathered}
D_\X\big(F(x,u)\diamond\eta, F(x',u)\diamond\eta\big)
\le
K_x^{F\eta} D_\X(x,x'),
\\
D_\X\big(F(x,u)\diamond\eta, F(x,u')\diamond\eta\big)
\le
K_u^{F\eta} \|u - u'\|,
\\
D_\X\big(F(x,u)\diamond\eta, F(x,u)\diamond\eta'\big)
\le
K_\eta^{F\eta} D_\X(\eta, \eta').
\end{gathered}
\end{equation}
This assumption guarantees that variations in the injected noise lead to proportionally bounded deviations in the resulting state.
\end{assumption}

Under these additional assumptions, we can establish that the belief dynamics and the associated cost functions inherit Lipschitz continuity with respect to the Wasserstein metric.
\begin{proposition}[]
\label{pr:belief-lip-dyn}
Consider a system with Gaussian noise that satisfies the assumptions in \autoref{eq:lip-system-F}, \autoref{eq:lip-noise-u} and \autoref{eq:lip-system-F2}. Then $\exists K_x^{F_b}, K_u^{F_b} \ge 0$ s.t. $\forall b, b' \in \B$ and $u, u' \in \U$:
\begin{equation}
\begin{gathered}
W_2\big( F_b(b, u), F_b(b', u) \big)
\le
K_x^{F_b} \, W_2(b, b'),\\
W_2\big( F_b(b, u), F_b(b, u') \big)
\le
K_u^{F_b} \, \|u - u'\|.
\end{gathered}
\end{equation}
\end{proposition}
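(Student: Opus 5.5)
The argument is a coupling argument: we build an explicit joint distribution of the two propagated beliefs and bound its expected squared transport cost. Treat $F_b(b,u)$ as the law of $X_{+}=F(X,u)\diamond\eta$, where $X\sim b$ and, conditionally on $X=x$, $\eta\sim\mathcal{N}(0,Q(x,u))$; the Gaussian $F_b$ is the moment-matched approximation of this law. Since $W_2$ is an infimum over couplings, any explicit coupling of $(X_+,X_+')$ gives an upper bound on $W_2(F_b(b,u),F_b(b',u))$.

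\textbf{State Lipschitz bound.} Fix the optimal coupling $\gamma^\ast$ of $(X,X')$ for $W_2(b,b')$. Conditioned on $(X,X')=(x,x')$, take $(\eta,\eta')$ to be the optimal coupling of $\mathcal{N}(0,Q(x,u))$ and $\mathcal{N}(0,Q(x',u))$. For Gaussians this coupling is the linear map $\eta'=Q'^{1/2}(Q'^{1/2}QQ'^{1/2})^{-1/2}Q'^{1/2}\eta$, which depends measurably on $(x,x')$, so the joint construction is well defined. Using the triangle inequality and \autoref{eq:lip-system-F2},
\begin{equation*}
D_\X(X_+,X_+')\le D_\X\big(F(X,u)\diamond\eta,F(X',u)\diamond\eta\big)+D_\X\big(F(X',u)\diamond\eta,F(X',u)\diamond\eta'\big)\le K_x^{F\eta}D_\X(X,X')+K_\eta^{F\eta}D_\X(\eta,\eta').
\end{equation*}
Apply $(a+b)^2\le 2a^2+2b^2$ and take expectations. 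The conditional expectation $\mathbb{E}[D_\X(\eta,\eta')^2\mid x,x']$ equals $W_2^2(\mathcal{N}(0,Q(x,u)),\mathcal{N}(0,Q(x',u)))$, which \autoref{eq:lip-noise-u} bounds by $(K_x^Q)^2D_\X(x,x')^2$. Integrating against $\gamma^\ast$ then gives
\begin{equation*}
W_2^2\big(F_b(b,u),F_b(b',u)\big)\le 2\big((K_x^{F\eta})^2+(K_\eta^{F\eta}K_x^Q)^2\big)W_2^2(b,b'),
\end{equation*}
so we may take $K_x^{F_b}=\sqrt{2}\sqrt{(K_x^{F\eta})^2+(K_\eta^{F\eta}K_x^Q)^2}$.

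\textbf{Control Lipschitz bound.} This case follows the same pattern. Keep the state shared, $X=X'\sim b$ (the identity coupling), and couple the noises $\mathcal{N}(0,Q(x,u))$ and $\mathcal{N}(0,Q(x,u'))$ optimally. The second and third lines of \autoref{eq:lip-system-F2}, together with the control part of \autoref{eq:lip-noise-u}, give $D_\X(X_+,X_+')\le K_u^{F\eta}\|u-u'\|+K_\eta^{F\eta}D_\X(\eta,\eta')$. Squaring and integrating as before yields $K_u^{F_b}=\sqrt2\sqrt{(K_u^{F\eta})^2+(K_\eta^{F\eta}K_u^Q)^2}$. This bound is uniform in $b$.

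\textbf{Main obstacle: the Gaussian approximation.} The coupling bounds the $W_2$ distance between the true, generally non-Gaussian, push-forward laws, whereas $F_b$ returns Gaussian beliefs. I would handle this in one of two ways. The first applies when $F_b$ is exact, for example linear $F$ with additive noise. The second is a general argument that moment matching is $W_2$-nonexpansive, but this is false in general. So the fallback is to state explicitly that $F_b(b,u)$ is identified with the propagated law, or that the Gaussian projection is itself Lipschitz in $W_2$ with some constant. The final constants then pick up that factor.
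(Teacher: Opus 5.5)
Your coupling argument is the paper's proof essentially step for step: optimal coupling of $(X,X')$, a conditionally optimal coupling of the Gaussian noises, the triangle inequality with Assumptions~\ref{as:lip-injection} and~\ref{as:lip-noise}, and $(a+b)^2\le 2a^2+2b^2$, with identical constants; your fallback of identifying $F_b(b,u)$ with the law of $F(X,u)\diamond\eta$ is exactly what the paper assumes implicitly when it writes $X^+\sim F_b(b,u)$. One correction to your aside: under a Euclidean metric, moment-matched Gaussian projection \emph{is} $W_2$-nonexpansive by the Gelbrich bound, so exact moment matching adds no extra factor; the real caveat is that the paper's updates (linearization, with $Q$ evaluated at the mean) are not moment matching, and even linear $F$ with additive noise yields a non-Gaussian mixture once $Q$ depends on $x$.
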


\begin{proposition}[]
\label{pr:belief-lip-cost}
Consider a system with Gaussian noise that satisfies assumptions in \autoref{eq:lip-system-F}, \autoref{eq:lip-noise-u} and \autoref{eq:lip-system-F2}. Then $\exists K_x^\lengthb, K_u^\lengthb, K_x^\terminalb \ge 0$ s.t. $\forall b, b' \in \B$ and $u, u' \in \U$:
\begin{equation}
\begin{gathered}
| \lengthb\big(b, F_b(b, u)\big) - \lengthb\big(b', F_b(b', u)\big) |
\le
K_x^\lengthb \, W_2(b, b'),\\
| \lengthb\big(b, F_b(b, u)\big) - \lengthb\big(b, F_b(b, u')\big) |
\le
K_u^\lengthb \, \|u - u'\|,\\
| \terminalb(b) - \terminalb(b') |
\le
K_x^\terminalb \, W_2(b, b'),
\end{gathered}
\end{equation}
if running cost function $\lengthb(\cdot,\cdot)$ is the Wasserstein distance $W_2(\cdot,\cdot)$ and terminal cost $\terminalb(\cdot)$ is the Wasserstein distance between the terminal belief and some constant belief $b_c$.
\end{proposition}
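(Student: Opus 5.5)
The plan is to reduce everything to the triangle inequality for $W_2$, using Proposition~\ref{pr:belief-lip-dyn} as the main ingredient. With $\lengthb(b,b'') = W_2(b,b'')$ and $\terminalb(b) = W_2(b, b_c)$, each of the three claimed bounds compares two Wasserstein distances. The reverse triangle inequality $|W_2(a,c) - W_2(a',c')| \le W_2(a,a') + W_2(c,c')$ then turns each difference of costs into a sum of distances between corresponding arguments.

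\textbf{Terminal cost.} This is immediate: $|W_2(b,b_c) - W_2(b',b_c)| \le W_2(b,b')$, so $K_x^\terminalb = 1$.

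\textbf{Running cost in the belief argument.} Apply the reverse triangle inequality with $a=b$, $a'=b'$, $c=F_b(b,u)$, $c'=F_b(b',u)$:
\begin{equation*}
\big|\lengthb\big(b,F_b(b,u)\big) - \lengthb\big(b',F_b(b',u)\big)\big| \le W_2(b,b') + W_2\big(F_b(b,u),F_b(b',u)\big).
\end{equation*}
The first inequality of Proposition~\ref{pr:belief-lip-dyn} bounds the second term by $K_x^{F_b} W_2(b,b')$. This gives $K_x^\lengthb = 1 + K_x^{F_b}$.

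\textbf{Running cost in the control argument.} Here the first argument $b$ is shared, so only the second term survives:
\begin{equation*}
\big|\lengthb\big(b,F_b(b,u)\big) - \lengthb\big(b,F_b(b,u')\big)\big| \le W_2\big(F_b(b,u),F_b(b,u')\big) \le K_u^{F_b}\|u-u'\|,
\end{equation*}
so $K_u^\lengthb = K_u^{F_b}$.

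\textbf{Main obstacle.} The only delicate point is that $W_2$ must be a genuine metric on $\B$, in particular finite and satisfying the triangle inequality, so that the reverse triangle inequality applies. For Gaussian beliefs this is standard, since they have finite second moments and $W_2$ is a metric on that space; I would cite the closed form \eqref{eq:w2-gaussian}, or general optimal transport theory, to justify it. All of the substantive work sits inside Proposition~\ref{pr:belief-lip-dyn}, which is taken as given here.
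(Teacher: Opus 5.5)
Your proposal is correct and matches the paper's proof step for step. The paper splits the running-cost difference through the intermediate term $W_2(b', F_b(b,u))$, which is equivalent to your two-sided reverse triangle inequality. It then invokes Proposition~\ref{pr:belief-lip-dyn}, justifies the reverse triangle inequality by citing that $W_2$ is a metric on $\B$, and obtains the same constants $K_x^\lengthb = 1 + K_x^{F_b}$, $K_u^\lengthb = K_u^{F_b}$, and $K_x^\terminalb = 1$.
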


The detailed proofs of~\autoref{pr:belief-lip-dyn} and~\autoref{pr:belief-lip-cost} are provided in~\appref{app:aorrt-belief-lip-dyn-proof} and~\appref{app:aorrt-belief-lip-cost-proof}, respectively.
These propositions imply that, as long as the underlying system dynamics $F$ satisfy the Lipschitz continuity conditions in~\autoref{eq:lip-system-F} (\autoref{as:lip-system}), the corresponding belief dynamics and cost functions remain Lipschitz continuous in the belief space when equipped with the Wasserstein metric. 
As a result, AO-RRT with terminal costs preserves asymptotic optimality when operating in the belief state-cost space.

\subsection{Belief Formulation for Common Systems}
\label{sec:aorrt-belief-prop}
We next illustrate how the proposed belief-space formulation is instantiated for two common classes of systems: 1) linear or linearized systems in Euclidean space, and 2) rigid-body systems evolving on matrix Lie groups.

\subsubsection{Linear or Linearized System}

Consider a linear or linearized system with additive Gaussian noise in Euclidean space:
\begin{equation}
\label{eq:linear-dyn-general}
\begin{gathered}
x_{i+1} = F(x_i, u_i) \diamond \eta_i = A x_i + B u_i + \eta_i, 
\\
\eta_i \sim \mathcal{N}\big(0, Q(x_i, u_i)\big),
\end{gathered}
\end{equation}
where $A$ and $B$ are constant matrices.
We represent the belief state as a Gaussian distribution $b_i = \mathcal{N}(x_i, P_i)$.
Under open-loop propagation, the belief dynamics $F_b$ updates the mean and covariance as~\cite{ccrrt}:
\begin{equation}
\label{eq:linear-belief-update}
\begin{aligned}
x_{i+1} &= A x_i + B u_i,\\
P_{i+1} &= A P_i A^\top + Q(x_i, u_i).
\end{aligned}
\end{equation}

Since the state space is Euclidean, the Wasserstein distance between Gaussian beliefs admits a closed form directly.
Therefore, the running cost $\lengthb(\cdot,\cdot)$ and terminal cost $\terminalb(\cdot)$ introduced earlier can be evaluated naturally in this setting using the Gaussian belief representation.


\subsubsection{Rigid-body System on Lie Groups}

Rigid-body configurations often evolve on a matrix Lie group $G$, such as $\SETwo$ or $\SEThree$. 
Since $G$ is not a vector space, Gaussian uncertainty is more naturally defined in its associated Lie algebra $\mathfrak{g}$, which is the tangent space at the identity and admits a linear structure. 
The exponential and logarithm maps, $\Exp : \mathfrak{g} \to G$ and $\Log : G \to \mathfrak{g}$, provide the local correspondence between the manifold and tangent space~\cite{lie_math}.
A state $x$ near a reference $x_{\text{ref}} \in G$ can be expressed as $x = x_{\text{ref}} \, \Exp(\xi)$ with $\xi \in \mathfrak{g}$.

We model the dynamics with multiplicative noise as:
\begin{equation}
\label{eq:lie-dyn-general}
\begin{gathered}
x_{i+1} = F(x_i, u_i) \diamond \eta_i = F(x_i, u_i)\,\Exp(\eta_i),
\\
\eta_i \sim \mathcal{N}\big(0, Q(x_i, u_i)\big),
\end{gathered}
\end{equation}
where $F : G \times \U \to G$ is the nominal rigid-body update. Gaussian noise $\eta_i$ is defined in the tangent space. 
A belief state is represented as a Lie-Gaussian belief $b = \mathcal{N}_L(x, P)$, where $x$ is the nominal pose and $P$ is the covariance in the tangent space. A random sample is given by:
\begin{equation}
X \sim \mathcal{N}_L(x, P) = x \,\Exp (\mathcal{N}(0, P)).
\end{equation}
Under ~\autoref{eq:lie-dyn-general}, the belief dynamics $F_b$ evolves according to~\cite{lie_math}:
\begin{equation}
\label{eq:lie-belief-update}
\begin{gathered}
x_{i+1} = F(x_i, u_i), 
\\
\xi_i = \Log(x_i^{-1}x_{i+1}), 
\\
P_{i+1} 
= \Adj_{\Exp(-\xi_i)}\,P_i\,\Adj_{\Exp(-\xi_i)}^\top
+ 
Q(x_i, u_i),
\end{gathered}
\end{equation}
where adjoint action $\Adj$ transports covariances between tangent spaces.
%


%

A common global metric for a Lie-group system is the double geodesic distance~\cite{distance_3d}:
\begin{equation}
\label{eq:dual-geodesic}
D_G(x, x')
=
\|p - p'\|
+
w_r \, \| \Log(R^{-1} R') \|,
\end{equation}
where $p,p'$ and $R,R'$ denote the positional and rotational components of $x$ and $x'$, respectively, and $w_r$ is a weight.
However, since it is not induced by a global linear structure, the corresponding Wasserstein distance does not admit a simple closed form.
To obtain a tractable Gaussian Wasserstein distance on a Lie group between beliefs $b=\mathcal{N}_L(x, P)$ and $b'=\mathcal{N}_L(x', P')$, we linearize both beliefs in a common tangent space. A natural choice is to use one of the means $x$ as the reference point.
In this frame, the relative means and covariances are:
\begin{equation}
\begin{gathered}
\hat x = I,
\quad
\hat x' = x^{-1}x',
\\
\hat P = P,
\quad
\hat P' = \Adj_{x^{-1}x'}\,P'\,\Adj_{x^{-1}x'}^\top.
\end{gathered}
\end{equation}
This yields a local metric:
\begin{equation}
\label{eq:lie-l2-approx}
D_G(x,x') = \| \Log(\hat x) - \Log(\hat x') \| = \| \Log(x^{-1} x') \|,
\end{equation}
which matches the true geodesic up to second order near the reference state $x$. Under this linearization, the two beliefs become Gaussian in the vector space $\mathfrak{g}$, namely $\mathcal{N}(\hat x, \hat P)$ and $\mathcal{N}(\hat x', \hat P')$.
We then approximate the Wasserstein distance by the closed-form Gaussian expression in the same tangent space:
\begin{equation}
\label{eq:lie-w2-approx}
W_2^2(b,b')
=
\|\Log(\hat x')\|^2
+
\mathrm{Tr} \big(
P + \hat P' - 2(\hat P'^{\frac12} P\, \hat P'^{\frac12})^{\frac12}
\big).
\end{equation}

In summary, rigid-body systems on $G$ admit intrinsic belief propagation via \autoref{eq:lie-belief-update} and a local Gaussian Wasserstein metric approximation \autoref{eq:lie-w2-approx}. 
With the required Lipschitz conditions satisfied, these systems can fit directly into the AO-RRT belief-space framework.

\section{Goal-reaching Success Lower Bound}
\label{sec:aorrt-bound}
As discussed earlier, in belief-space kinodynamic planning, the system state is modeled as a random variable rather than a deterministic quantity.
This allows the planner to reason explicitly about uncertainty, enabling chance-constrained formulations~\cite{ccrrt, belief_tree} that reject states with high collision risk and terminal states that fail to reach the goal region with sufficiently high probability:
\begin{equation}
\label{eq:chance_constrain}
\begin{gathered}
    \Pr\!\left( x \in \Xfree \right) < p_\free, \quad
    \Pr\!\left( x_\text{end} \in \G \right) < p_\goal,
\end{gathered}
\end{equation}
where $p_\free$ and $p_\goal$ denote the minimum required probabilities of collision-free validity and goal reaching, respectively.

In practice, however, identifying appropriate threshold values across different environments and tasks is not trivial.
For collision avoidance, a conservative threshold (high $p_\free$) is often acceptable, since many alternative actions may still remain available during sampling.
By contrast, $p_\goal$ can be difficult to specify a priori for goal reaching. If it is set too high, the planner may fail to find a feasible solution, whereas if it is too low, it may accept solutions with weak goal-reaching probability. In fact, as shown in~\autoref{sec:experiments}, optimizing only the running cost may lead the planner to favor paths with low goal-reaching success.
Ideally, rather than enforcing a hard threshold, the planner should use the available time budget to continuously improve the goal-reaching probability. 

In this work, instead of imposing a hard goal-reaching threshold, we derive a lower bound on the probability of reaching the goal region using the Wasserstein distance between the terminal belief state and the goal. 
This bound can be incorporated as a terminal cost in the optimization objective, allowing AO-RRT to favor trajectories with higher goal-reaching success without requiring manual threshold tuning.

\subsection{Lower Bound on Goal-reaching Probability}
\label{sec:aorrt-belief-success}
To incorporate goal-reaching probability into the planning objective, we relate it to a geometric quantity in belief space.
Specifically, the probability of reaching a ball-shaped or ellipsoidal goal region at the goal state $g$ can be lower-bounded using the Wasserstein distance between the terminal belief state $b$ and the Dirac measure $\nu_g$ at $g$.
Intuitively, the Dirac measure $\nu_g$ represents a deterministic belief concentrated entirely at the single state $g$.

We now state the formal theorem and provide its proof.
\begin{theorem}[Goal-reaching Lower Bound]
\label{thm:lower-bound}
Let $b$ be a probability measure on $\X$ with finite second moment, and let $\nu_g$ denote the Dirac measure at a goal state $g \in \X$. For any radius $r > 0$, define the goal region as the ball $\G = B_r(g)$. If $X \sim b$, then the probability of reaching the goal region $\G$ satisfies
\begin{equation}
\label{eq:w2-success-bound}
\Pr\left(X \in \G\right)
\ge
1 - \frac{W_2^2 \left(b, \nu_g\right)}{r^2}.
\end{equation}
\end{theorem}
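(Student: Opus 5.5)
The plan is to reduce the Wasserstein distance to a Dirac measure to a plain second moment, and then apply Markov's inequality.

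\textbf{Step 1: compute the distance to the Dirac measure.} The set of couplings $\Gamma(b,\nu_g)$ contains exactly one element, the product measure $b\otimes\nu_g$. Any coupling $\gamma$ must have second marginal $\nu_g$, so it is concentrated on $\X\times\{g\}$, and its first marginal then forces $\gamma=b\otimes\delta_g$. The infimum in \autoref{eq:wasserstein} is therefore attained trivially, and
\begin{equation}
W_2^2(b,\nu_g)=\int D_\X(x,g)^2\,db(x)=\mathbb{E}\big[D_\X(X,g)^2\big],
\end{equation}
which is finite by the finite-second-moment assumption on $b$. Here $D_\X$ is the same metric used to define the ball $B_r(g)$ (the Euclidean norm in the preliminaries). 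I will state this consistency explicitly.

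\textbf{Step 2: bound the complement event.} Since $\G=B_r(g)$ is a closed ball, $X\notin\G$ exactly when $D_\X(X,g)>r$. This event is contained in $\{D_\X(X,g)^2\ge r^2\}$. Markov's inequality applied to the nonnegative random variable $D_\X(X,g)^2$ gives
\begin{equation}
\Pr(X\notin\G)\le \Pr\big(D_\X(X,g)^2\ge r^2\big)\le \frac{\mathbb{E}[D_\X(X,g)^2]}{r^2}=\frac{W_2^2(b,\nu_g)}{r^2}.
\end{equation}
Taking complements yields \autoref{eq:w2-success-bound}. If the right-hand side is negative the bound is vacuous, but it still holds.

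\textbf{Step 3: Gaussian and ellipsoidal cases.} For a Gaussian belief $b=\mathcal{N}(x,P)$, the Dirac measure is the degenerate Gaussian with zero covariance. Then \autoref{eq:w2-gaussian} gives $W_2^2=\|x-g\|^2+\mathrm{Tr}(P)$, which matches $\mathbb{E}\|X-g\|^2$ and serves as a sanity check. The ellipsoidal goal mentioned in the text follows by a linear change of variables that maps the ellipsoid to a ball, with the metric taken to be the correspondingly weighted norm.

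\textbf{Main obstacle.} The only delicate point is Step 1: justifying that the coupling is unique and that the general definition reduces to the expectation. This needs a short measure-theoretic argument: a coupling $\gamma$ assigns zero mass to $\X\times(\X\setminus\{g\})$, so $\gamma(A\times\X)=\gamma(A\times\{g\})=b(A)$ for every measurable $A$. Everything after that is a one-line Markov bound.
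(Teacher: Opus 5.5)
Your proposal is correct and follows essentially the same route as the paper: you reduce $W_2^2(b,\nu_g)$ to $\mathbb{E}\|X-g\|^2$ because every coupling with a Dirac measure is forced to be $b\otimes\nu_g$, and then apply Markov's inequality to the complement event $\{\|X-g\|>r\}$. Your explicit uniqueness-of-coupling argument is a slightly more careful version of the paper's remark that $v=g$ almost surely under every coupling, and the Gaussian and ellipsoidal remarks match the paper's two corollaries.
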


\begin{proof}
Let $X \sim b$ and define the nonnegative random variable $J := \|X - g\|^2$. By the definition of $\G$,
\begin{equation}
\begin{aligned}
\Pr\left( X \in \G \right)
&=
\Pr\left( \|X - g\| \le r \right)\\
&=
1 - \Pr\left( \|X - g\| > r \right)\\
&=
1 - \Pr\left( J > r^2 \right).
\end{aligned}
\end{equation}
Applying Markov's inequality to $J$ gives:
\begin{equation}
\label{eq:markov-ineq}
\Pr\left( J > r^2 \right)
\le
\Pr\left( J \ge r^2 \right)
\le
\frac{\mathbb{E}[J]}{r^2}
=
\frac{\mathbb{E}[\|X - g\|^2]}{r^2}.
\end{equation}

We now relate $\mathbb{E}[\|X - g\|^2]$ to the Wasserstein distance between $b$ and $\nu_g$:
\begin{equation}
\label{eq:w2-def}
W_2^2(b,\nu_g)
=
\inf_{\gamma \in \Gamma(b, \nu_g)}
\int \|x - v\|^2 \, d\gamma(x, v).
\end{equation}
Since $\nu_g$ is a Dirac measure at $g$, we have $v = g$ almost surely under every coupling. Thus, the integral in \autoref{eq:w2-def} reduces to:
\begin{equation}
\begin{aligned}
\label{eq:w2-simplify}
\inf_{\gamma \in \Gamma(b, \nu_g)}
\int \|x - v\|^2 \, d\gamma(x, v)
&= \int \|x - g\|^2 \, db(x)\\
&= \mathbb{E}[\|X-g\|^2].
\end{aligned}
\end{equation}

Therefore,
\begin{equation}
\begin{aligned}
\Pr\left( X \in \G \right)
&= 1 - \Pr\left( J > r^2 \right)
\\&\ge 1 - \frac{\mathbb{E}[\|X - g\|^2]}{r^2}
\\&= 1 - \frac{W_2^2\left(b, \nu_g \right)}{r^2}.
\end{aligned}
\end{equation}
\end{proof}

If the belief state $b$ is Gaussian, $b = \mathcal{N}(x,P)$, then its squared Wasserstein distance to the Dirac measure $\nu_g$ admits the closed form:
\begin{equation}
\label{eq:terminal_gaussian}
W_2^2( b, \nu_g )
= \|x - g\|^2 + \mathrm{tr}(P).
\end{equation}
The following corollary is immediate from \autoref{thm:lower-bound}.
\begin{corollary}[]
\label{cor:w2-gaussian}
If the belief state $b$ is Gaussian, i.e., $X \sim \mathcal{N}(x,P)$, and $\G = B_r(g)$, the probability of reaching the goal region satisfies
\begin{equation}
\label{eq:gaussian-success-bound}
\Pr \left(X \in \G \right)
\ge
1 -
\frac{\|x - g\|^2 + \mathrm{tr}(P)}{r^2}.
\end{equation}
\end{corollary}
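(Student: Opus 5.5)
The corollary follows by specializing \autoref{thm:lower-bound} to a Gaussian belief. The only real work is verifying the closed form in \autoref{eq:terminal_gaussian}. A Gaussian $b=\mathcal{N}(x,P)$ has finite second moment, since $\mathbb{E}\|X\|^2 = \|x\|^2 + \mathrm{tr}(P) < \infty$. The hypothesis of \autoref{thm:lower-bound} therefore holds with $\G = B_r(g)$, and
\begin{equation*}
\Pr(X\in\G) \ge 1 - \frac{W_2^2(b,\nu_g)}{r^2}.
\end{equation*}
It remains to evaluate $W_2^2(b,\nu_g)$.

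For this I would reuse the step \autoref{eq:w2-simplify} from the proof of \autoref{thm:lower-bound}. The only coupling of $b$ with a Dirac measure is the product $b\otimes\nu_g$, so $W_2^2(b,\nu_g)=\mathbb{E}\|X-g\|^2$. Writing $X = x + (X-x)$ with $\mathbb{E}[X-x]=0$, the bias--variance decomposition gives
\begin{equation*}
\mathbb{E}\|X-g\|^2 = \|x-g\|^2 + 2(x-g)^\top\mathbb{E}[X-x] + \mathbb{E}\|X-x\|^2 = \|x-g\|^2 + \mathrm{tr}(P).
\end{equation*}
The cross term vanishes, and the last term equals $\mathrm{tr}(\mathbb{E}[(X-x)(X-x)^\top])=\mathrm{tr}(P)$.

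As a consistency check, this agrees with the closed form \autoref{eq:w2-gaussian} in the degenerate limit $P'=0$, where the Bures term reduces to $\mathrm{tr}(P)$. I would not rely on that limit, because \autoref{eq:w2-gaussian} was stated for positive definite covariances; the direct computation avoids the issue. Substituting the value of $W_2^2(b,\nu_g)$ into the bound from \autoref{thm:lower-bound} yields \autoref{eq:gaussian-success-bound}.

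There is no serious obstacle. The one point needing care is that the Euclidean norm is used here, so the Gaussian must be one on a vector space. In the Lie-group case one would instead work in the linearized tangent-space approximation of \autoref{sec:aorrt-belief-prop}, and the statement would then hold only in that approximate sense.
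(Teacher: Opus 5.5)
Your proposal is correct and follows the paper's route: the paper substitutes the closed form $W_2^2(b,\nu_g)=\|x-g\|^2+\mathrm{tr}(P)$ into \autoref{thm:lower-bound} and calls the corollary immediate. Your added steps — the finite-second-moment check, the observation that the only coupling with $\nu_g$ is the product coupling, and the bias--variance computation — supply the justification for that closed form, which the paper states without derivation.
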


Note that~\autoref{eq:gaussian-success-bound} contains the distance between the belief mean and the goal, $\|x - g\|$. This motivates using only the state distance to the goal, $\|x - g\|$, without the uncertainty term, as a simple surrogate in practice.
We evaluate this empirically in~\autoref{sec:experiments}.
However, we emphasize that $\|x - g\|$ alone does not in general induce a valid lower bound on $\Pr \left(X \in \G \right)$, and should only be viewed as a computationally convenient heuristic.
A belief with mean close to $g$ but large covariance may still assign little probability mass to the goal region.

A parallel result applies when the goal region is an ellipsoid.
Let $E$ be a symmetric positive definite matrix and define the squared $E$-norm:
\begin{equation}
\|q\|_E^2 := q^\top E^{-1} q.
\end{equation}
For $r > 0$, we define an ellipsoidal goal region as:
\begin{equation}
\G_E
=
\{\, (x - g)^\top E^{-1} (x-g) \le r^2 \,\}
=
\{\, \|x - g\|_E \le r \,\}.
\end{equation}
We then define the weighted Wasserstein distance to $\nu_g$:
\begin{equation}
\label{eq:w2q-def}
W_{2,E}^2(b,\nu_g)
:=
\inf_{\gamma \in \Gamma(b,\nu_g)}
\int \|x - v\|_E^2 \, d\gamma(x,v).
\end{equation}
The next corollary follows the same argument as \autoref{thm:lower-bound}, replacing the Euclidean norm by the weighted norm $\|\cdot\|_E$.
\begin{corollary}
\label{cor:w2-ellipsoid}
If $X \sim b$, the probability of $X$ lying in an ellipsoidal goal region $\G_E$ is lower-bounded by:
\begin{equation}
\label{eq:w2-ellipsoid-bound}
\Pr \left(X \in \G_E\right)
\ge
1 - \frac{W_{2,E}^2(b,\nu_g)}{r^2}.
\end{equation}
\end{corollary}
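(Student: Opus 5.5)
We mirror the proof of \autoref{thm:lower-bound}, replacing the Euclidean norm $\|\cdot\|$ by the weighted norm $\|\cdot\|_E$ throughout. Let $X \sim b$ and define the nonnegative random variable $J_E := \|X - g\|_E^2 = (X-g)^\top E^{-1}(X-g)$. By the definition of $\G_E$, the event $\{X \in \G_E\}$ is exactly $\{J_E \le r^2\}$. Hence
\begin{equation*}
\Pr(X \in \G_E) = 1 - \Pr(J_E > r^2).
\end{equation*}
Since $E$ is symmetric positive definite, $J_E$ is a well-defined nonnegative measurable function of $X$, so Markov's inequality applies:
\begin{equation*}
\Pr(J_E > r^2) \le \frac{\mathbb{E}[J_E]}{r^2}.
\end{equation*}
If $\mathbb{E}[J_E]=\infty$, the claimed bound is trivial. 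Otherwise it is finite, and it is in fact finite whenever $b$ has a finite second moment, because $J_E \le \lambda_{\max}(E^{-1})\|X-g\|^2$.

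Next, we identify $\mathbb{E}[J_E]$ with the weighted Wasserstein distance in \autoref{eq:w2q-def}. The second marginal of every coupling $\gamma \in \Gamma(b,\nu_g)$ is the Dirac measure $\nu_g$. Therefore $\gamma$ is concentrated on $\X \times \{g\}$, and $\Gamma(b,\nu_g)$ consists of the single product coupling $b \otimes \nu_g$. The infimum is thus taken over one element, and
\begin{equation*}
W_{2,E}^2(b,\nu_g) = \int \|x-g\|_E^2 \, db(x) = \mathbb{E}[J_E].
\end{equation*}
Combining this with the Markov bound gives
\begin{equation*}
\Pr(X \in \G_E) \ge 1 - \frac{W_{2,E}^2(b,\nu_g)}{r^2},
\end{equation*}
which is \autoref{eq:w2-ellipsoid-bound}.

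The only step needing care is the uniqueness of the coupling with a Dirac marginal. To justify it, note that $\gamma(\X \times (\X\setminus\{g\})) = \nu_g(\X\setminus\{g\}) = 0$, so $\gamma$ is supported on $\X \times \{g\}$. Its first marginal then forces $\gamma(A \times \{g\}) = b(A)$ for every measurable $A$, which determines $\gamma$ as $b \otimes \nu_g$. An alternative route avoids this step: substitute $\tilde X = E^{-1/2}(X-g)$. Then $\G_E$ becomes the Euclidean ball $B_r(0)$, and $W_{2,E}$ becomes the standard $W_2$ between the pushforward of $b$ and $\nu_0$. Applying \autoref{thm:lower-bound} directly to these transformed quantities gives the same conclusion.
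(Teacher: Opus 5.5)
Your proof is correct and follows the paper's route. The paper simply says the corollary follows by rerunning the proof of \autoref{thm:lower-bound} with $\|\cdot\|_E$ in place of $\|\cdot\|$: apply Markov's inequality to the squared distance, then collapse the infimum over couplings with the Dirac marginal $\nu_g$, which is exactly what you do (your justification of the unique coupling and your finite-moment remark just make explicit what the paper leaves implicit).
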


\subsection{Goal-reaching Success Lower Bound as Terminal Cost}
\label{sec:aorrt-terminal-cost}
Since Wasserstein distance defines a Lipschitz-continuous terminal cost in $\B$ (\autoref{pr:belief-lip-cost}), the lower bound in \autoref{thm:lower-bound} can be incorporated directly into the planning objective.
Rather than enforcing a hard goal-reaching threshold, we use the Wasserstein distance from the final belief $b_m$ to the Dirac measure of the goal $\nu_g$ as a terminal cost. For simplicity, we call it Wasserstein distance to the goal:
\begin{equation}
\psi(b_m) = w_g \, W_2(b_m, \nu_g),
\end{equation}
where $w_g$ is a weight.
By \autoref{thm:lower-bound}, reducing $W_2(b_m,\nu_g)$ directly improves a lower bound on the probability of reaching the goal region.
For Gaussian beliefs, \autoref{cor:w2-gaussian} shows that this terminal cost jointly penalizes mean deviation from the goal and terminal uncertainty.
Combined with the running cost, this yields the belief-space objective:
\begin{equation}
\cost_\B(\btraj, \utraj)
=
\sum_{i=0}^{m-1} W_2(b_i, F_b(b_i, u_i))
+ w_g \, W_2(b_m, \nu_g).
\end{equation}
We provide two examples to highlight the role of the terminal cost in belief-space planning.

\begin{figure}[!ht]
    \centering
    \includegraphics[width=0.9\linewidth]{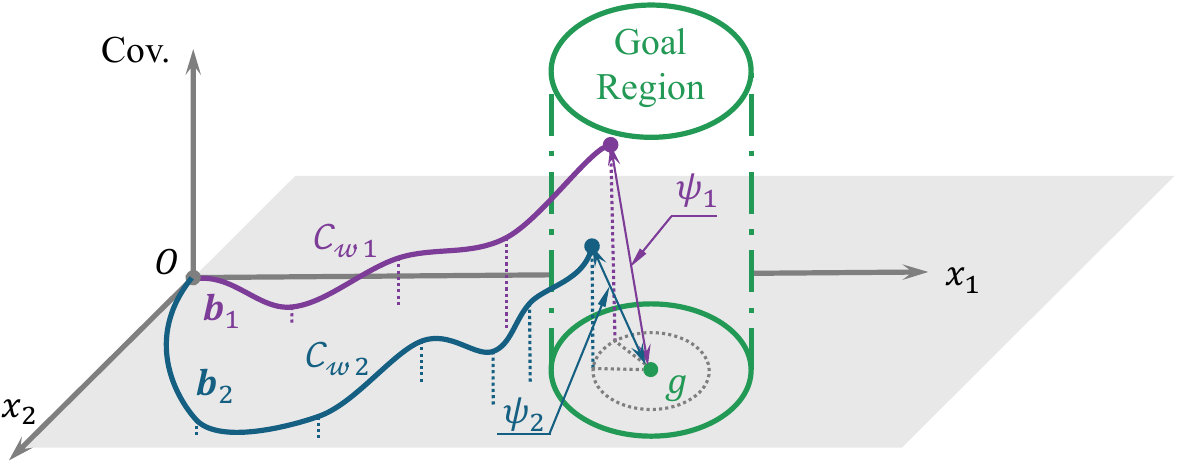}
    \caption{Example where running cost alone is insufficient. Although $\btraj_2$ has a larger running cost due to its longest state-space path, it reaches the goal with a smaller Wasserstein distance to $\nu_g$, giving it the highest lower bound on goal-reaching probability.}
    \label{fig:terminal1}
\vspace{-2ex}
\end{figure}

\begin{figure}[!ht]
    \centering
    \includegraphics[width=0.9\linewidth]{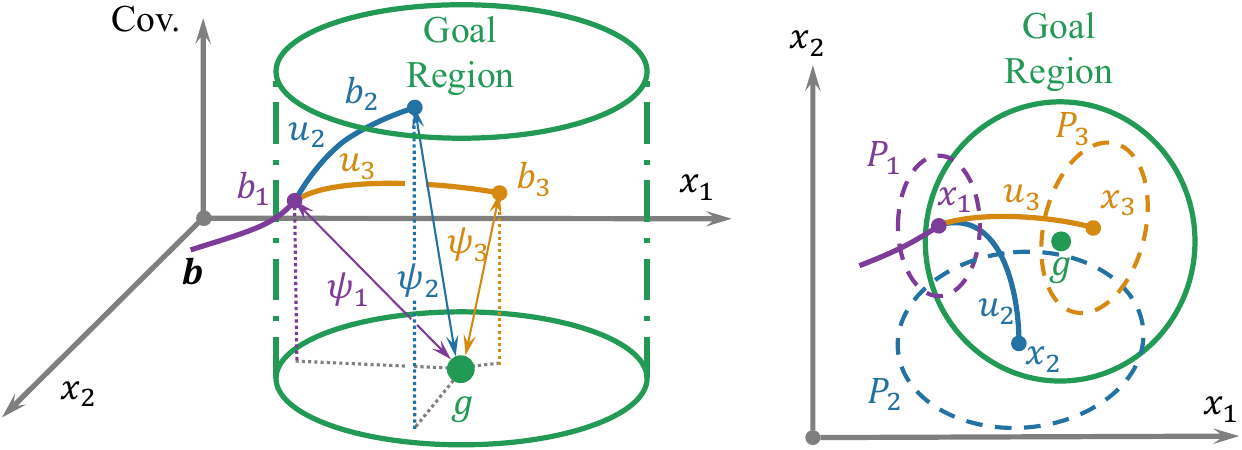}
    \caption{Example where terminal cost is necessary. Although extending the belief $b_1$ always increases accumulated running cost, a successor such as $b_3$ can still reduce the terminal Wasserstein distance to $\nu_g$ and improve the lower bound on goal-reaching probability.}
    \label{fig:terminal2}
\vspace{-2ex}
\end{figure}

The first example is shown in \autoref{fig:terminal1}.
The vertical axis represents covariance magnitude, quantified by its trace, while the horizontal plane is the state space.
With process uncertainty, each propagation step increases uncertainty.
Consider two trajectories that start from the same belief and end at beliefs whose means are equally distant from $g$.
Although $\btraj_2$ has a larger running cost than that of $\btraj_1$, it ends with a smaller terminal covariance.
Therefore, $\btraj_2$ has the highest lower bound on goal-reaching probability (\autoref{eq:gaussian-success-bound}).
This shows that running cost may loosely correlate with terminal quality, but it does not optimize it directly and can rank trajectories incorrectly.

A second example is shown in \autoref{fig:terminal2}.
Suppose the tree reaches a belief $b_1 \sim \mathcal{N}(x_1, P_1)$ whose mean is already inside the goal region.
In standard AO-RRT with only running costs, extending from $b_1$ is unattractive because every successor increases accumulated cost.
However, terminal quality can still improve.
A noisy action to $b_2 \sim \mathcal{N}(x_2,P_2)$ worsens the terminal Wasserstein distance, even though $x_2$ is closer to $g$ in state space.
But a precise action to $b_3 \sim \mathcal{N}(x_3,P_3)$ moves the mean closer to $g$ while adding little uncertainty, giving $b_3$ a smaller terminal Wasserstein distance than $b_1$.
Thus, although both successors have a higher running cost, $b_3$ is better in terms of goal-reaching probability.
Without a terminal term, AO-RRT would not favor such beneficial extensions.

Together, these examples show that the Wasserstein running cost in belief space is only a loose surrogate for goal-reaching probability and may even conflict with it.
This motivates augmenting the running-cost objective with an explicit terminal Wasserstein cost.
The running cost remains important for guiding the search toward efficient trajectories that avoid unnecessary drift in the mean or covariance.
The terminal cost complements it by improving a lower bound on the goal-reaching probability and correctly ranking terminal beliefs.
%


\section{Learning Dynamics and Uncertainty}
\label{sec:modeling}
For many robotic systems, the dynamics and process uncertainty are difficult to model in closed form~\cite{ltamp, poking}. Even when an approximate analytical model is available, its parameters may still be difficult to identify accurately~\cite{activepusher, panda_param2019}.
In such settings, we train a probabilistic forward model from interaction data and use it to propagate beliefs during planning.

Rather than directly predicting the next state, we learn a local stochastic transition model.
Specifically, let $\oplus$ denote a system-dependent composition operator such that
\begin{equation}
\label{eq:learn-transition}
x_{i+1} = x_i \oplus \Delta x_i,
\end{equation}
where $\Delta x_i$ denotes the local transition variable associated with applying control $u_i$ at state $x_i$.
We model this local transition as a Gaussian random variable:
\begin{equation}
    \Delta X_i \sim \mathcal{N}(\Delta x_i, Q_i),
    \label{eq:local-transition-gaussian}
\end{equation}
where $\Delta x_i$ is the mean local transition and $Q_i$ is the associated process covariance.
In this work, for simplicity, we use a diagonal covariance model for learning process uncertainty:
\begin{equation}
    Q_i = \mathrm{Diag}(\sigma_i^2),
    \label{eq:diag-qi}
\end{equation}
where $\sigma_i^2$ denotes the variance vector.
Thus, the learned model aims to predict both the mean local transition $\Delta x_i$ and its associated process covariance parameterized by $\sigma_i^2$, which are then combined with the current belief $(x_i, P_i)$ to obtain the next belief state $(x_{i+1}, P_{i+1})$.

\begin{figure}[!t]
    \centering
    \includegraphics[width=\linewidth]{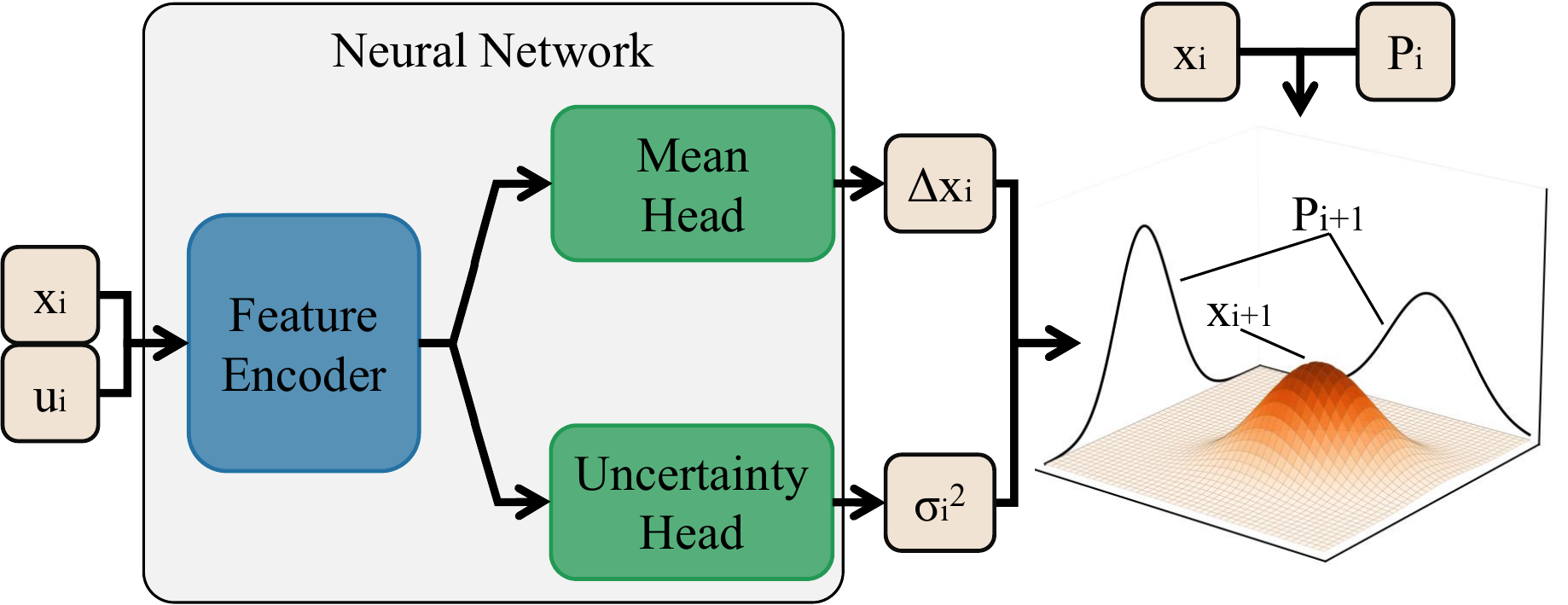}
    \caption{Neural network model for learning dynamics and uncertainty. The model predicts a mean local transition $\Delta x_i$ and a diagonal process covariance parameterized by $\sigma_i^2$. Combined with the current belief $(x_i, P_i)$, these predictions are used to propagate and obtain the next belief $(x_{i+1}, P_{i+1})$.}
    \label{fig:model}
\vspace{-2ex}
\end{figure}

The specific form of $\oplus$ and the corresponding covariance propagation depends on the geometric of the state space.
For linear Euclidean systems, as in~\autoref{eq:linear-belief-update}, $\oplus$ reduces to vector addition, so the state update becomes
\begin{equation}
    x_{i+1} = x_i + \Delta x_i.
\end{equation}
Its covariance is then propagated with first-order linearization:
\begin{equation}
\begin{gathered}
    P_{i+1} = A_i P_i A_i^\top + Q_i, \quad
    A_i = I + \frac{\partial \Delta x_i}{\partial x_i},
\end{gathered}
\label{eq:learned-cov-linear}
\end{equation}
where $I$ is the identity matrix and $A_i$ is the Jacobian of the mean transition with respect to the current state.
For Lie-group systems, $\oplus$ corresponds to composition through the exponential map, so the mean transition becomes
\begin{equation}
\label{eq:learned-mean-lie}
    x_{i+1} = x_i \, \Exp(\Delta x_i).
\end{equation}
In this case, the covariance is propagated in the tangent space:
\begin{equation}
\label{eq:learned-cov-lie}
    P_{i+1}
    =
    \Adj_{\Exp(-\Delta x_i)}
    \, P_i \,
    \Adj_{\Exp(-\Delta x_i)}^\top
    +
    Q_i,
\end{equation}
which is consistent with the belief update in \autoref{eq:lie-belief-update}.

The overall neural network structure is shown in \autoref{fig:model}.
Given the current state $x_i$ and control $u_i$, the network first extracts a shared feature representation and then branches into a mean head and an uncertainty head.
The mean head predicts the mean local transition $\Delta x_i$, while the uncertainty head provides the corresponding variance $\sigma_i^2$.
For numerical stability, the uncertainty head predicts the log-variance in practice.
In this work, the forward model is implemented as a simple multilayer perceptron (MLP).

The model is trained in a supervised manner using tuples $(x_i, u_i, x_{i+1})$.
Several probabilistic regression losses can be used to jointly learn dynamics and uncertainty, such as evidential regression~\cite{evidential} and $\beta$-NLL~\cite{beta-nll}.
In this work, we use the standard Gaussian negative log-likelihood (NLL) loss:
\begin{equation} 
\label{eq:nll-loss} 
\mathcal{L}_{\mathrm{NLL}} 
= \frac{1}{2} \left( 
    \frac{(\Delta x_i^{\mathrm{gt}} - \Delta x_i)^2}{\sigma_i^2} 
    + \log \sigma_i^2
\right), 
\end{equation}
where $\Delta x_i^{\mathrm{gt}}$ denotes the ground-truth local transition.
This objective encourages accurate transition prediction while calibrating the predicted uncertainty according to the observed variability in the data.
The learned uncertainty primarily captures aleatoric effects arising from stochasticity, model mismatch, and unmodeled interactions.

\section{Experiments}
\label{sec:experiments}

In this section, we evaluate three aspects of the proposed framework:
(1) whether the terminal cost allows the planner to encode goal desirability,
(2) whether it improves goal-reaching probability under uncertainty, and
(3) whether learned belief dynamics can be integrated into AO planners.
The experiments are conducted on three tasks with increasing complexity: Flappy Bird, Car Parking, and Planar Pushing.
The last two tasks are run in Mujoco simulation~\cite{mujoco}, while the Planar Pushing task is also evaluated in the real world.
All planning and learning experiments are run on a computer with an i7-14700KF CPU, 32GB of RAM and a RTX 4070 Ti Super GPU.

\subsection{Flappy Bird Task}
\label{sec:experiment-flappy}

We begin with a Flappy Bird benchmark adapted from~\cite{aox} to isolate the effect of terminal cost in a deterministic setting.
The bird moves from left to right with constant horizontal speed and can optionally apply an upward thrust.
Its state is defined as
$x = [p_x,\ p_y,\ v_y]^\top$,
where $p_x$ and $p_y$ denote the horizontal and vertical positions in pixels (px), and $v_y$ denotes the vertical velocity.
The control is binary, $u \in \{0,1\}$, indicating whether an upward thrust is applied.
The frame-based dynamics are
\begin{equation}
\label{eq:flappy_dynamics}
\begin{bmatrix}
p_{x,i+1} \\
p_{y,i+1} \\
v_{y,i+1}
\end{bmatrix}
=
\begin{bmatrix}
p_{x,i} \\
p_{y,i} \\
v_{y,i}
\end{bmatrix}
+
\begin{bmatrix}
v_x \\
v_{y,i} \\
g_p + u_i f_p
\end{bmatrix}.
\end{equation}
where $v_x = 5$ px/frame, $g_p = -1.0$ px/frame$^2$, and $f_p = 9$ px/frame$^2$.
The screen size is $720\times480$ pixels, and $v_y$ is clipped to $[-10,10]$ px/frame.

A trajectory is feasible if it stays within the screen, avoids all pillars, and reaches the goal region after the last pillar, as shown in \autoref{fig:flappy_demo}.
We define the running cost with accumulated obstacle clearance, i.e., the distance from the bird to the nearest pillar, so that safer trajectories are preferred.
To encode goal desirability, we define a terminal cost that penalizes deviation from the center of the final gap.
This corresponds to the setting where the next planning starts from the current terminal state, so ending near the center provides a more desired initial condition.

We compare \method with vanilla AO-RRT~\cite{aox} and SST~\cite{sst}, without terminal costs. All methods  are implemented in OMPL~\cite{ompl}.
We denote the three methods as Base (AO-RRT), Base (SST), and \method (AO-RRT, $w_g$), where $w_g$ is the terminal weight. Note that SST is not theoretically justified for incorporating a terminal cost.
We generate 20 random pillar layouts and run each planner 5 times per environment.
The results are averaged across environments, with standard deviations computed over repeated runs.

As shown in \autoref{fig:flappy_cost}, all planners improve solution quality as more planning time is provided.
However, \method consistently returns trajectories whose terminal states lie closer to the center of the final gap.
This comes with only a modest increase in running cost, which reflects the intended trade-off. 

\begin{figure}[!t]
    \centering
    \includegraphics[width=\linewidth]{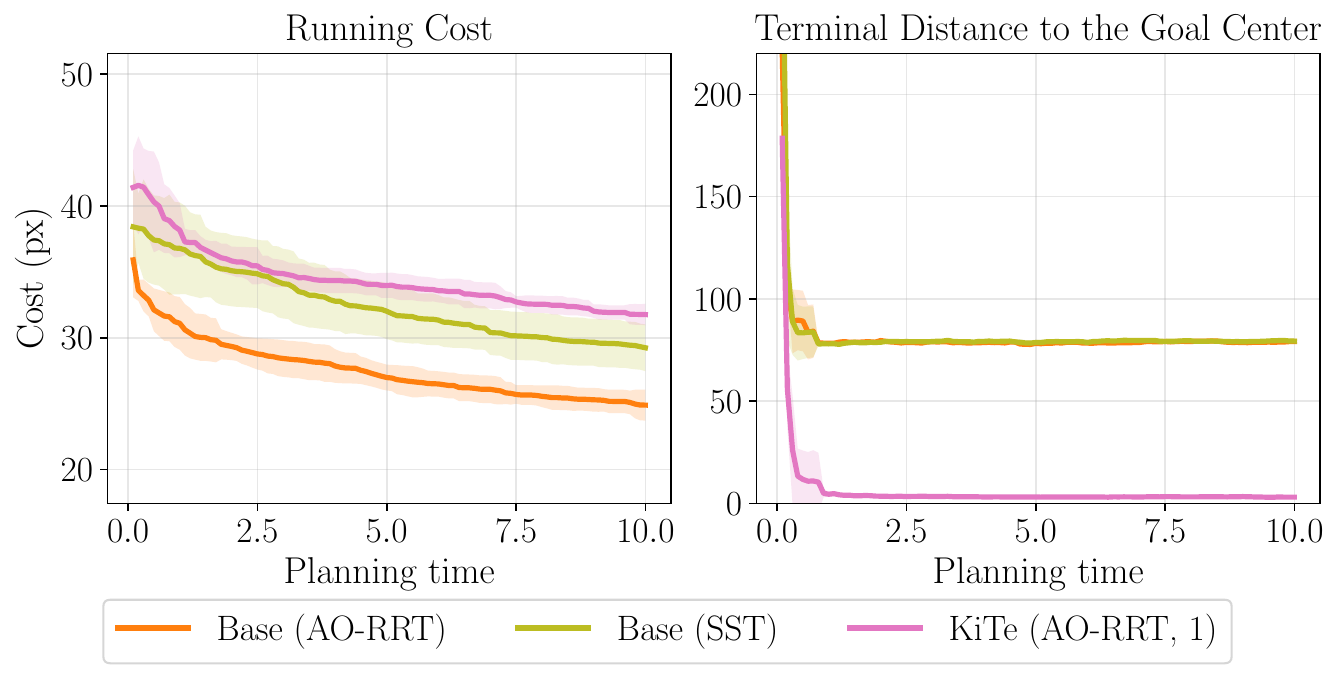}
    \vspace{-4ex}
    \caption{Comparison of running cost and terminal distance to the goal center. \method reduces terminal distance to the goal center while incurring a modest increase in running cost.}
    \label{fig:flappy_cost}
\vspace{-3ex}
\end{figure}
\begin{figure}[!t]
    \centering
    \includegraphics[width=\linewidth]{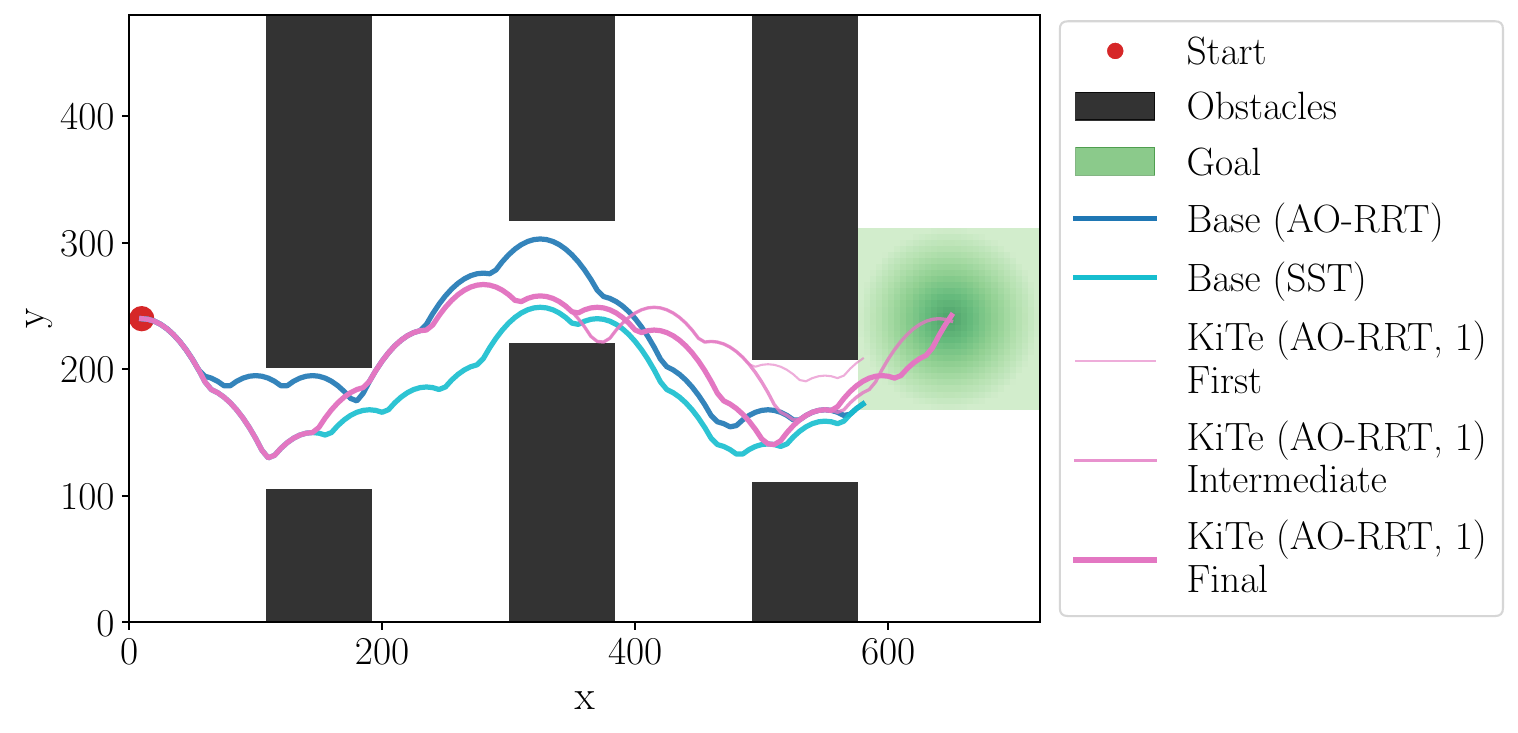}
    \vspace{-4ex}
    \caption{Representative planning process in the Flappy Bird task. \method first finds a feasible trajectory (\method First), then improves the terminal state toward the center (\method Intermediate), and eventually achieves both low running and terminal costs (\method Final). In contrast, other methods have no incentive to improve further once the goal region is reached.}
    \label{fig:flappy_demo}
\vspace{-2ex}
\end{figure}

\autoref{fig:flappy_demo} shows a representative planning process.
At early iterations, \method finds a feasible path that merely enters the goal region.
Given more time, it identifies alternative trajectories that terminate closer to the center of the gap.
Eventually, it finds trajectories that jointly achieve low running cost and low terminal cost.
In contrast, when only running cost is optimized, extending farther within the set is usually undesired because any additional motion increases the accumulated path cost.
The terminal term changes this behavior by making improvement inside the goal region explicitly beneficial.

One alternative to achieve goal desirability is to shrink the goal region towards the desired state.
However, doing so makes the feasible planning problem harder under a fixed time budget.
In contrast, the terminal-cost formulation preserves the original feasible region and allows the planner to progressively improve terminal quality after feasibility has already been achieved.

Overall, this experiment shows that incorporating terminal cost allows the planner to preserve the original feasible region and progressively improve the terminal state to satisfy the specified goal desirability.
%

\subsection{Car Parking Task}
\label{sec:experiment-car}

We next consider a car parking task in $\SETwo$ with nonholonomic dynamics.
We model the car by the simplified kinematic bicycle model~\cite{car-dynamics} with state
$x = [p_x,\ p_y,\ \theta]^\top$ and control
$u = [u_v,\ u_\phi]^\top$:
\begin{equation}
\label{eq:car_dynamics}
f(x,u)=
\begin{bmatrix}
\dot p_x\\
\dot p_y\\
\dot \theta
\end{bmatrix}
=
\begin{bmatrix}
u_v \cos\theta \\
u_v \sin\theta \\
\dfrac{u_v}{L_{\text{car}}}\tan u_\phi
\end{bmatrix},
\end{equation}
where $(p_x,p_y)$ is the position, $\theta$ is the heading angle,
$u_v$ is the commanded longitudinal velocity, $u_\phi$ is the steering angle, and $L_{\text{car}}$ is the wheelbase of the vehicle.
To obtain belief propagation as in~\autoref{eq:linear-belief-update}, we linearize the dynamics around state $x_i$ to compute the state-transition Jacobian $A_i$:
\begin{equation}
\begin{aligned}
x_{i+1} &= F(x_i, u_i) = x_i + \int_0^{\tau_i} f(x(t), u_i)\, dt, \quad x(0)=x_i,
\\
P_{i+1} &= A_i P_i A_i^\top + Q_i, \quad A_i = \frac{\partial F(x_i, u_i)}{\partial x_i},
\end{aligned}
\end{equation}
where $\tau_i$ is the duration and $Q_i$ is the process uncertainty.
%
%

Uncertainty in this task comes from the mismatch between the bicycle model and the MuJoCo execution dynamics, together with actuation error.
We model this uncertainty by a control-dependent diagonal covariance:
\begin{equation}
\begin{aligned}
\label{eq:car_process_noise}
Q_i
&=
|u_v| \tau_i \, \mathrm{Diag}(\alpha_x,\alpha_y,\alpha_\theta)
\\
&+
|u_v \tan u_\phi| \tau_i \, \mathrm{Diag}(\beta_x,\beta_y,\beta_\theta),
\end{aligned}
\end{equation}
which captures uncertainty induced by translational and turning motion.
%
The coefficients
$\alpha_x,\alpha_y,\alpha_\theta,\beta_x,\beta_y,\beta_\theta$
are estimated by fitting the predicted variance to the empirical transition variance observed in simulation using non-negative least squares.

The objective of this task is to park in one of the feasible goal regions, $g_1$ or $g_2$, while satisfying the forward-facing orientation and avoiding collisions, as illustrated in \autoref{fig:car_demo}.
The front region $g_2$ is treated as more desirable, assuming it is easier to depart later.
We consider both regular state-space planning, which ignores uncertainty, and belief-space planning, which propagates uncertainty explicitly.
The running cost is path length, measured by the $\SETwo$ distance in regular state space (\autoref{eq:lie-l2-approx}) and by the approximated Wasserstein distance in belief space (\autoref{eq:lie-w2-approx}).
For \method, the terminal cost is used both to prefer the desirable parking region and to improve goal-reaching probability.
Specifically, the terminal cost is defined with respect to $g_2$.
We use geometric distance to $g_2$ as the regular-state surrogate and Wasserstein distance to $g_2$ as the belief-space terminal cost.
Accordingly, we report \method (AO-RRT, L2, $w_g$) and \method (AO-RRT, W2, $w_g$), where L2 represents the regular-state metric, W2 denotes the Wasserstein metric, and $w_g$ is the terminal weight.

In this task, we additionally compare against the state-of-the-art belief-space method Gaussian Belief Tree (GBT)~\cite{belief_tree}, which uses AO planners to plan in belief space with Wasserstein distance and chance-constraint formulation.
We denote it as GBT (SST, W2), and GBT (AO-RRT, W2).
All belief-space methods use a chance-constrained threshold $p_\text{free}=0.95$.

The task environment is fixed.
We generate 20 random initial poses, repeat planning 5 times for each problem with 30-second planning budget, and then execute the plans in MuJoCo.
We report averaged planning costs and realized task success rate.
To fairly compare regular state-space and belief-space planners, we post-process each returned plan by propagating the belief dynamics along its action sequence, and then measure and report its cost using the Wasserstein distance.
This yields a unified measure of quality across all methods.

\begin{figure}[!t]
    \centering
    \includegraphics[width=\linewidth]{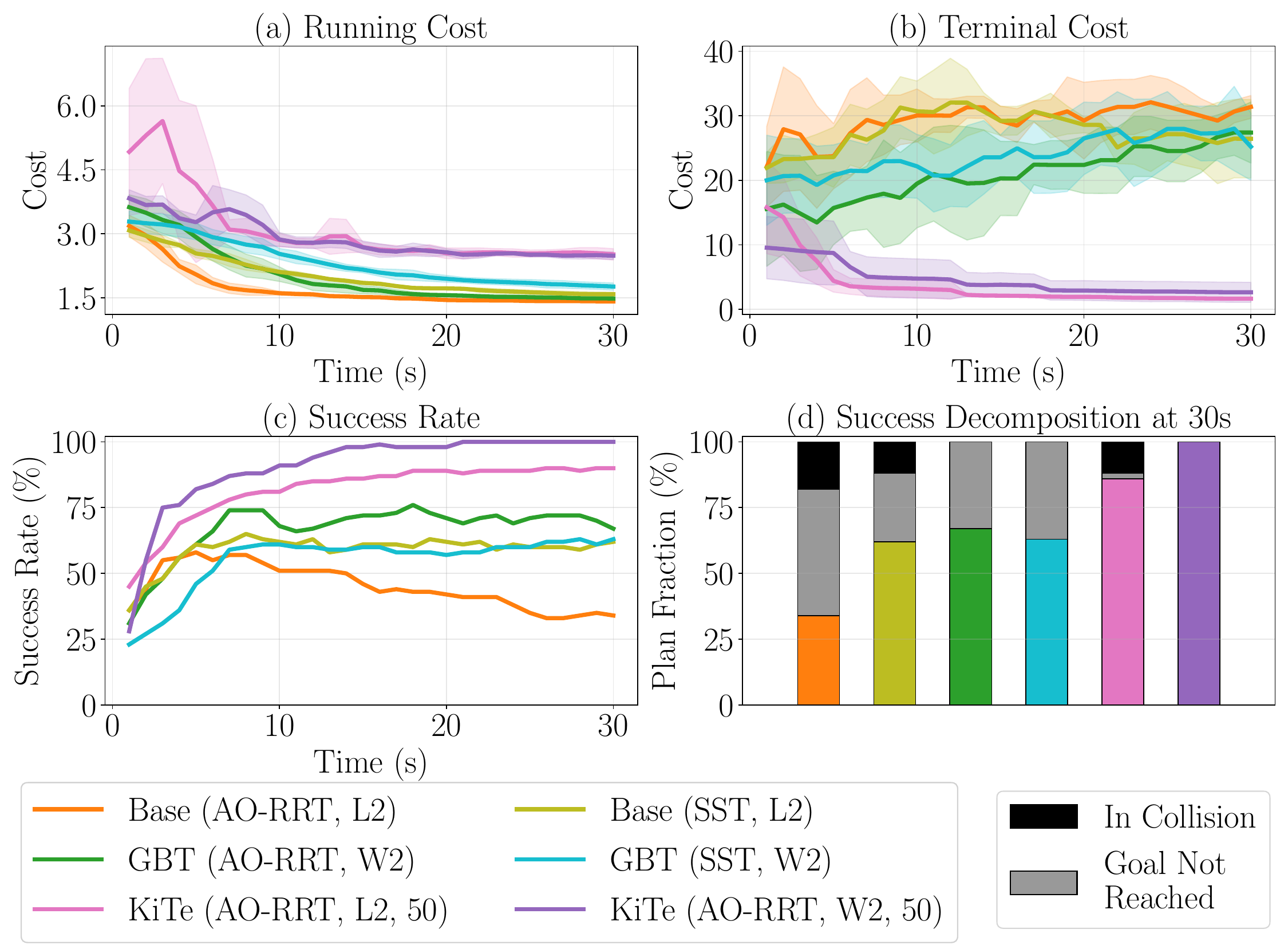}
    \caption{Results for the Car Parking task. 
    (a) Running cost and (b) terminal cost show that \method trades a modest increase in path length for a more favorable terminal state in the desired parking region. 
    (c) Execution success rates and (d) success breakdown show that belief-space planning enhances collision avoidance, while the terminal cost improves goal-reaching performance.}
    \label{fig:car_result}
\vspace{-2ex}
\end{figure}
\begin{figure*}[t]
    \centering
    \includegraphics[width=\linewidth]{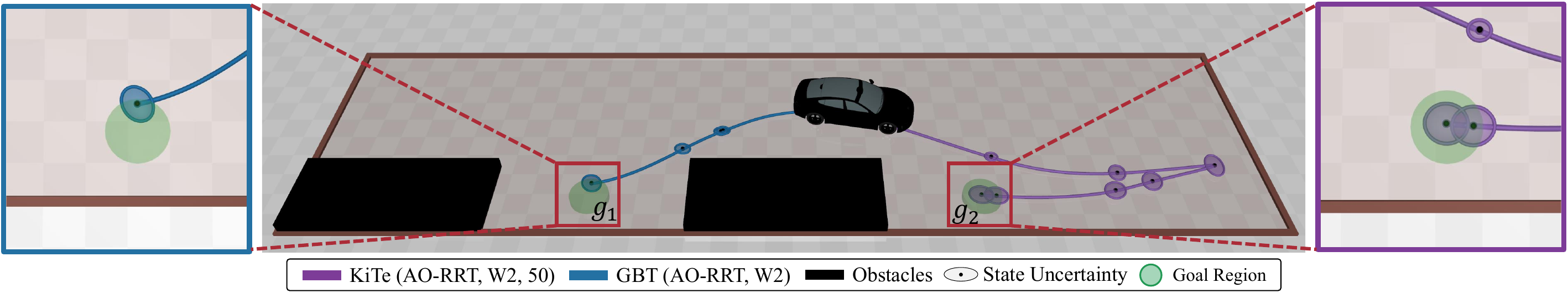}
    \caption{Representative plans in the Car Parking task. GBT tends to choose the nearest feasible parking region and often stops refining once the goal constraint is satisfied. In contrast, \method selects the more desirable goal region and continues improving the terminal belief after feasibility is achieved.}
    \label{fig:car_demo}
\vspace{-3ex}
\end{figure*}

As shown in \autoref{fig:car_result}~(a)~and~(b), AO-RRT-based methods without the terminal cost generally achieve a lower running cost.
Once the terminal cost is added, minimizing running cost is no longer the only objective.
Instead, \method trades part of the path-length objective for a more favorable terminal state with lower terminal cost.
Since the terminal cost is defined with respect to the front parking region, \method strongly prefers $g_2$.
After 30 seconds of planning, \method selects the desired goal region $g_2$ in 99.5\% of its plans, whereas the other methods do so in 68.5\% of the plans.
For methods without the terminal cost, the terminal cost tends to increase over planning time, because they increasingly favor a geometrically closer goal region, which is not necessarily $g_2$.

The task success rate is shown in \autoref{fig:car_result}~(c). Success is defined as the planner returning a plan whose execution reaches any goal regions without collision.
Among all methods, \method achieves the highest realized success rates.
The breakdown in \autoref{fig:car_result}~(d) suggests two complementary effects.
First, planning in belief space improves collision avoidance by enabling chance constraints.
Second, the terminal cost term improves the goal-reaching probability, which is consistent with the theory in \autoref{sec:aorrt-terminal}.
Interestingly, even the L2 surrogate as terminal cost improves success in practice, although it is not a valid theoretical lower bound.

Representative plans are shown in \autoref{fig:car_demo}.
GBT tends to favor the closest feasible parking region because that minimizes the running cost, and it usually stops refining once the terminal belief satisfies the goal-reaching threshold.
In contrast, \method chooses the more desirable goal region and continues improving the terminal belief after finding feasible plans.

Alternatively, one might simply select a preferred goal region and enforce a stricter goal-reaching threshold to achieve goal desirability and high success rate. 
However, it directly makes the feasible planning problem more difficult under the same planning budget.
In contrast, \method preserves the original feasible planning problem and allows the planner to progressively improve goal-reaching probability.

\begin{figure}[!t]
    \centering
    \includegraphics[width=\linewidth]{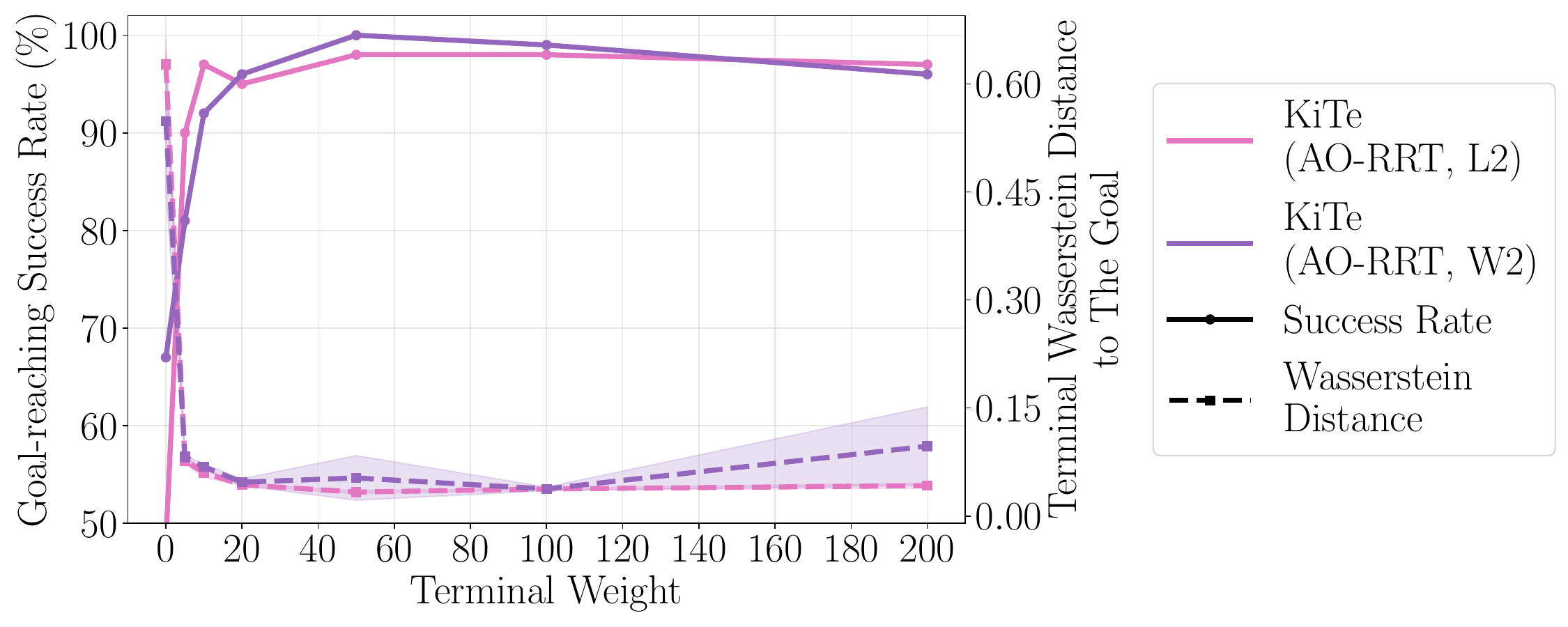}
    \caption{Effect of the terminal weight $w_g$ in the Car Parking task. Intermediate weights provide the best trade-off between empirical goal-reaching success and search efficiency.}
    \label{fig:car_terminal}
\vspace{-2ex}
\end{figure}

We also study the effect of the terminal weight $w_g$.
Specifically, we test
$w_g \in \{0, 5, 10, 20, 50, 100, 200\}$,
where $w_g = 0$ corresponds to planning without a terminal term.
For each value, we plan for 30 seconds and record the terminal Wasserstein distance to the goal, the corresponding theoretical lower bound, and the realized goal-reaching success rate after execution.

As shown in \autoref{fig:car_terminal}, increasing $w_g$ generally reduces the terminal cost and improves the empirical success rate, and the improvement gradually converges.
Theoretically, when $w_g$ becomes large, the cost bound in the state-cost space becomes dominated by the terminal term, which can weaken the pruning effect and reduce the search efficiency.
In practice, the best trade-off is achieved by setting the terminal cost weight in an intermediate range so that the terminal cost is on a scale comparable to the running cost.
Additionally, using the geometric distance to the goal (L2) as the terminal cost yields similar performance, even though it is a simplified surrogate.

This experiment shows that terminal cost plays two important roles in belief space.
It enables the planner to favor a desirable goal region.
Also, it improves the quality of the terminal belief, thereby increasing the goal-reaching success rate.

\subsection{Planar Pushing Task}
\label{sec:experiment-pushing}

Finally, we consider a contact-rich planar pushing task~\cite{activepusher}, in which both the action dynamics and the associated uncertainty are difficult to derive analytically.
We evaluate this task in both MuJoCo simulation and in the real world using objects from the YCB dataset~\cite{ycb}.
In simulation, the robot pushes a Mustard Bottle and a lying Chef Can.
In real-world experiments, we use a Cracker Box and a toy Trash Truck.
The Chef Can and the Trash Truck are intentionally challenging because pushes from certain directions can induce rolling, which leads to complex dynamics and strongly action-dependent uncertainty.

\begin{figure}[!t]
    \centering
    \includegraphics[width=\linewidth]{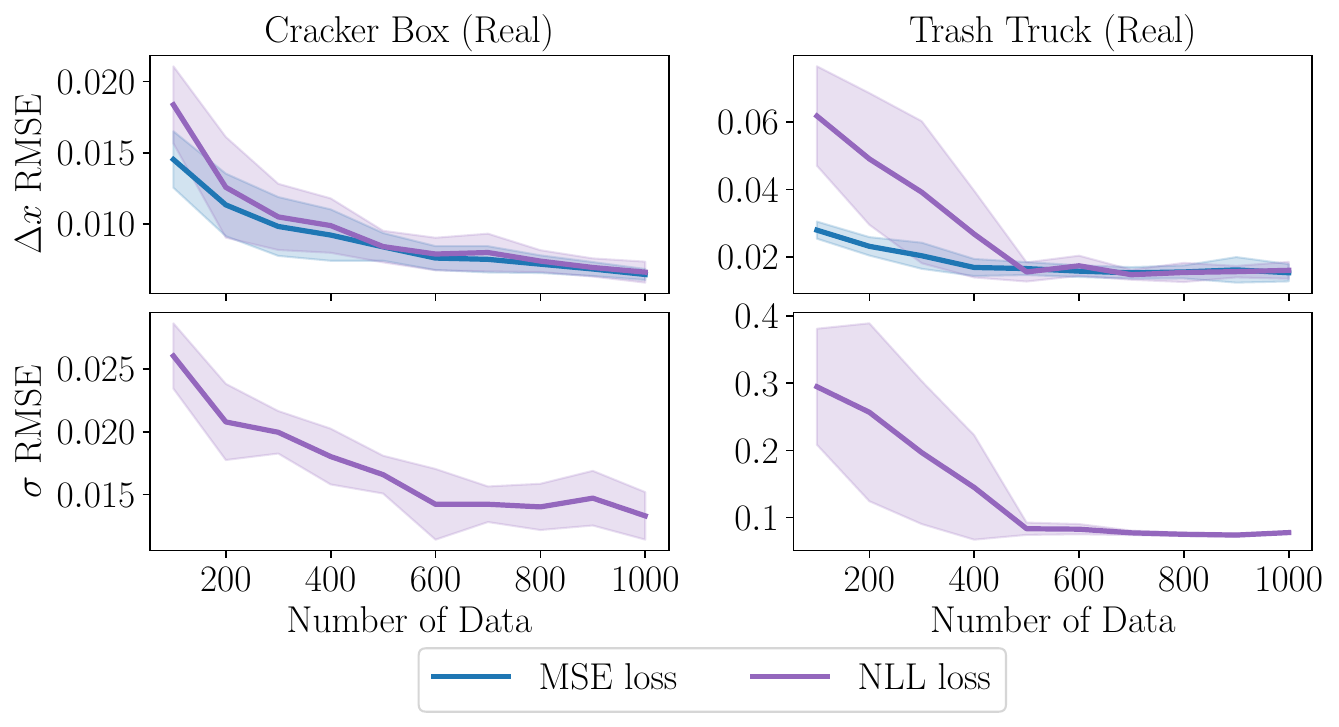}
    \caption{
    Real-world model learning results for the Cracker Box (left) and Trash Truck (right). NLL initially has larger prediction errors but improves uncertainty estimates and recovers prediction accuracy with more data.
    }
    \label{fig:push_learn}
\vspace{-2ex}
\end{figure}
\begin{table*}[!ht]
\centering
\caption{Results of Pushing Experiments}
\label{tab:result}
\begin{tabular}{c|ccc|ccc|ccc|ccc}
\hline
 &
  \multicolumn{3}{c|}{Mustard Bottle (Sim.)} &
  \multicolumn{3}{c|}{Chef Can (Sim.)} &
  \multicolumn{3}{c|}{Cracker Box (Real)} &
  \multicolumn{3}{c}{Trash Truck (Real)} \\ \hline
 &
  \multicolumn{1}{c|}{$\;\,$SR$\;\,$} &
  \multicolumn{1}{c|}{$\;\cost_w\;$} &
  $\cost_\psi$ &
  \multicolumn{1}{c|}{$\;\,$SR$\;\,$} &
  \multicolumn{1}{c|}{$\cost_w$} &
  $\cost_\psi$ &
  \multicolumn{1}{c|}{$\;\,$SR$\;\,$} &
  \multicolumn{1}{c|}{$\cost_w$} &
  $\cost_\psi$ &
  \multicolumn{1}{c|}{$\;\,$SR$\;\,$} &
  \multicolumn{1}{c|}{$\cost_w$} &
  $\cost_\psi$ \\ \hline
Base (SST, L2) &
  \multicolumn{1}{c|}{32} &
  \multicolumn{1}{c|}{0.50} &
  1.46 &
  \multicolumn{1}{c|}{39} &
  \multicolumn{1}{c|}{$\textbf{0.52}$} &
  1.25 &
  \multicolumn{1}{c|}{55} &
  \multicolumn{1}{c|}{0.46} &
  0.93 &
  \multicolumn{1}{c|}{30} &
  \multicolumn{1}{c|}{0.52} &
  1.41 \\ \hline
Base (AO-RRT, L2) &
  \multicolumn{1}{c|}{19} &
  \multicolumn{1}{c|}{$\textbf{0.47}$} &
  1.44 &
  \multicolumn{1}{c|}{36} &
  \multicolumn{1}{c|}{$\textbf{0.52}$} &
  1.27 &
  \multicolumn{1}{c|}{55} &
  \multicolumn{1}{c|}{$\textbf{0.45}$} &
  1.05 &
  \multicolumn{1}{c|}{30} &
  \multicolumn{1}{c|}{$\textbf{0.50}$} &
  1.43 \\ \hline
GBT (SST, W2) &
  \multicolumn{1}{c|}{38} &
  \multicolumn{1}{c|}{0.55} &
  0.98 &
  \multicolumn{1}{c|}{46} &
  \multicolumn{1}{c|}{0.53} &
  1.04 &
  \multicolumn{1}{c|}{60} &
  \multicolumn{1}{c|}{0.47} &
  0.93 &
  \multicolumn{1}{c|}{45} &
  \multicolumn{1}{c|}{0.55} &
  1.01 \\ \hline
GBT (AO-RRT, W2) &
  \multicolumn{1}{c|}{32} &
  \multicolumn{1}{c|}{0.51} &
  1.06 &
  \multicolumn{1}{c|}{35} &
  \multicolumn{1}{c|}{0.53} &
  1.07 &
  \multicolumn{1}{c|}{65} &
  \multicolumn{1}{c|}{0.47} &
  1.01 &
  \multicolumn{1}{c|}{45} &
  \multicolumn{1}{c|}{0.54} &
  1.05 \\ \hline
AP (SST, L2+MU) &
  \multicolumn{1}{c|}{31} &
  \multicolumn{1}{c|}{0.52} &
  1.15 &
  \multicolumn{1}{c|}{46} &
  \multicolumn{1}{c|}{0.54} &
  1.16 &
  \multicolumn{1}{c|}{60} &
  \multicolumn{1}{c|}{0.46} &
  0.92 &
  \multicolumn{1}{c|}{40} &
  \multicolumn{1}{c|}{0.54} &
  1.30 \\ \hline
AP (AO-RRT, L2+MU) &
  \multicolumn{1}{c|}{28} &
  \multicolumn{1}{c|}{0.48} &
  1.19 &
  \multicolumn{1}{c|}{36} &
  \multicolumn{1}{c|}{0.54} &
  1.20 &
  \multicolumn{1}{c|}{60} &
  \multicolumn{1}{c|}{0.46} &
  0.99 &
  \multicolumn{1}{c|}{30} &
  \multicolumn{1}{c|}{0.51} &
  1.27 \\ \hline
$\method$ (AO-RRT, L2, 20) &
  \multicolumn{1}{c|}{57} &
  \multicolumn{1}{c|}{0.57} &
  0.78 &
  \multicolumn{1}{c|}{73} &
  \multicolumn{1}{c|}{0.61} &
  0.72 &
  \multicolumn{1}{c|}{90} &
  \multicolumn{1}{c|}{0.55} &
  $\textbf{0.41}$ &
  \multicolumn{1}{c|}{60} &
  \multicolumn{1}{c|}{0.62} &
  1.01 \\ \hline
$\method$ (AO-RRT, W2, 20) &
  \multicolumn{1}{c|}{$\textbf{86}$} &
  \multicolumn{1}{c|}{0.61} &
  $\textbf{0.48}$ &
  \multicolumn{1}{c|}{$\textbf{88}$} &
  \multicolumn{1}{c|}{0.63} &
  $\textbf{0.56}$ &
  \multicolumn{1}{c|}{$\textbf{95}$} &
  \multicolumn{1}{c|}{0.55} &
  $\textbf{0.41}$ &
  \multicolumn{1}{c|}{$\textbf{85}$} &
  \multicolumn{1}{c|}{0.64} &
  $\textbf{0.59}$ \\ \hline
\end{tabular}
\end{table*}
\begin{figure*}[!t]
\vspace{-2ex}
    \centering
    \includegraphics[width=\linewidth]{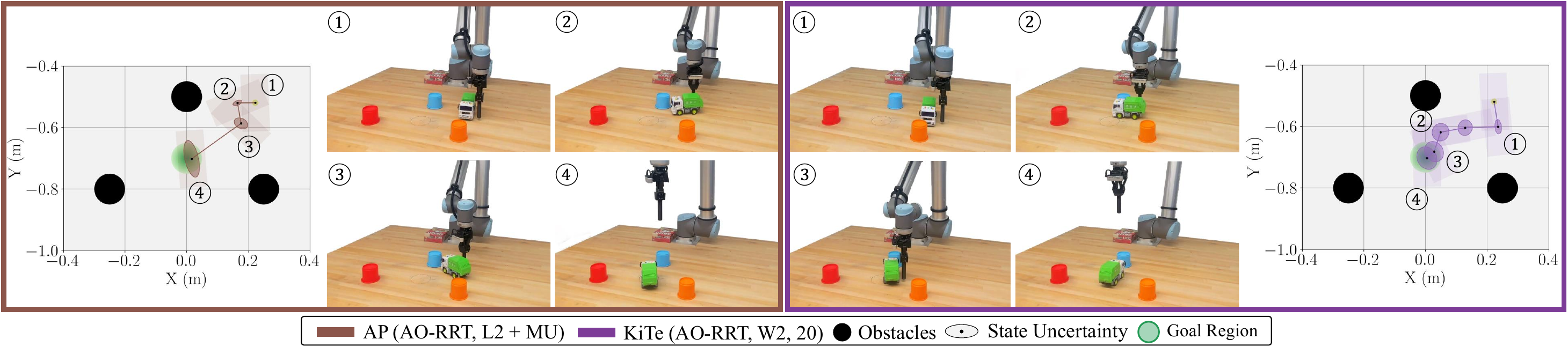}
    \caption{Representative plans for the Trash Truck pushing in the real world. AP tends to favor a geometrically shorter path that may induce larger terminal distance to the goal. In contrast, \method chooses a longer geometric route but reduces terminal uncertainty and distance to the goal.}
    \label{fig:push_demo}
\vspace{-2ex}
\end{figure*}

The object state lies in $\SETwo$ and is represented as
$x=[p_x, p_y, \theta]^\top$.
Each pushing action is represented by a 3-dimensional control $u=[u_s, u_o, u_d]^\top$, where $u_s$ denotes the pushing side, $u_o$ the lateral offset along the contacted side, and $u_d$ the pushing distance.
A global inverse kinematics resolution~\cite{expansion_grr} is used to generate feasible joint-space paths given the sequence of control actions.
Since the task evolves in $\SETwo$, we use the learned Lie-group transition and covariance models in \autoref{eq:learned-mean-lie} and \autoref{eq:learned-cov-lie}.

We adopt an object-centric isotropy assumption, where the effect of a push is invariant to the
object’s current pose. To learn the model, the robot repeatedly executes random pushes and collects up to 1000 action-outcome pairs $\{(u_i,\Delta x_i)\}$ as training data for each object.
We additionally sample 100 different actions and repeat each action 10 times.
The empirical mean and covariance of the repeated outcomes are treated as the ground-truth transition and process uncertainty for evaluation.
We train the model using the NLL loss in \autoref{eq:nll-loss}.
The network is a multi-layer perceptron with four hidden layers of sizes [32, 64, 64, 32] in the shared feature encoder and two layers of sizes [32, 3] in both the mean and the uncertainty heads.

\autoref{fig:push_learn} shows two representative learning results with real-world data, from the Cracker Box and the Trash Truck, where each training is repeated five times with different random seeds.
We evaluate transition prediction, $\Delta x$ and $\sigma$, with root mean square error (RMSE) on the repeated-action test set, and compare NLL loss with standard mean squared error (MSE) loss.
NLL typically requires more data than MSE to reach comparable transition prediction accuracy, particularly for objects with more complex dynamics, because it must jointly learn the mean and covariance.
This additional uncertainty estimation makes the learning problem harder, since inaccurate covariance estimates can distort the weighting of prediction residuals in the likelihood objective.
Once more data are available, NLL improves its covariance estimates and recovers mean-prediction accuracy.
In the planning experiments, we use the models trained with 1000 samples.

The objective of this task is to push the object into the target region, specified by both position and orientation, without colliding with the obstacles.
The task environment is fixed with three cylinder obstacles placed on the table, as shown in \autoref{fig:push_demo}.
Similar to the Car Parking task, regular-space planning uses the $\SETwo$ distance in \autoref{eq:lie-l2-approx}, while belief-space planning uses the approximated Wasserstein distance in \autoref{eq:lie-w2-approx}.
The running cost is measured by the accumulated path length and the terminal cost is the distance to the goal.
For each object, we randomly generate 20 planning problems with different initial object poses and run each planner 5 times per problem in simulation and 1 time in the real world.
The maximum planning time is 10 seconds.
Each returned plan is then executed either in MuJoCo or with the real robot.

We compare against classical baselines Base (SST, L2) and Base (AO-RRT, L2), belief-space baselines GBT (SST, W2) and GBT (AO-RRT, W2) with $p_\text{free}=0.95$, and additionally, the active planning method~\cite{activepusher}.
The latter uses model uncertainty (MU) to bias action sampling with an AO planner, rather than explicitly propagating learned action uncertainty in belief space.
We denote it as AP (SST, L2 + MU) and AP (AO-RRT, L2 + MU).
For the proposed method, we report both \method (AO-RRT, L2, $w_g$) and \method (AO-RRT, W2, $w_g$) with a terminal weight $w_g=20$.

The success rate (SR) of each method, together with its average running cost $\cost_w$ and terminal cost $\cost_\psi$, is summarized in \autoref{tab:result}.
Similar to the Car Parking task, we unify cost measuring with Wasserstein distance by post-processing every plan for comparison.

\method achieves the best terminal quality and the highest execution success rate across the tested objects.
Meanwhile, similar to previous tasks, \method often sacrifices some running-cost optimality, accepting a longer path in exchange for lower terminal Wasserstein distance and higher execution success.
GBT generally outperforms other baselines since it can also explicitly account for uncertainty and enforce safer obstacle avoidance.
However, GBT still optimizes only a running cost and, therefore, cannot continue refining the terminal belief.
By contrast, \method directly optimizes terminal quality and thus further improves goal-reaching success.
Although not reported in \autoref{tab:result}, we observe that AP performs best when training data are limited, e.g., fewer than 400 samples.
In such cases, sampling with model uncertainty helps avoid unreliable actions before an accurate model is available (\autoref{fig:push_learn}).
With sufficient data, belief-space planning becomes more effective as the learned uncertainty improves.
Representative real-world examples for the Trash Truck are shown in \autoref{fig:push_demo}.
In this example, AP tends to favor geometrically short paths that may lead to larger terminal uncertainty.
In contrast, \method (AO-RRT, W2, 20) chooses a longer geometric path but accumulates less terminal uncertainty by planning in belief space and favoring lower-uncertainty pushes.
Leveraging the terminal cost, the resulting terminal belief is also closer to the center of the goal region.
Together with the ability to enforce chance constraints on obstacle avoidance, \method (AO-RRT, W2, 20) achieves the highest success rate among all methods.

Overall, this experiment shows that both dynamics and process uncertainty can be learned directly from interaction data and integrated into planning.
When the learned uncertainty is used together with a terminal cost, the planner can favor low-uncertainty actions, improve terminal quality, and achieve higher goal-reaching success.

\section{Limitations and Future Work}
\label{sec:conclusion}

We conclude by discussing limitations of the proposed method and outlining potential future directions.
First, the current formulation considers only open-loop propagation. Incorporating observation updates and closed-loop feedback policies would more directly connect planning to execution-time robustness, but would require additional modeling and algorithmic extensions.
Second, introducing a terminal cost adds design degrees of freedom such as the terminal weight. While the ablation study in \autoref{sec:experiments} provides guidance for choosing it, a user still must select weights to reflect task preferences and the desired balance.
Third, the learned uncertainty model is diagonal, i.e., variance-only. Although this is simple and effective, diagonal covariances may miss important correlated errors, which can lead to inaccurate belief evolution.

Future work naturally follows these limitations.
The first direction is closing the loop: integrating observation updates or execution-time feedback to adapt the planned belief trajectory during execution~\cite{belief_tree, tamp_real}.
Because the framework involves multiple objectives, a second direction is to reduce manual weight selection by adopting multi-objective formulations that explicitly characterize the trade-off, e.g., via Pareto front~\cite{multiobjective_planning}.
A third direction is richer uncertainty learning. We can learn correlated covariances or hybrid residual-physics models that incorporate structured priors more deeply~\cite{activepusher}.
Finally, we plan to evaluate scalability on more complex, higher-dimensional kinodynamic systems, including tasks with longer horizons, tighter dynamic constraints, and increased state dimensionality.

\appendices
\section{Proof of AO-RRT Lemmas}
\label{app:aorrt-terminal-proof}

\subsection{Proof of \autoref{lem:select}}
\label{app:aorrt-lem-select-proof}


\begin{figure}
    \centering
    \includegraphics[height=0.15\textheight]{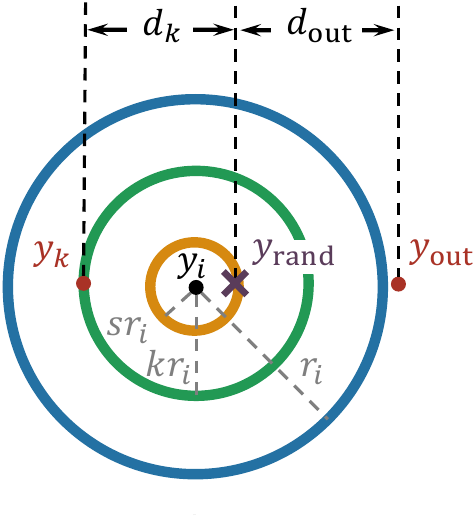}
    \caption{Illustration for proof of \autoref{lem:select}.}
    \label{fig:app_lemma1}
\vspace{-2ex}
\end{figure}

\begin{proof}
%
As shown in \autoref{fig:app_lemma1}, let $y_k \in B_{kr_i}(y_i)$ be a vertex in the tree, and let $y_{\text{out}}$ be any vertex outside the ball $B_{r_i}(y_i)$. 
Consider uniform sampling of a random point $y_{\text{rand}}$ within another ball $B_{sr_i}(y_i)$ where $s = (1 - k)/\, 2$. We have:
\begin{equation}
\begin{gathered}
    d_k = \| y_{\text{rand}} - y_k \| \le (s + k) \,r_i = (1 + k) \,r_i /\, 2, \\
    d_\text{out} = \| y_{\text{rand}} - y_{out} \| > (1 - s) \,r_i = (1 + k) \,r_i /\, 2.
\end{gathered}
\end{equation}
Hence, any point outside $B_{r_i}(y_i)$ is strictly farther from $y_{\text{rand}}$ than $y_k$ is. Therefore, no outside vertex can be selected as the nearest neighbor when the random sample lies in $B_{sr_i}(y_i)$. The only remaining case that $y_k$ is not selected is when there is other nodes in $B_{r_i}(y_i)$ that is closer than $y_k$, which is also desired. The probability of sampling inside $B_{sr_i}(y_i)$ is simply:
\begin{equation}
    \Pr \left(y_{\text{rand}} \in B_{sr_i}(y_i) \right) \ge 
    \frac{\vol{ B_{sr_i}(y_i) }}{\vol{ \Y }} > 0,
\end{equation}
and for every such sample the nearest neighbor must lie inside $B_{r_i}(y_i)$. This yields the stated probability bound.

Note that this bound is conservative. The event $y_{\text{rand}} \in B_{sr_i}(y_i)$ is a sufficient but not necessary condition for the nearest neighbor to lie inside $B_{r_i}(y_i)$. Indeed, random samples outside $B_{sr_i}(y_i)$ may also have nearest neighbor in $B_{r_i}(y_i)$. Nevertheless, this bound is sufficient for our analysis to guarantee that selecting a vertex in $B_{r_i}(y_i)$ is strictly positive.
\end{proof}

\subsection{Proof of \autoref{lem:propagate}}
\label{app:aorrt-lem-propagate-proof}

\begin{figure}[!h]
    \centering
    \includegraphics[height=0.15\textheight]{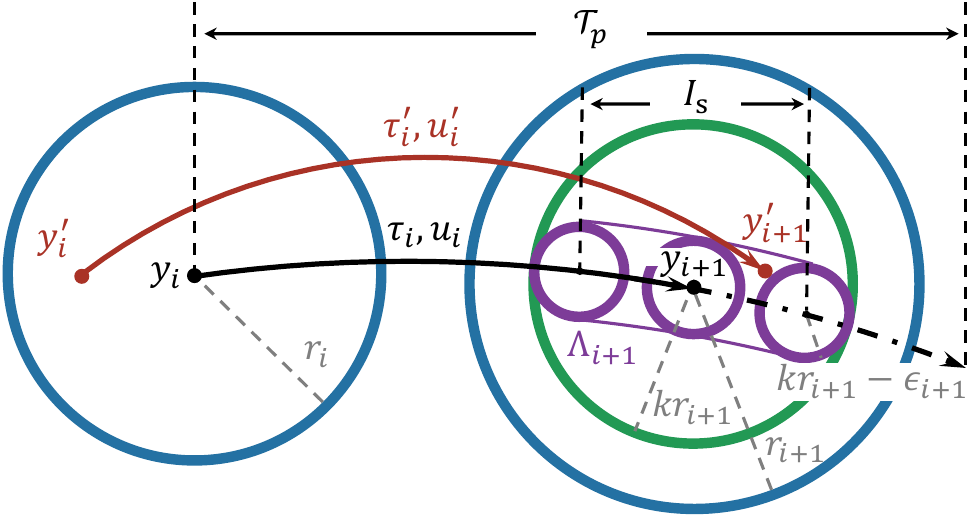}
    \caption{Illustration for proof of \autoref{lem:propagate}.}
    \label{fig:app_lemma2}
\vspace{-2ex}
\end{figure}

\begin{proof}
As illustrated in \autoref{fig:app_lemma2}, let $y(t)$ denote the trajectory segment starting from $y_i$ under constant control $u_i$, where $t \in [0,\T_p]$ is the local time along this segment with $t=0$ at $y_{i}$ and $t=\tau_i$ at $y_{i+1}$. For $t > \tau_i$, the segment is conceptually extended to $\T_p$ (black curve). Let $y'(t)$ denote the perturbed trajectory from a perturbed initial state $y_{i}' \in B_{r_{i}}(y_{i})$ under a perturbed constant control $u_i' \in B_{\Delta u_i}(u_i)$ (red curve).

Define the augmented error:
\begin{equation}
e(t) \;=\; y(t) - y'(t).
\end{equation}
Taking the derivative of $e(t)$ gives:
\begin{equation}
\begin{aligned}
\dot{e}(t)
&= \dot{y}(t) - \dot{y}'(t) \\
&= \aug(y(t), u_i) - \aug(y'(t), u_i')
\qquad \text{(\autoref{eq:aug_dynamics})}
\\
&= \aug(y(t), u_i) - \aug(y'(t), u_i) 
+ \aug(y'(t), u_i) - \aug(y'(t), u_i')
\end{aligned}
\end{equation}
By Lipschitz continuity of $\aug$ (\autoref{eq:lip_z}), for all $t$:
\begin{equation}
\begin{gathered}
\|\dot{e}(t)\|
\le K_y \|e(t)\| + K_u \|u_i - u_i'\|.
\end{gathered}
\end{equation}
Applying Gr\"{o}nwall’s inequality on $[0, \T_p]$ yields:
\begin{equation}
\label{eq:e(t)}
\begin{aligned}
\|e(t)\|
&\le
e^{K_y t} \|e(0)\|
+
\tfrac{K_u}{K_y} (e^{K_y t} - 1)\|u_i - u_i'\|
\\
&\le
e^{K_y \T_p} \, r_i
+
\tfrac{K_u}{K_y} \,(e^{K_y \T_p} - 1)\Delta u_i,
\end{aligned}
\end{equation}
where the last inequality holds because the initial state remains inside $B_{r_i}(y_0)$ and $e^{K_y t}$ is nondecreasing in $(0, \T_p]$.

Pick $\epsilon_{i+1}$ such that $0 < \epsilon_{i+1} < k r_{i+1}$, and for each $t \in [0,\T_p]$, consider the ball $B_{k r_{i+1} - \epsilon_{i+1}}(y(t))$ (the purple balls in \autoref{fig:app_lemma2}). Let $\Lambda_{i+1}$ denote the union of all such balls that satisfies $B_{k r_{i+1} - \epsilon_{i+1}}(y(t)) \subset B_{k r_{i+1}} (y_{i+1})$. This set forms a “tube” around the segment with nonzero time interval $I_s$ (the purple region in \autoref{fig:app_lemma2}). Hence, $\Lambda_{i+1}
\subset B_{k r_{i+1}}(y_{i+1})$.

To ensure that the perturbed trajectory enters $\Lambda_{i+1}$ at some time $\tau_i' \in I_s$,
it suffices to guarantee:
\begin{equation}
\|e(\tau_i')\| \le k r_{i+1} - \epsilon_{i+1}.
\end{equation}
Using the bound in \autoref{eq:e(t)}, it is enough to impose
\begin{equation}
e^{K_y \T_p} r_i
+
\tfrac{K_u}{K_y} \,(e^{K_y \T_p} - 1) \Delta u_i
\le
k r_{i+1} - \epsilon_{i+1}.
\end{equation}
Substituting $r_{i+1} = r_i (e^{K_y\T_p} + \beta)/k$ gives:
\begin{equation}
e^{K_y \T_p} r_i
+
\tfrac{K_u}{K_y} \,(e^{K_y \T_p} - 1) \Delta u_i
\le
r_i(e^{K_y\T_p} + \beta) - \epsilon_{i+1},
\end{equation}
which is equivalent to
\begin{equation}
\Delta u_i
\le
\frac{\beta r_i - \epsilon_{i+1}}
     {\frac{K_u}{K_y} \,(e^{K_y \T_p} - 1)}.
\end{equation}
Choosing $\epsilon_{i+1}$ such that $0 < \epsilon_{i+1} < \beta r_i$
ensures that the right-hand side is positive.
Consequently, any perturbed input $u_i'$ satisfying 
$\lvert u_i' - u_i \rvert < \Delta u_i$ will satisfy this inequality.
Also note that:
\begin{equation}
\epsilon_{i+1}
< \beta r_i
< r_i(e^{K_y\T_p} + \beta)
= k r_{i+1},
\end{equation}
so, the earlier requirement $\epsilon_{i+1} < k r_{i+1}$ is also
satisfied.

In AO-RRT, the duration $\tau_i'$ is sampled uniformly from $(0,\T_p]$ and the control $u_i'$ is sampled uniformly from $\U$. Hence, the probability of sampling $\tau_i' \in I_s$ and
$u_i' \in B_{\Delta u_i}(u_i)$ is:
\begin{equation}
p_\tau
=
\frac{\vol{I_s}}{\vol{(0,\T_p]}}
> 0,
\quad
p_u
=
\frac{\vol{B_{\Delta u_i}(u_i)}}{\vol{\U}}
> 0.
\end{equation}
For every such sample, the propagated state $y_{i+1}'$ lies in $B_{k r_{i+1}}(y_{i+1})$:
\begin{equation}
\Pr \left( y_{i+1}' \in B_{k r_{i+1}}(y_{i+1}) \right)
\ge
p_\tau \cdot p_u
> 0.
\end{equation}

Finally, we remark that this lower bound is conservative. Different choices of $\epsilon_{i+1}$ lead to different $I_s$ and $\Delta u_i$, and the true probability corresponds to integrating all such admissible choices.
Nonetheless, the bound is sufficient to show that the probability of reaching the next ball is strictly positive.
\end{proof}

\subsection{Proof of \autoref{lem:terminal}}
\label{app:aorrt-lem-terminal-proof}

\begin{figure}[!h]
    \centering
    \includegraphics[height=0.15\textheight]{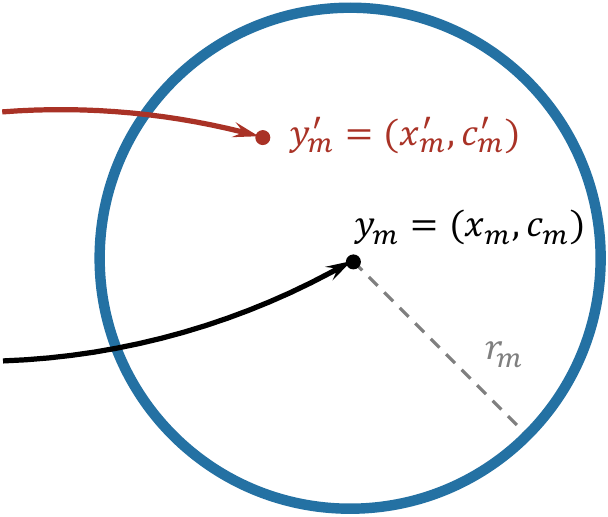}
    \caption{Illustration for proof of \autoref{lem:terminal}.}
    \label{fig:app_lemma3}
\vspace{-2ex}
\end{figure}

\begin{proof}
Recall the distance in $\Y$ is defined by the L2 norm (\autoref{eq:dist_aug}). Thus, As shown in \autoref{fig:app_lemma3}, for any $y_m' \in B_{r_m}(y_m)$:
\begin{equation}
\begin{gathered}
| c_m' - c_m | \le r_m, \quad
\| x_m' - x_m \| \le r_m.
\end{gathered}
\end{equation}
By \autoref{as:lip-terminal}, the terminal cost $\terminal(\cdot)$ is Lipschitz continuous with constant $K_\terminal$.
Therefore, the total cost satisfies
\begin{equation}
\begin{aligned}
c_T' 
&= c_m' + \terminal_m' 
\\
&\le (c_m + | c_m' - c_m |) + (\terminal_m + | \terminal_m' - \terminal_m |) 
\\
&\le (c_m + | c_m' - c_m |) + (\terminal_m + K_\terminal \| x_m' - x_m \|)
\quad \text{(\autoref{eq:lip_terminal})}
\\
&\le (c_m + r_m) + (\terminal_m + K_\terminal r_m) 
\\
&= c_T + (1 + K_\terminal) r_m 
\\
&\le c_T + (1 + K_\terminal) \tfrac{\varepsilon \, c_T}{1 + K_\terminal}, 
\quad \text{(definition in~\autoref{eq:lem3_r})}
\\
&= (1 + \varepsilon)\, c_T,
\end{aligned}
\end{equation}
This concludes the proof.
\end{proof}

\section{Proof of Belief-space Dynamics and Cost Functions}
\label{app:aorrt-belief-proof}

\subsection{Proof of \autoref{pr:belief-lip-dyn}}
\label{app:aorrt-belief-lip-dyn-proof}

\begin{proof}
Let $\gamma^* \in \Gamma(b,b')$ be an optimal coupling, and let $(X,X') \sim \gamma^*$. Then $X \sim b$, $X' \sim b'$, and $W_2^2(b,b') = \mathbb{E}[D_\X(X,X')^2]$ (\autoref{eq:wasserstein}).
For each realization $(X,X') = (x,x')$, let $\gamma_\eta^*(x,x') \in \Gamma\bigl(\mathcal{N}\big(0,Q(x, u)\big), \mathcal{N}\big(0,Q(x', u)\big)\bigr)$ be an optimal coupling for the noise, and let $(H, H') \mid (X=x, X'=x') \sim \gamma_\eta^*(x,x')$.
Then the conditional marginals are
$H \sim \mathcal{N}\big(0, Q(x,u)\big)$ and $H' \sim \mathcal{N}\big(0, Q(x',u)\big)$.

Define $X^+ = F(X,u)\diamond H$ and $X_x'^+ = F(X',u)\diamond H'$. Then $X^+ \sim F_b(b,u)$ and $X_x'^+ \sim F_b(b',u)$.
Although $(X,X')$ is drawn from an optimal coupling of $b$ and $b'$, the propagated pair $(X^+,X_x'^+)$ may not remain an optimal coupling of $F_b(b,u)$ and $F_b(b',u)$. From the definition of Wasserstein distance (\autoref{eq:wasserstein}):
\begin{equation}
\label{eq:prop1-1}
\begin{aligned}
&\quad\,\, W_2^2(F_b(b,u), F_b(b',u))
\\
&\le 
\mathbb{E}[D_\X(X^+, X_x'^+)^2]
\\
&= \mathbb{E}[D_\X\big(F(X,u)\diamond H,\; F(X',u)\diamond H'\big)^2].
\end{aligned}
\end{equation}
With the triangle inequality of $D_\X(\cdot,\cdot)$:
\begin{equation}
\label{eq:prop1-2}
\begin{aligned}
&\quad\,\, W_2^2(F_b(b,u), F_b(b',u))
\\
&\le \mathbb{E}\!\bigl[ \bigl(D_\X\big(F(X,u)\diamond H,\; F(X',u)\diamond H\big)
\\
&\quad\,\, + D_\X\big(F(X',u)\diamond H,\; F(X',u)\diamond H'\big)\bigr)^2 \bigr]
\\
&\le \mathbb{E}\!\bigl[\bigl(K_x^{F\eta}D_\X(X,X') 
\\
&\quad\,\, + K_\eta^{F\eta}D_\X(H,H')\bigr)^2\bigr]
\quad \text{(\autoref{as:lip-injection})}
\\
&\le 2(K_x^{F\eta})^2 \mathbb{E}[D_\X(X,X')^2]
+ 2(K_\eta^{F\eta})^2 \mathbb{E}[D_\X(H,H')^2].
\end{aligned}
\end{equation}
Now, conditioning on $(X,X')$, the coupling $(H,H')$ is optimal for $\mathcal{N}\big(0,Q(X,u)\big)$ and $\mathcal{N}\big(0, Q(X',u)\big)$, thus
\begin{equation}
\begin{aligned}
\mathbb{E}[D_\X(H,H')^2]
&= \mathbb{E}\!\left[\mathbb{E}[D_\X(H,H')^2 \mid X,X']\right]
\\
&= \mathbb{E}\!\left[W_2^2\big(\mathcal{N}\big(0, Q(X,u)\big), \mathcal{N}\big(0, Q(X',u)\big)\big)\right]
\\
&\le (K_x^Q)^2 \mathbb{E}[D_\X(X,X')^2].
\quad \text{(\autoref{as:lip-noise})}
\end{aligned}
\end{equation}
Substituting this into the previous bound gives
\begin{equation}
\begin{aligned}
&\quad\,\, W_2^2(F_b(b,u), F_b(b',u))
\\
&\le \left( 2(K_x^{F\eta})^2 + 2(K_\eta^{F\eta}K_x^Q)^2 \right) \mathbb{E}[D_\X(X,X')^2]
\\
&= \left( 2(K_x^{F\eta})^2 + 2(K_\eta^{F\eta}K_x^Q)^2 \right) W_2^2(b,b').
\end{aligned}
\end{equation}
Taking square roots yields the first Lipschitz constant: 
$K_x^{F_b} = \left( 2(K_x^{F\eta})^2 + 2(K_\eta^{F\eta}K_x^Q)^2 \right)^{\frac12}$.

For the control argument, let $X \sim b$. For each realization $X=x$, let
$\tilde{\gamma}_\eta^*(x) \in \Gamma\bigl(\mathcal{N}\big(0,Q(x,u)\big), \mathcal{N}\big(0,Q(x,u')\big)\bigr)$
be an optimal coupling for the noise, and let $(H,H') \mid (X=x) \sim \tilde{\gamma}_\eta^*(x)$.
Then, conditioned on $X=x$, the marginals satisfy
$H \sim \mathcal{N}\big(0, Q(x,u)\big)$ and $H' \sim \mathcal{N}\big(0, Q(x,u')\big)$.

Define $X^+ = F(X,u)\diamond H$ and $X_u'^+ = F(X,u')\diamond H'$. Then $X^+ \sim F_b(b,u)$ and $X_u'^+ \sim F_b(b,u')$, and $(X^+,X_u'^+)$ is a valid coupling of $F_b(b,u)$ and $F_b(b,u')$. Similarly:
\begin{equation}
\begin{aligned}
&\quad\,\, W_2^2(F_b(b,u), F_b(b,u'))
\\
&\le \mathbb{E}[D_\X(X^+, X_u'^+)^2]
\quad \text{(definition of \autoref{eq:wasserstein})}
\\
&= \mathbb{E}[D_\X\big(F(X,u)\diamond H,\; F(X,u')\diamond H'\big)^2]
\\
&\le \mathbb{E}\!\bigl[\bigl(D_\X\big(F(X,u)\diamond H,\; F(X,u')\diamond H\big)
\;\; \text{(triangle inequality)}
\\
&\quad\,\, + D_\X\big(F(X,u')\diamond H,\; F(X,u')\diamond H'\big)\bigr)^2\bigr]
\\
&\le \mathbb{E}\!\left[\bigl(K_u^{F\eta}\|u-u'\| + K_\eta^{F\eta}D_\X(H,H')\bigr)^2\right]
\;\; \text{(\autoref{as:lip-injection})}
\\
&\le 2(K_u^{F\eta})^2\|u-u'\|^2
+ 2(K_\eta^{F\eta})^2 \mathbb{E}[D_\X(H,H')^2].
\end{aligned}
\end{equation}
Again, conditioning on $X$, the coupling $(H,H')$ is optimal for $\mathcal{N}\big(0, Q(X,u)\big)$ and $\mathcal{N}\big(0, Q(X,u')\big)$, so
\begin{equation}
\begin{aligned}
\mathbb{E}[D_\X(H,H')^2]
&= \mathbb{E}\!\left[\mathbb{E}[D_\X(H,H')^2 \mid X]\right]
\\
&= \mathbb{E}\!\left[W_2^2\bigl(\mathcal{N}\big(0, Q(X,u)\big), \mathcal{N}\big(0, Q(X,u')\big)\bigr)\right]
\\
&\le (K_u^Q)^2 \|u-u'\|^2.
\quad \text{(\autoref{as:lip-noise})}
\end{aligned}
\end{equation}
Therefore,
\begin{equation}
\begin{aligned}
&\quad\,\, W_2^2(F_b(b,u), F_b(b,u'))
\\
&\le \left( 2(K_u^{F\eta})^2 + 2(K_\eta^{F\eta}K_u^Q)^2 \right)\|u-u'\|^2.
\end{aligned}
\end{equation}
Taking square roots yields the second Lipschitz constant with
$K_u^{F_b} = \left( 2(K_u^{F\eta})^2 + 2(K_\eta^{F\eta}K_u^Q)^2 \right)^{\frac12}$.

Thus, the belief dynamics $F_b$ is Lipschitz continuous in both arguments with respect to the Wasserstein distance.
\end{proof}

\subsection{Proof of \autoref{pr:belief-lip-cost}}
\label{app:aorrt-belief-lip-cost-proof}

\begin{proof}
Since the Wasserstein distance is a valid metric in \B~\cite{belief-distance}, it satisfies the reverse triangle inequality:
\begin{equation}
\label{eq:reverse_tri_ineq}
\begin{gathered}
|W_2(b, b'') - W_2(b', b'')| \le W_2(b, b'),
\end{gathered}
\end{equation}
where $b, b', b'' \in \B$. 
If the running cost is the Wasserstein distance between the current belief and the next belief, then:
\begin{equation}
\begin{aligned}
&\quad\,\,| \lengthb \big(b, F_b(b, u)\big) - \lengthb \big(b', F_b(b', u)\big) |
\\
&= | W_2\big(b, F_b(b, u)\big) - W_2\big(b', F_b(b', u)\big) |
\\
&\le | W_2\big(b, F_b(b, u)\big) - W_2\big(b', F_b(b, u)\big) |
\\
&\quad+ | W_2\big(b', F_b(b, u)\big) - W_2\big(b', F_b(b', u)\big) |
\\
&\le W_2(b, b') + W_2\big(F_b(b, u), F_b(b',u)\big) 
\quad \text{(\autoref{eq:reverse_tri_ineq})}
\\
&\le (1 + K_x^{F_b}) W_2(b, b').
\quad \text{(\autoref{pr:belief-lip-dyn})}
\end{aligned}
\end{equation}
Also for the control argument,
\begin{equation}
\begin{aligned}
&\quad\,\, | \lengthb(b, F_b(b, u)) - \lengthb(b, F_b(b, u')) |
\\
&= | W_2\big(b, F_b(b, u)\big) - W_2\big(b, F_b(b, u')\big) |
\\
&\le W_2\big(F_b(b, u), F_b(b, u')\big)
\quad \text{(\autoref{eq:reverse_tri_ineq})}
\\
&\le K_u^{F_b} \|u - u'\|.
\quad \text{(\autoref{pr:belief-lip-dyn})}
\end{aligned}
\end{equation}
Given a constant belief $b_c$, by \autoref{eq:reverse_tri_ineq}, the terminal cost with Wasserstein distance satisfies:
\begin{equation}
| \terminalb(b) - \terminalb(b') |
= | W_2(b, b_c) - W_2(b', b_c) |
\le W_2(b, b').
\end{equation}
In summary, both cost functions are Lipschitz continuous with constants $K_x^\lengthb=1 + K_x^{F_b}$,  $K_u^\lengthb=K_u^{F_b}$ and $K_x^\terminalb = 1$.
\end{proof}

\bibliographystyle{IEEEtran_ShortURL}
\bibliography{references}

\end{document}